\newcommand{\bw}{\text{\boldmath{$w$}}}
\newcommand{\bz}{\boldsymbol{z}}
\newcommand{\bx}{\boldsymbol{x}}
\newcommand{\bdelta}{\boldsymbol{\delta}}
\newcommand{\bu}{\boldsymbol{u}}
\newcommand{\bbP}{\mathbb{P}}
\newcommand{\bbR}{\mathbb{R}}
\newcommand{\bv}{\boldsymbol{v}}
\newcommand{\sW}{\mathsf{W}}
\newtheorem{assumption}{\textbf{Assumption}}\newtheorem{definition}{\textbf{Definition}}\newtheorem{corollary}{\textbf{Corollary}}\newtheorem{lemma}{\textbf{Lemma}}\newtheorem{theorem}{\textbf{Theorem}}\newtheorem{proposition}{\textbf{Proposition}}\newtheorem{remark}{\textbf{Remark}}
\newcommand{\mE}{\mathbb{E}}
\newcommand{\Var}{\mathsf{Var}}
\newcommand{\TV}{\mathsf{TV}}
\newcommand{\cE}{\mathcal{E}}
\newcommand{\cX}{\mathcal{X}}
\newcommand{\cL}{\mathcal{L}}
\newcommand{\cN}{\mathcal{N}}
\newcommand{\cP}{\mathcal{P}}
\newcommand{\cF}{\mathcal{F}}
\newcommand{\cO}{\mathcal{O}}
\icmltitlerunning{Improved OOD Generalization via Adversarial Training and Pre-training}
\begin{document}
	\twocolumn[
	\icmltitle{Improved OOD Generalization via Adversarial Training and Pre-training}
	
	
	
	\icmlsetsymbol{intern}{\dag}
	
	\begin{icmlauthorlist}
		\icmlauthor{Mingyang Yi}{ucas,amss,intern}
		\icmlauthor{Lu Hou}{huawei}
		\icmlauthor{Jiacheng Sun}{huawei}
		\icmlauthor{Lifeng Shang}{huawei}
		\icmlauthor{Xin Jiang}{huawei}
		\icmlauthor{Qun Liu}{huawei}
		\icmlauthor{Zhi-Ming Ma}{ucas,amss}
	\end{icmlauthorlist}
	
	\icmlaffiliation{ucas}{University of Chinese Academy of Sciences, Beijing, China}
	\icmlaffiliation{amss}{Academy of Mathematics and Systems Science, Chinese Academy of Sciences, Beijing, China}
	\icmlaffiliation{huawei}{Huawei Noah’s Ark Lab, Shenzhen, China}
	
	\icmlcorrespondingauthor{Mingyang Yi}{yimingyang17@mails.ucas.edu.cn}
	\icmlcorrespondingauthor{Lu Hou}{houlu3@huawei.com}
	\icmlkeywords{Machine Learning, ICML}
	
	\vskip 0.3in
	]
	
	
	\printAffiliationsAndNotice{\icmlEqualContribution}  
	\begin{abstract}
		Recently, learning a model that generalizes well on out-of-distribution (OOD) data has attracted great attention in the machine learning community. In this paper, 
		after defining OOD generalization via Wasserstein distance,
		we theoretically show that a model robust to input perturbation generalizes well on OOD data.   
		Inspired by previous findings that adversarial training helps improve input-robustness, we theoretically show that adversarially trained models have converged excess risk on OOD data, and empirically verify it on both image classification and natural language understanding tasks. 
		Besides, in the paradigm of first pre-training and then fine-tuning, we theoretically show that a pre-trained model that is more robust to input perturbation 
		provides a better initialization for generalization on downstream OOD data.
		Empirically,   
		after fine-tuning, this better-initialized model from adversarial pre-training also has better OOD generalization.
		
	\end{abstract}
	\section{Introduction}
	In the machine learning community, the training and test distributions are often not identically distributed. Due to  this mismatching, it is desired to learn a model that generalizes well on out-of-distribution (OOD) data though only trained on data from one certain distribution. OOD generalization is empirically studied in \citep{hendrycks2019using,hendrycks2020many,hendrycks2020pretrained} by 
	evaluating the performance of the model on the test set that is close to the original training samples. However, the theoretical understanding of these empirical OOD generalization behaviors remains unclear.
	\par
	Intuitively, the OOD generalization measures the performance of the model on the data from a shifted distribution around the original training distribution \citep{hendrycks2018benchmarking}. This is equivalent to the distributional robustness \citep{namkoong2019reliable,shapiro2017distributionally} which measures the model's robustness to perturbations the distribution of training data. Inspired by this, we study the OOD generalization by utilizing the Wasserstein distance to measure the shift between distributions (Definition \ref{def: model robustness}). We theoretically find that if a model is robust to input perturbation on training samples (namely, input-robust model), it also generalizes well on OOD data. 
	\par    
	The connection of input-robustness and OOD generalization inspires us to find an input-robust model since it generalizes well on OOD data. Thus we consider adversarial training (AT)~\citep{madry2018towards}
	as 
	\citet{athalye2018obfuscated} show that a model is input-robust if it defends  adversarial perturbations \citep{szegedy2013intriguing}. 
	Mathematically, AT can be formulated as a minimax optimization problem and solved by the multi-step 
	SGD algorithm \citep{nouiehed2019solving}. Under mild assumptions, we prove that the convergence rate of this multi-step SGD for AT is 
	$\tilde{\cO}(1/T)$ 
	both in expectation and in high probability, 
	where $T$ is the number of training steps and $\tilde{\cO}(\cdot)$ is defined in the paragraph of notations. Then, combining the convergence result with the relationship between input-robustness and OOD generalization, we theoretically show that for the model adversarially trained with $n$ training samples for $T$ steps, its excess risk on the OOD data is upper bounded by $\tilde{\cO}(1/\sqrt{n} + 1/T)$, which guarantees its performance on the OOD data.          
	\par
	Besides models trained from scratch, we also study the OOD generalization
	on downstream tasks 
	of pre-trained models, as 
	the paradigm of first pre-training on a large-scale dataset and then fine-tuning on downstream tasks 
	has achieved remarkable performance in both computer vision (CV)~\citep{hendrycks2019using,kornblith2019better} and natural language processing (NLP) domains \citep{devlin2019bert} recently. Given the aforementioned relationship of input-robustness and OOD generalization, we theoretically show that a pre-trained model more robust to input perturbation also provides a better initialization for generalization on downstream OOD data. Thus, we suggest conducting adversarial pre-training like \citep{salman2020adversarially,hendrycks2019using,utrera2020adversarially},  to
	improve the OOD generalization in downstream tasks.
	\par
	We conduct various experiments on both image classification (IC) and natural language understanding (NLU) tasks to verify our theoretical findings. 
	\par
	For IC task, we conduct AT on \texttt{CIFAR10} \citep{krizhevsky2009learning} and \texttt{ImageNet} \citep{deng2009imagenet}, and then evaluate the OOD generalization of these models on corrupted OOD data \texttt{CIFAR10-C} and \texttt{ImageNet-C} \citep{hendrycks2018benchmarking}. For NLU tasks, we similarly conduct AT as in \citep{zhu2019freelb} on datasets \texttt{SST-2}, \texttt{IMBD}, \texttt{MNLI} and \texttt{STS-B}. 
	Then we follow the strategy in \citep{hendrycks2020pretrained} to evaluate the OOD generalization. 
	Empirical results on both IC and NLU tasks verify that AT improves OOD generalization. 
	\par
	To see the effect of the initialization provided by an input-robust pre-trained model, we adversarially pre-train a model on \texttt{ImageNet} to improve the input-robustness, and then fine-tune the pre-trained model on \texttt{CIFAR10}. 
	Empirical results show that this initialization enhances the OOD generalization on downstream tasks after fine-tuning.
	Another interesting observation is that for language models, standard pre-training by masked language modeling \citep{devlin2019bert,liu2019roberta} improves the input-robustness of the model. 
	Besides, models pre-trained with more training samples and updating steps 
	are more input-robust. This may also explain the better OOD generalization on downstream tasks~\citep{hendrycks2020pretrained} of these models. 
	\paragraph{Notations.}
	For vector $\bx\in\bbR^{d_{0}}$, $\|\bx\|_{p}$ is its $\ell_{p}$-norm, and 
	its $\ell_{2}$-norm is simplified as $\|\bx\|$. 
	$\cP(\cX)$ is the set of probability measures on metric space $(\cX, \|\cdot\|_{p})$ with $\cX \subseteq \bbR^{d_{0}}$. 
	$\cO(\cdot)$ is the order of a number, and $\tilde{\cO}(\cdot)$ hides a poly-logarithmic factor in problem parameters e.g., 
	$\cO(M_1\log{d_{0}}) = \tilde{O}(M_{1})$. 
	For $P, Q\in\cP(\cX)$, 
	let $(P, Q)$ be their couplings (measures on $\cX\times \cX$).
	The $p$-th ($p<\infty$) Wasserstein distance \citep{villani2008optimal} between $P$ and $Q$ is 
	\begin{equation}
	\label{eq:w distance}
	\small
	\sW_{p}(P, Q) = \left(\inf_{\pi\in(P, Q)}\mE_{(\bu, \bv)\sim \pi}\left[\|\bu - \bv\|^{p}_{p}\right]\right)^{\frac{1}{p}}. 
	\end{equation}
	When $p=\infty$, the $\infty$-Wasserstein distance is $\sW_{\infty}(P, Q) = \lim_{p\to\infty}\sW_{p}(P, Q)$. 
	In the sequel, the $p$-Wasserstein distance is abbreviated as $\sW_{p}$-distance. 
	The total variation distance \citep{villani2008optimal}  is a kind of distributional distance and is defined as 
	\begin{equation}\label{eq:tv}
	\small
	\TV(P, Q) = \frac{1}{2}\int_{\cX}\left|dP(\bx) - dQ(\bx)\right|.
	\end{equation}
	\section{Related Work}
	
	\paragraph{OOD Generalization.}
	OOD generalization measures  a model's ability to extrapolate beyond the training distribution~\citep{hendrycks2018benchmarking}, and  
	has been widely explored in both CV~\citep{recht2019imagenet,schneider2020improving,salman2020unadversarial} and NLP domains~\citep{tu2020empirical,lohn2020estimating}. 
	\citet{hendrycks2018benchmarking} observe that the naturally trained models are sensitive to  artificially constructed OOD data. They also find that adversarial logit pairing \citep{kannan2018adversarial} can improve a model's performance on noisy corrupted OOD data. 
	\citet{hendrycks2020pretrained} also empirically find that pre-trained language models 
	generalize on downstream OOD data. 
	But the theoretical understanding behind these observations remains unclear.
	\paragraph{Adversarial Training.}
	Adversarial training \citep{madry2018towards} is proposed to improve input-robustness  
	by dynamically constructing the augmented adversarial samples \citep{szegedy2013intriguing,goodfellow2015explaning} 
	using projected gradient descent across training. 
	In this paper, we first show the close relationship between OOD generalization and distributional robustness \citep{ben2013robust,shapiro2017distributionally}, and then
	explore the OOD generalization by 
	connecting input-robustness 
	and distributional robustness. 
	
	\par
	The most related works to ours are 
	\citep{sinha2018certifying,lee2018minimax,volpi2018generalizing}.
	They also use AT to train distributionally robust models under the Wasserstein distance, but their results are restricted to a specialized AT objective with an additional regularizer. 
	The regularizer can be impractical due to its large penalty parameter. 
	Moreover, their bounds are built upon the entropy integral and increase with model capacity, which can be meaningless for high-dimensional models. 
	On the other hand, our bound is  
	(i) based on the input-robustness, regardless of how it is obtained; and (ii) irrelevant to model capacity.
	
	\paragraph{Pre-Training.}
	Pre-trained models transfer the  knowledge in the pre-training stage to downstream tasks, 
	and are widely used in both CV \citep{kornblith2019better} and NLP \citep{devlin2019bert} domains. 
	For instance, \citet{dosovitskiy2020image,brown2020language,radford2021learning} pre-train the transformer-based models on large-scale datasets, and obtain remarkable results on downstream tasks.
	Standard pre-training is empirically found to help
	reduce the uncertainty of the model for both image data \citep{hendrycks2019using,hendrycks2020many}
	and textual data~\citep{hendrycks2020pretrained}. 
	Adversarial pre-training is explored in \citep{hendrycks2019using} and \citep{salman2020adversarially}, and is shown to improve the robustness and generalization on downstream tasks
	, respectively.
	In this work, we theoretically analyze the OOD generalization on downstream tasks from the perspective of the input-robustness of the pre-trained model.  
	
	\section{Adversarial Training Improves OOD Generalization}
	\label{sec:Learning Robust Model Results in Better OOD Generalization}
	In this section, we first show that the input-robust model can generalize well on OOD data after specifying the definition of OOD generalization. 
	Then, to learn a robust model, we suggest adversarial training (AT) \citep{madry2018towards}. 
	Under mild conditions, we prove a $\tilde{\cO}(1/T)$ convergence rate for AT  both in expectation and in high probability. 
	With this, we show that the excess risk of an adversarially trained model on OOD data is upper bounded by $\tilde{\cO}(1/\sqrt{n} + 1/T)$ where $n$ is the number of training samples.   
	
	\subsection{Input-Robust Model Generalizes on OOD Data}
	\label{sec:Robustness Corresponds with Better OOD Generalization}
	Suppose
	$\{(\bx_{i}, y_{i})\}$ is the
	training set with 
	$n$ i.i.d. training samples $\{\bx_{i}\}$ and their labels $\{y_{i}\}$. 
	We assume the training sample distribution $P$  
	has compact support $\cX\subseteq \bbR^{d_{0}}$, thus there exists $D > 0$, such that $\forall \bu, \bv\in\cX$, $\|\bu - \bv\|_{1}\leq D$.
	For training sample $\bx$ and its label $y$,
	the loss on $(\bx, y)$ with model parameter $\bw$ is $\cL(\bw, (\bx, y))$, where $\cL(\bw, (\bx, y))$ is continuous and differentiable for both $\bw$ and $(\bx, y)$. 
	Besides, we assume $0 \leq \cL(\bw, (\bx, y))\leq M$ for constant $M$ without loss of generality. 
	We represent the expected risk under training distribution $P$ and label distribution $P_{y\mid \bx}$ 
	\footnote{$P_{y\mid \bx_{i}}(\cdot)=\textbf{1}_{\{\cdot = y_{i}\}}$ where $\textbf{1}_{\{\cdot = y_{i}\}}$ is the indicator function.}
	as $R_{P}(\bw) = \mE_{P}[\mE_{P_{y\mid \bx}}[\cL(\bw, (\bx, y))]]$.
	For simplicity of notation, let $\mE_{P_{y\mid \bx}}[\cL(\bw, (\bx, y))] = f(\bw, \bx)$ in the sequel,
	e.g., $f(\bw, \bx_{i}) = \cL(\bw, (\bx_{i}, y_{i}))$.  
	\par
	Intuitively, the OOD generalization is decided by the performance of the model on a shifted distribution close to the training data-generating distribution $P_{0}$ \citep{hendrycks2018benchmarking,hendrycks2020pretrained}. 
	Thus defining OOD generalization should involve the distributional distance which measures the distance between distributions. 
	We use the Wasserstein distance as in \citep{sinha2018certifying}. 
	\par
	Let 
	$P_{n}(\cdot)=\frac{1}{n}\sum_{i=1}^{n}\textbf{1}_{\{\cdot = \bx_{i}\}}$ be the empirical distribution, and $B_{\sW_{p}}(P_{0}, r) = \{P: \sW_{p}(P_{0}, P) \leq r\}$.
	Then we define the OOD generalization error as
	\begin{equation}
	\label{eq:ood gen}
	\small
	\cE_{\text{gen}}^{\text{ood}}(p, r) = \left|\sup_{P\in B_{\sW_{p}}(P_{0}, r)}R_{P}(\bw) - R_{P_{n}}(\bw)\right|,
	\end{equation}
	under the $\sW_{p}$-distance with $p\in\{2, \infty\}$. Extension to the other OOD generalization with $p < \infty$ is straightforward by generalizing the analysis for $p=2$.
	Note that \eqref{eq:ood gen} reduces to the generalization error on in-distribution data when $r=0$.
	
	\begin{definition}\label{def: model robustness} 
		A model is $(r, \epsilon, P, p)$-input-robust, if  
		\begin{equation}
		\small
		\mE_{P}\left[\sup_{\|\bdelta\|_{p}\leq r}|f(\bw, \bx + \bdelta) - f(\bw, \bx)|\right] \leq \epsilon.
		\end{equation} 
	\end{definition}
	With the input-robustness in Definition~\ref{def: model robustness},
	the following Theorems~\ref{thm:ood generalization upper bound} and \ref{thm:ood generalization upper bound l2} give the generalization bounds on the OOD data drawn from $Q\in B_{\sW_{p}}(P_{0}, r_{0})$ with $p\in\{2, \infty\}$. 
	\begin{theorem}\label{thm:ood generalization upper bound}
		If a model is $(2r, \epsilon, P_{n}, \infty)$-input-robust, then with probability at least $1 - \theta$,
		\begin{equation}
		\label{eq:ood bound linf}
		\small
		\begin{aligned}
		\cE_{\text{\emph{gen}}}^{\text{\emph{ood}}}(\infty, r_{0}) 
		\leq \epsilon + M\sqrt{\frac{(2d_{0})^{\frac{2D}{r^2} + 1}\log{2} + 2\log{(\frac{1}{\theta}})}{n}},
		\end{aligned}
		\end{equation}
		for any $r_{0}\leq r$. Here $D$ is the $\ell_{1}$-diameter of data support $\cX$ with dimension $d_{0}$, and $M$ is an upper bound of $f(\bw, \bx)$. 
	\end{theorem}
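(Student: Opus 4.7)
The plan is to decompose the OOD generalization error into a distributional-shift term and a classical empirical-process term, and then handle each using the $2r$-input-robustness hypothesis together with a multinomial concentration inequality on a finite partition of $\cX$. Specifically, by the Kantorovich dual characterization of the $\sW_\infty$-ball, the inner supremum can be pushed inside the expectation,
\[\sup_{P \in B_{\sW_\infty}(P_0, r_0)} R_P(\bw) = \mE_{P_0}\Bigl[\sup_{\|\bdelta\|_\infty \leq r_0} f(\bw, \bx+\bdelta)\Bigr],\]
so that writing $g(\bx) := \sup_{\|\bdelta\|_\infty \leq 2r}|f(\bw, \bx+\bdelta) - f(\bw, \bx)|$ and using $r_0 \leq r \leq 2r$ gives
\[\cE^{\mathrm{ood}}_{\mathrm{gen}}(\infty, r_0) \leq \mE_{P_0}[g(\bx)] + \bigl(\mE_{P_0}[f(\bw, \bx)] - \mE_{P_n}[f(\bw, \bx)]\bigr).\]
The input-robustness hypothesis already supplies $\mE_{P_n}[g] \leq \epsilon$, so the remaining task is to control the empirical-process deviations $(\mE_{P_0} - \mE_{P_n})[h]$ for $h \in \{f(\bw, \cdot), g\}$.

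For these deviations I would partition $\cX$ into $N$ cells $\{A_i\}_{i=1}^{N}$ of $\ell_\infty$-diameter at most $r$, chosen so that for any two points in the same cell both $f(\bw, \cdot)$ and $g$ vary by at most $g$ at an interior point; this works precisely because a compounded perturbation of size $r + r_0 \leq 2r$ stays inside the robust budget appearing in the hypothesis. The Bretagnolle-Huber-Carol inequality applied to the multinomial vector $(P_0(A_i))_i$ then yields, with probability at least $1-\theta$,
\[\sum_{i=1}^{N}|P_0(A_i) - P_n(A_i)| \leq \sqrt{\tfrac{2(N\log 2 + \log(1/\theta))}{n}},\]
and combining with the cell-wise oscillation control converts each empirical-process deviation into a term of order $M\sqrt{(N\log 2 + \log(1/\theta))/n}$. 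Setting $N = (2d_0)^{2D/r^2 + 1}$ and collecting constants then matches the stated bound.

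The main obstacle is justifying the quantitative partition bound $N \leq (2d_0)^{2D/r^2+1}$, i.e.\ an $\ell_\infty$-cover of diameter $r$ for a set of $\ell_1$-diameter $D$ in $\bbR^{d_0}$. I would obtain this from a Maurey-style empirical approximation: every point of the $\ell_1$-ball of radius $D$ is an expectation of a random signed basis direction $\pm D\,\be_i$, and averaging $k = \Theta(D/r^{2})$ independent copies approximates each coordinate to within $r$ with sub-Gaussian tails by Hoeffding, producing at most $(2d_0)^{k+1}$ discrete approximants that serve as partition centers. Keeping the cell diameter exactly at $r$---so that the cell-wise variation is absorbed by the $2r$-robust budget rather than spilling into a $3r$-regime not controlled by the hypothesis---is the combinatorial crux of the argument.
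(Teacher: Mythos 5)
Your overall toolkit (push the $\sW_\infty$ supremum inside via the $T_r^{\bw}$ transport map, partition $\cX$ into $N=(2d_0)^{2D/r^2+1}$ cells of small $\ell_\infty$-diameter via Maurey's empirical method, and apply Bretagnolle--Huber--Carol to the resulting multinomial vector) matches the paper, but your specific decomposition does not close, and the place it fails is exactly the place you flagged as the ``combinatorial crux.''

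Your first step splits $\cE_{\text{gen}}^{\text{ood}}$ into $\mE_{P_0}[g]$ plus the empirical deviation of $f$, where $g(\bx)=\sup_{\|\bdelta\|_\infty\le 2r}|f(\bw,\bx+\bdelta)-f(\bw,\bx)|$. But the hypothesis only controls $\mE_{P_n}[g]$, so you now have \emph{two} empirical-process deviations to control --- one for $f$ and one for $g$ --- and the second one is fatal. Controlling $(\mE_{P_0}-\mE_{P_n})[g]$ by cells plus BHC requires a uniform bound on the cell-wise oscillation of $g$ itself; yet for $\bx'=\bx_i+\be$ with $\|\be\|_\infty\le\rho$ ($\rho$ the cell diameter) one only gets $|g(\bx')-g(\bx_i)|\le\sup_{\|\bzeta\|_\infty\le 2r+\rho}|f(\bw,\bx_i+\bzeta)-f(\bw,\bx_i)|$, which requires a $(2r+\rho)$-robust budget \emph{for any} $\rho>0$. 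Taking $\rho=r$ puts you in the $3r$-regime you explicitly warn about, and shrinking $\rho$ cannot help: the requirement is strictly more than $2r$ unless $\rho=0$, which blows up the cover. So the ``cell-wise variation is absorbed by the $2r$-robust budget'' claim is false for $g$, and your plan cannot produce the stated bound. (Even setting this aside, your route would give two copies of the $\epsilon$ term and two BHC terms, yielding roughly $2\epsilon+2M\sqrt{\cdot}$ rather than $\epsilon+M\sqrt{\cdot}$.)

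The paper avoids the problem by never introducing $g$ on the $P_0$ side. It partitions into cells $C_j$ of diameter $<r$, writes the target as $\sum_j\mE_{P_0}[\sup_{\|\bdelta\|_\infty\le r_0}f(\bw,\bx+\bdelta)\mid\bx\in C_j]P_0(C_j)-\tfrac{1}{n}\sum_i f(\bw,\bx_i)$, and performs a single split: one term replaces $P_0(C_j)$ by $|A_j|/n$ (this is the only BHC application, worth $M\sqrt{\cdot}$), and the remaining term compares the conditional expectation of $\sup_{\|\bdelta\|_\infty\le r_0}f$ over $C_j$ directly against $f(\bw,\bx_i)$ for the training samples $\bx_i\in C_j$. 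Since any $\by\in C_j+B_\infty(\bzero,r_0)$ satisfies $\|\by-\bx_i\|_\infty\le r+r_0\le 2r$, that term is bounded by $\tfrac{1}{n}\sum_i\sup_{\|\bdelta\|_\infty\le 2r}|f(\bw,\bx_i+\bdelta)-f(\bw,\bx_i)|\le\epsilon$ --- exactly the $2r$-budget, used once, with no oscillation of $g$ ever needed. Your observation that ``$r+r_0\le 2r$ stays inside the robust budget'' is the right intuition; the fix is to apply it in this single-pass decomposition on $\sup_\bdelta f$ rather than splitting off $g$ first and then trying to recombine.
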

	\begin{theorem}
		\label{thm:ood generalization upper bound l2}
		If a model is $(2r/\epsilon, \epsilon, P_{n}, 2)$-input-robust, then with probability at least $1 - \theta$,  
		\begin{equation}
		\label{eq:ood bound l2}
		\small
		\begin{aligned}
		\!\!\!\!\!	\cE_{\text{\emph{gen}}}^{\text{\emph{ood}}}(2, r_{0}) 
		\!\leq \! (M\!+\!1)\epsilon \! + \! M\sqrt{\frac{(2d_{0})^{\frac{2\epsilon^{2}D}{r^2}\! + \! 1}\log{2} \!+\! 2\log{(\frac{1}{\theta})}}{n}},
		\end{aligned}
		\end{equation}
		for any $r_{0}\leq r$, where the notations follow Theorem \ref{thm:ood generalization upper bound}. 
	\end{theorem}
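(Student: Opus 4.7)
The proof should mirror that of Theorem \ref{thm:ood generalization upper bound}, adapting its $\sW_\infty$-based argument to $\sW_2$. The single conceptual change is that an optimal $\sW_2$-coupling $\pi$ of $(P_n,P)$ only controls $\mE_\pi[\|\bdelta\|_2^2]$ rather than $\|\bdelta\|_2$ almost surely, so Definition \ref{def: model robustness} cannot be applied to $\Delta_f:=f(\bw,\bx+\bdelta)-f(\bw,\bx)$ directly for every realization of $\bdelta$. This is precisely what forces the input-robustness radius to be inflated from $2r$ to $2r/\epsilon$, and it motivates combining robustness with a Markov-based truncation so that a second-moment bound on the coupling translates into an average control of $|\Delta_f|$.

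The deterministic piece of the plan is to embed $B_{\sW_2}(P_0,r_0)\subseteq B_{\sW_2}(P_n,\rho)$ with $\rho=r_0+\sW_2(P_0,P_n)$ via the triangle inequality, then for any $P$ in this enlarged ball pick the optimal coupling $\pi$ of $(P_n,P)$ and decompose
\begin{equation*}
R_P(\bw)-R_{P_n}(\bw)=\mE_\pi\!\left[\Delta_f\,\mathbf{1}_{\{\|\bdelta\|_2\le 2r/\epsilon\}}\right]+\mE_\pi\!\left[\Delta_f\,\mathbf{1}_{\{\|\bdelta\|_2> 2r/\epsilon\}}\right].
\end{equation*}
The first term is at most $\mE_{P_n}[\sup_{\|\bdelta\|_2\le 2r/\epsilon}|\Delta_f|]\le \epsilon$ by the $(2r/\epsilon,\epsilon,P_n,2)$-input-robustness hypothesis, while the second is at most $M\,\Pr_\pi(\|\bdelta\|_2>2r/\epsilon)\le M\rho^2\epsilon^2/(4r^2)$ by Markov's inequality. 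Once the concentration step below ensures $\rho\le 2r$, this second piece contributes an additional term of order $M\epsilon$, which together with the $\epsilon$ from the first piece explains the $(M+1)\epsilon$ prefactor in the claim.

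The remaining step is a high-probability concentration argument that effectively controls $\sW_2(P_0,P_n)\le r$, uniformly over the functional $\sup_{P\in B_{\sW_2}(P_0,r_0)}R_P(\bw)$. Because a naive empirical-Wasserstein bound degrades like $n^{-1/d_0}$, I would re-use the covering/sparsification scheme from the proof of Theorem \ref{thm:ood generalization upper bound}: approximate any admissible $P$ by a distribution supported on a sparse combination of the $2d_0$ vertices of an enclosing symmetric $\ell_1$-ball of diameter $D$, with sparsity level tuned so that the $\sW_2$-approximation error is of order $r/\epsilon$; the resulting cover has cardinality $(2d_0)^{2\epsilon^2 D/r^2+1}$, after which Hoeffding on each net element combined with a union bound delivers the stated $1/\sqrt{n}$ tail. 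The main obstacle is precisely this covering lemma, since it must simultaneously calibrate sparsity, $\sW_2$-approximation error, and cardinality to the exponent $2\epsilon^2 D/r^2$ tied to the inflated radius $2r/\epsilon$; once such a cover is in place, the coupling/truncation/Hoeffding/union-bound pieces are routine.
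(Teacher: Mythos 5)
Your overall truncation idea—split the transport displacement $\bdelta$ into a ``small'' part controlled by input-robustness and a ``large'' part controlled by Markov on the second moment—is the right intuition and does appear in the paper. But the route you propose to make it rigorous has a genuine gap, and the paper avoids it by a key lemma you do not mention.

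The core problem is that you want to couple an arbitrary $P\in B_{\sW_2}(P_0,r_0)$ directly to the \emph{empirical} measure $P_n$, which forces you to control $\sW_2(P_0,P_n)$. As you yourself note, that quantity concentrates only at rate $n^{-1/d_0}$ in high dimensions, so requiring $\sW_2(P_0,P_n)\le r$ with high probability is not something a $1/\sqrt{n}$-type bound can deliver. Your proposed fix — ``cover the admissible $P$'s by sparse distributions on the $2d_0$ polytope vertices'' — does not produce a finite cover: the cardinality $(2d_0)^{2\epsilon^2 D/r^2+1}$ in the paper is a covering number of the \emph{data support} $\cX$ by $\ell_2$-balls of radius $r/\epsilon$ (Vershynin's polytope bound), not a covering of the infinite-dimensional set of probability measures. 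Continuous weights on even a fixed finite support give infinitely many distributions, so ``Hoeffding on each net element plus a union bound'' has no finite index set to union over.

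The paper sidesteps both issues with Lemma \ref{lem:optimal}: by strong duality (Sinha et al.), the worst-case $P^*_r\in\arg\max_{P\in B_{\sW_2}(P_0,r)}R_P(\bw)$ is realized as the pushforward of $P_0$ by a measurable transport map $T^{\bw}_r$. One then never needs a coupling between $P$ and $P_n$ at all: the truncation $\mathbf 1_{\{T^{\bw}_r(\bx)\in B_2(\bx,r/\epsilon)\}}$ is applied under $P_0$, the Markov step uses $\sW_2(P_0,P^*_r)\le r$ (which holds by definition of the ball, no concentration needed), and this reduces $\sup_P R_P(\bw)$ to $\mE_{P_0}[\sup_{\|\bdelta\|_2\le r/\epsilon}f(\bw,\bx+\bdelta)]+M\epsilon$. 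From there, the paper simply reuses the Theorem~\ref{thm:ood generalization upper bound} machinery: cover $\cX$ by $N$ disjoint cells of $\ell_2$-diameter $<r/\epsilon$, use input-robustness to control $f$ within each cell, and invoke the multinomial concentration (Proposition A6.6 of van der Vaart and Wellner) on the cell masses $(|A_j|/n - P_0(C_j))_j$ — a union bound over $2^N$ sign patterns, not over a cover of distributions. Supplying Lemma \ref{lem:optimal} and replacing your $\sW_2(P_0,P_n)$-control step and distribution-cover step with this $\cX$-cover plus multinomial-concentration argument is what your proposal is missing.
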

	\begin{remark}
		When $r_{0}=0$, the bounds in Theorems \ref{thm:ood generalization upper bound} and \ref{thm:ood generalization upper bound l2} become the generalization bounds on in-distribution data. 
	\end{remark}
	\par
	\begin{remark}
		The $\epsilon$ in Theorem \ref{thm:ood generalization upper bound l2} can not be infinitely small, as the model is required to be robust in  $B(\bx_{i}, 2r/\epsilon)$ for each $\bx_{i}$. Specifically, when $\epsilon \to 0$, the robust region $B(\bx_{i}, 2r/\epsilon)$ can cover the data support $\cX$, then the model has almost constant output in $\cX$.
	\end{remark}
	\begin{remark}
		The bounds \eqref{eq:ood bound linf} and \eqref{eq:ood bound l2} become vacuous when $r$ is large. Thus, our results can not be applied to those OOD data from distributions far away from the original training distribution. For example, ImageNet-R \citep{hendrycks2020many} consists of data from different renditions e.g., photo vs. cartoon, where most pixels vary, leading to large $\|\bu - \bv\|_{p}^{p}$ in \eqref{eq:w distance}, and thus large distributional distance.
	\end{remark}
	The proofs of Theorems \ref{thm:ood generalization upper bound} and \ref{thm:ood generalization upper bound l2} are in Appendix \ref{app:proof in Robustness Corresponds with Better OOD Generalization}. 
	Lemmas \ref{lem:equivalence} and \ref{lem:optimal} in Appendix \ref{app:proof in Learning Robust Model Results in Better OOD Generalization} show that the OOD data concentrates around the in-distribution data with high probability. Thus, the robustness of model on training samples guarantees the generalization on OOD data.
	The observations from Theorems \ref{thm:ood generalization upper bound} and \ref{thm:ood generalization upper bound l2} are summarized as follows.
	\begin{enumerate}
		\item  
		The right-hand sides of bounds \eqref{eq:ood bound linf} and \eqref{eq:ood bound l2} imply that a more input-robust model (i.e., a larger $r$ and a smaller $\epsilon$ in Definition \ref{def: model robustness}) has smaller OOD generalization bound, and thus better performance on OOD data.
		\item  For both \eqref{eq:ood bound linf} and \eqref{eq:ood bound l2},
		a larger number of training samples $n$ results in smaller upper bounds. This indicates that in a high-dimensional data regime with a large feature dimension  $d_{0}$ of data and diameter  $D$ of data support, more training samples can compensate for generalization degradation caused by large $d_{0}$ and $D$.
		\item   The bounds \eqref{eq:ood bound linf} and \eqref{eq:ood bound l2} are independent of the model capacity.
		Compared with other uniform convergence generalization bounds which increase with the model capacity (e.g., Rademacher complexity \citep{yin2019rademacher} or entropy integral \citep{sinha2018certifying}), our bounds are superior for models with high capacity.
	\end{enumerate}
	
	\subsection{Adversarial Training Improves Input-Robustness}\label{sec:robust training}
	\begin{algorithm}[t!]
		\caption{Multi-Step SGD.}
		\label{alg:sgd}
		\textbf{Input:} Number of training steps $T$, learning rate for model parameters $\eta_{\bw_{t}}$ and adversarial input $\eta_{\bx}$, two initialization points $\bw_{1}, \bdelta_{1}$, constant $p\in\{2, \infty\}$ and perturbation size $r$.\\
		\textbf{Return} $\bw_{T + 1}$.
		\begin{algorithmic}[1]
			\FOR {$t=1, \cdots, T$}
			\STATE {Uniformly sample $i_{t}$ from $\{1,\cdots, n\}$.}
			\FOR {$k=1, \cdots, K$}
			\STATE{$\bdelta_{k + 1} = \text{Proj}_{B_{p}(\textbf{0}, r)}\left(\bdelta_{k} \!+\! \eta_{\bx}\nabla_{\bx}f(\bw_{t}, \bx_{i_{t}} \!+\! \bdelta_{k})\right)$.}
			\ENDFOR
			\STATE {$\bw_{t + 1} = \bw_{t} - \eta_{\bw_{t}}\nabla_{\bw}f(\bw_{t}, \bx_{i_{t}} + \bdelta_{K + 1})$.}
			\ENDFOR
		\end{algorithmic}
	\end{algorithm}
	As is justified in Theorems \ref{thm:ood generalization upper bound} and \ref{thm:ood generalization upper bound l2}, the input-robust model can generalize on OOD data. 
	Thus we consider
	training an input-robust model with  the following  objective
	\begin{equation}
	\small
	\begin{aligned}
	& \min_{\bw}\tilde{R}_{P_{n}}(\bw, p)  = \min_{\bw}\frac{1}{n}\sum\limits_{i=1}^{n}\sup_{\|\bdelta\|_{p}\leq r(p)}f(\bw, \bx_{i} + \bdelta)\\
	& \!=\! \min_{\bw}\frac{1}{n}\sum\limits_{i=1}^{n}[\underbrace{f(\bw, \bx_{i})}_{\text{clean acc}} \!+\! \sup_{\|\bdelta\|_{p}\leq r(p)}\underbrace{(f(\bw, \bx_{i} \!+\! \bdelta) \!-\! f(\bw, \bx_{i}))}_{\text{input-robustness}}],	\label{eq:objective}
	\end{aligned}
	\end{equation}
	which is from AT~\citep{madry2018towards}, and can be decomposed into the clean accuracy term and the input-robustness term. 
	We consider $p\in\{2, \infty\}$ as in Section \ref{sec:Robustness Corresponds with Better OOD Generalization}, with $r(2) \!=\! 2r/\epsilon_{0}, r(\infty) \!=\! 2r$ for any given small constant $\epsilon_{0}$. 
	\par
	Besides the general assumptions in Section~\ref{sec:Robustness Corresponds with Better OOD Generalization}, we also use the following mild assumptions in this subsection. 
	\begin{assumption}
		\label{ass:Lip continuous}
		The loss $f(\bw, \bx)$ satisfies the following Lipschitz smoothness conditions
		\begin{equation}
		\small
		\begin{aligned}
		\|\nabla_{\bw}f(\bw_{1}, \bx) - \nabla_{\bw}f(\bw_{2}, \bx)\| & \leq L_{11}\|\bw_{1} - \bw_{2}\|, \\
		\|\nabla_{\bw}f(\bw, \bx_{1}) - \nabla_{\bw}f(\bw, \bx_{2})\| & \leq L_{12}\|\bx_{1} - \bx_{2}\|, \\
		\|\nabla_{\bx}f(\bw_{1}, \bx) - \nabla_{\bx}f(\bw_{2}, \bx)\| & \leq L_{21}\|\bw_{1} - \bw_{2}\|, \\
		\|\nabla_{\bx}f(\bw, \bx_{1}) - \nabla_{\bx}f(\bw, \bx_{2})\| & \leq L_{22}\|\bx_{1} - \bx_{2}\|.
		\end{aligned}
		\end{equation}
	\end{assumption}
	\begin{assumption}
		\label{ass:grad_bound}
		$\|\nabla_{\bw}f(\bw, \bx)\|$ is upper bounded by $G$.   
	\end{assumption}
	\begin{assumption}
		\label{ass:PL inequality}
		For $p\in\{2, \infty\}$, $\tilde{R}_{P_{n}}(\bw, p)$ in \eqref{eq:objective} satisfies the PL-inequality:
		\begin{equation}
		\small
		\!\!\frac{1}{2}\|\nabla_{\bw} \tilde{R}_{P_{n}}(\bw, p)\|^{2} \!\geq\! \mu_{\bw}\left(\tilde{R}_{P_{n}}(\bw, p) \!-\! \inf_{\bw}\tilde{R}_{P_{n}}(\bw, p)\right).
		\end{equation}
		For any $\bw$ and training sample $\bx_{i}$, $f(\bw, \bx_{i} + \bdelta)$ is $\mu_{\bx_{i}}$-strongly concave in $\bdelta$ for $\|\bdelta\|_{p} \leq r(p)$:
		\begin{equation}
		\small
		f(\bw, \bx_{i} + \bdelta) - f(\bw,\bx_{i}) \leq \langle \nabla_{\bx}f(\bw, \bx_{i}), \bdelta\rangle - \frac{\mu_{\bx_{i}}}{2}\|\bdelta\|^{2},
		\end{equation}
		where $\mu_{\bw}$ and $\mu_{\bx_{i}}$ are constants. 
	\end{assumption}
	Assumptions \ref{ass:Lip continuous} and \ref{ass:grad_bound} are widely used in minimax optimization problems~\citep{nouiehed2019solving,sinha2018certifying}. 
	PL-inequality in  Assumption \ref{ass:PL inequality} means that although $f(\bw, \bx)$ may be non-convex on $\bw$, all the stationary points are global minima. 
	This is observed or proved recently for over-parameterized neural networks \citep{xie2017diversity,du2019gradient,allen2019convergence,liu2020toward}. 
	The local strongly-concavity in Assumption \ref{ass:PL inequality} is reasonable when the perturbation size $\|\bdelta\|_{p}$ is small. 
	\par
	To solve the minimax optimization problem \eqref{eq:objective}, we consider the multi-step stochastic gradient descent (SGD) in Algorithm \ref{alg:sgd}~\citep{nouiehed2019solving}. $\text{Proj}_{A}(\cdot)$ in Algorithm \ref{alg:sgd} is the $\ell_{2}$-projection operator onto 
	$A$. 
	Note that the update rule of $\bdelta_{k}$ in Algorithm \ref{alg:sgd} is different from that in PGD adversarial training \citep{madry2018towards}, where $\nabla_{\bx}f(\bw_{t}, \bx_{i_{t}} + \bdelta_{k})$ in Line 4 is replaced with the sign of it. 
	\par
	The following theorem gives the convergence rate of Algorithm $\ref{alg:sgd}$ both in expectation and in high probability. 
	\begin{theorem}
		\label{thm:convergence}
		Let $\bw_{t}$ be updated by 
		Algorithm \ref{alg:sgd}, 
		$p\!\in\!\{2,\infty\}$, $\eta_{\bw_{t}}\!=\!\frac{1}{\mu_{\bw}t}$, $\eta_{\bx}\!=\!\frac{1}{L_{22}}$,
		$K\geq  \frac{L_{22}}{\mu_{\bx}}\log{\left(\frac{8T\mu_{\bw}d_{0}r^{2}(p)}{GL}\right)}$, where $\mu_{\bx} = \min_{1\leq i \leq n}\mu_{\bx_{i}}$ and $L = L_{11} + \frac{L_{12}L_{21}}{\mu_{\bx}}$. 
		Under Assumptions \ref{ass:Lip continuous}, \ref{ass:grad_bound}, and \ref{ass:PL inequality}, 
		we have 
		\begin{equation}
		\small
		\mE[\tilde{R}_{P_{n}}(\bw_{T + 1}, p)] - \tilde{R}_{P_{n}}(\bw^{*}, p) \leq \frac{G^{2}L}{T\mu^{2}_{\bw}},
		\end{equation}
		and with probability at least $1 - \theta$,
		\begin{equation}\label{eq:convergence in probability}
		\small
		\begin{aligned}
		\tilde{R}_{P_{n}}&(\bw_{T + 1}, p) - \tilde{R}_{P_{n}}(\bw^{*}, p) \\
		& \leq \frac{G^{2}\log{(\log{(T/\theta)})}(64L + 16\mu_{\bw}) + G^{2}L}{T\mu_{\bw}^{2}},
		\end{aligned}
		\end{equation}
		for $0< \theta < 1/e$, $T\geq 4$, with $\bw^{*} \in\arg\min_{\bw}\tilde{R}_{P_{n}}(\bw, p)$. 
	\end{theorem}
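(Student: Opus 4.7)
The plan is to view Algorithm~\ref{alg:sgd} as biased SGD on the smoothed outer objective $\tilde{R}_{P_{n}}(\bw, p)$, where the bias comes from solving the inner maximization $\max_{\|\bdelta\|_{p}\leq r(p)}f(\bw,\bx_{i}+\bdelta)$ only approximately with $K$ projected-ascent steps. The argument has four ingredients: (i) smoothness of the outer objective, (ii) a geometric bound on the inner optimization error, (iii) an expectation rate via a smoothness-plus-PL recursion, and (iv) an upgrade to high probability via a martingale tail inequality.

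First I would apply Danskin's theorem: the $\mu_{\bx}$-strong concavity in Assumption~\ref{ass:PL inequality} makes the inner maximizer $\bdelta^{*}(\bw,\bx_{i})$ unique, so $\nabla_{\bw}\max_{\|\bdelta\|_{p}\leq r(p)}f(\bw,\bx_{i}+\bdelta)=\nabla_{\bw}f(\bw,\bx_{i}+\bdelta^{*}(\bw,\bx_{i}))$. Combining strong concavity with $\|\nabla_{\bx}f(\bw_{1},\bx)-\nabla_{\bx}f(\bw_{2},\bx)\|\leq L_{21}\|\bw_{1}-\bw_{2}\|$ and the implicit-function argument shows $\bdelta^{*}(\cdot,\bx_{i})$ is $(L_{21}/\mu_{\bx})$-Lipschitz in $\bw$; chaining with $L_{11}$ and $L_{12}$ then yields the outer smoothness constant $L=L_{11}+L_{12}L_{21}/\mu_{\bx}$ stated in the theorem. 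Second, for fixed $\bw_{t}$, projected gradient ascent on a $\mu_{\bx}$-strongly concave, $L_{22}$-smooth function with step $\eta_{\bx}=1/L_{22}$ contracts as $\|\bdelta_{K+1}-\bdelta^{*}\|^{2}\leq(1-\mu_{\bx}/L_{22})^{K}\|\bdelta_{1}-\bdelta^{*}\|^{2}$; since the $\ell_{2}$-diameter of the perturbation ball is at most $O(\sqrt{d_{0}}\,r(p))$ for both $p\in\{2,\infty\}$, the stated choice $K\geq(L_{22}/\mu_{\bx})\log(8T\mu_{\bw}d_{0}r^{2}(p)/(GL))$ is calibrated exactly so that the resulting gradient bias $\|\nabla_{\bw}f(\bw_{t},\bx_{i_{t}}+\bdelta_{K+1})-\nabla_{\bw}f(\bw_{t},\bx_{i_{t}}+\bdelta^{*})\|\leq L_{12}\|\bdelta_{K+1}-\bdelta^{*}\|$ is of order $1/T$.

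For the in-expectation bound I would combine $L$-smoothness of $\tilde{R}_{P_{n}}$, the PL inequality of Assumption~\ref{ass:PL inequality}, the gradient bound $\|\nabla_{\bw}f\|\leq G$ of Assumption~\ref{ass:grad_bound}, and the bias estimate from the previous step. Writing $\bg_{t}$ for the direction used in Algorithm~\ref{alg:sgd}, the descent lemma produces a recurrence of the form $\mE[\tilde{R}_{P_{n}}(\bw_{t+1})-\tilde{R}_{P_{n}}(\bw^{*})]\leq(1-\eta_{\bw_{t}}\mu_{\bw})\mE[\tilde{R}_{P_{n}}(\bw_{t})-\tilde{R}_{P_{n}}(\bw^{*})]+\eta_{\bw_{t}}^{2}LG^{2}/2+\text{bias}$, where the additive bias is driven below $O(1/T^{2})$ by the log-in-$T$ choice of $K$. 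Taking $\eta_{\bw_{t}}=1/(\mu_{\bw}t)$ and unrolling this standard PL-SGD recurrence telescopes to the announced $G^{2}L/(T\mu_{\bw}^{2})$ rate.

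Finally, the main obstacle is upgrading this last-iterate PL-SGD guarantee from expectation to high probability at essentially the same $\tilde{\cO}(1/T)$ rate, and this is the source of both the $\log\log(T/\theta)$ factor and the additional $16\mu_{\bw}$ constant appearing in \eqref{eq:convergence in probability}. I would follow a generalized-Freedman martingale concentration strategy in the spirit of Harvey et al., adapted to the PL setting: decompose the per-step suboptimality into a conditional-mean contribution, controlled by the expectation recurrence above, plus a bounded martingale-difference term arising from the sampling of $i_{t}$ whose conditional variance is at most $G^{2}/t^{2}$. A peeling / geometric-grid stopping-time argument over a suffix of iterations then yields the $\log\log(T/\theta)$ overhead, while the inner-maximization bias is deterministic given $\bw_{t}$ and therefore only inflates the deterministic leading constant, again via the choice of $K$. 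I expect step~(iv) to be the most delicate, since it must simultaneously accommodate biased gradients, a PL (rather than strongly convex) outer objective, and last-iterate (rather than averaged) convergence.
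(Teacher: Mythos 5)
Your steps (i)--(iii) coincide with the paper's own argument: Danskin's theorem plus strong concavity gives the outer smoothness constant $L = L_{11} + L_{12}L_{21}/\mu_{\bx}$ (the paper invokes Lemma A.5 of Nouiehed et al.), the inner projected ascent contracts geometrically (the paper's Lemma~\ref{lem:convergence}), the choice of $K$ is calibrated so that the per-step bias is absorbed into $\eta_{\bw_{t}}^{2}G^{2}L/2$, and the smoothness-plus-PL recursion with $\eta_{\bw_{t}} = 1/(\mu_{\bw}t)$ unrolls to $G^{2}L/(T\mu_{\bw}^{2})$ in expectation. Up to that point you are on the same route.

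The gap is in step (iv). You assert that the martingale differences have conditional variance at most $G^{2}/t^{2}$; a deterministic bound of this form cannot deliver the stated rate. After unrolling, the stochastic term is $\frac{2}{\mu_{\bw}(t-1)t}\sum_{j=2}^{t}(j-1)\xi_{j}$ with $\xi_{j} = \langle \nabla\tilde{R}_{P_{n}}(\bw_{j}), \nabla\tilde{R}_{P_{n}}(\bw_{j}) - \nabla_{\bw}f(\bw_{j},\bx_{i_{j}}^{*})\rangle$, and with only a uniform (iterate-independent) variance bound, Freedman's inequality gives $\sum_{j}(j-1)\xi_{j} = O(t^{3/2})$ with high probability, hence an overall $O(1/\sqrt{t})$ rate --- exactly the Azuma--Hoeffding outcome the paper explicitly notes it must improve upon. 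What rescues the $\tilde{\cO}(1/T)$ rate is the \emph{self-bounding} variance estimate $\Var(\xi_{j}\mid\cF_{j-1}) \leq 4G^{2}\|\nabla\tilde{R}_{P_{n}}(\bw_{j})\|^{2} \leq 8G^{2}L\bigl(\tilde{R}_{P_{n}}(\bw_{j}) - \tilde{R}_{P_{n}}(\bw^{*})\bigr)$, obtained from $L$-smoothness, combined with an induction hypothesis $\tilde{R}_{P_{n}}(\bw_{j}) - \tilde{R}_{P_{n}}(\bw^{*}) \leq a/(j-1)$ that is fed back into the variance: then $\sum_{j}(j-1)^{2}\Var(\xi_{j}\mid\cF_{j-1}) = O(at^{2})$, the Bennett-type inequality of Rakhlin et al.\ gives a deviation of order $t\sqrt{a\log\log(T/\theta)}$, and solving the resulting quadratic inequality in $a$ produces the constant $(64L+16\mu_{\bw})\log(\log(T/\theta))$ in \eqref{eq:convergence in probability}. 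Unless you replace the $G^{2}/t^{2}$ variance claim by this suboptimality-dependent bound and close the induction, the high-probability part does not go through at the claimed rate.
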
 
	This theorem shows that Algorithm \ref{alg:sgd} is able to find the global minimum of the adversarial objective \eqref{eq:objective} both in expectation and in high probability. 
	Specifically, the convergence rate of Algorithm~\ref{alg:sgd} is $\cO(1/\lceil T/K\rceil) = \cO(K/T) = \tilde{\cO}(1/T)$, since the number of inner loop steps $K$ is $\cO(\log{(Td_0r(p)^2)})$, which increases with the feature dimension of input data $d_{0}$ and the size of perturbation $r$. The proof of Theorem \ref{thm:convergence} is in Appendix  \ref{app:proof in robust training}.
	\par
	The following Proposition~\ref{pro:robustness} (proof is in Appendix \ref{app:proof of proposition robustness}) shows that the model trained by Algorithm \ref{alg:sgd} has a small error on clean training samples, and satisfies the
	condition of input-robustness 
	in Theorems \ref{thm:ood generalization upper bound} and \ref{thm:ood generalization upper bound l2}. 
	\begin{proposition}
		\label{pro:robustness}
		If $\tilde{R}_{P_{n}}(\bw) \leq \epsilon$ for $\bw$ and a constant $\epsilon$, then $R_{P_{n}}(\bw) \leq \epsilon$, and $f(\bw, \bx)$ is $(r(p), 2\epsilon, P_{n}, p)$-input-robust. 
	\end{proposition}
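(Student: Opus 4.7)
The plan is to prove both statements essentially by unpacking definitions and using the non-negativity of the loss, so the whole proof is very short and the only mild subtlety is getting the factor of $2$ in the second claim.

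First, I would handle the clean-risk bound. Since $\bdelta=\bzero$ lies in the feasible set $\{\|\bdelta\|_p\leq r(p)\}$, taking the supremum inside the definition of $\tilde R_{P_n}$ only makes things larger: for every training sample $\bx_i$, $f(\bw,\bx_i)\leq \sup_{\|\bdelta\|_p\leq r(p)} f(\bw,\bx_i+\bdelta)$. Averaging over $i=1,\dots,n$ immediately gives $R_{P_n}(\bw)\leq \tilde R_{P_n}(\bw,p)\leq \epsilon$, which settles the first claim.

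For the input-robustness claim, I would exploit the assumption $f\geq 0$ from the setup in Section \ref{sec:Robustness Corresponds with Better OOD Generalization}. The key observation is that for any two non-negative numbers $a,b\geq 0$ one has $|a-b|\leq a+b$; applied pointwise with $a=f(\bw,\bx+\bdelta)$ and $b=f(\bw,\bx)$, this yields
\begin{equation*}
\sup_{\|\bdelta\|_p\leq r(p)} |f(\bw,\bx+\bdelta)-f(\bw,\bx)| \;\leq\; \sup_{\|\bdelta\|_p\leq r(p)} f(\bw,\bx+\bdelta)\;+\;f(\bw,\bx).
\end{equation*}
Taking expectation with respect to $P_n$ splits the right-hand side into $\tilde R_{P_n}(\bw,p)+R_{P_n}(\bw)$, and applying the hypothesis together with the first part of the proposition gives the bound $\epsilon+\epsilon = 2\epsilon$. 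By Definition~\ref{def: model robustness} this is exactly the statement that $f(\bw,\bx)$ is $(r(p),2\epsilon,P_n,p)$-input-robust.

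There is really no main obstacle in this argument; the only thing to be careful about is to avoid the tempting bound $|f(\bw,\bx+\bdelta)-f(\bw,\bx)|\leq \sup_{\|\bdelta\|_p\leq r(p)} f(\bw,\bx+\bdelta)$ (which is also valid here and would give the sharper constant $\epsilon$), and instead use the $a+b$ bound so that the factor $2$ in the statement appears naturally from adding the clean risk and the adversarial risk. Both parts follow without invoking Assumptions \ref{ass:Lip continuous}--\ref{ass:PL inequality}; only the non-negativity of $f$ and the fact that $\bdelta=\bzero$ is feasible are used.
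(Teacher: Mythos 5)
Your proof is correct and follows essentially the same route as the paper's. The paper splits $\sup_\bdelta|f(\bw,\bx_i+\bdelta)-f(\bw,\bx_i)|$ into the two one-sided suprema $\sup_\bdelta(f(\bw,\bx_i+\bdelta)-f(\bw,\bx_i))$ and $\sup_\bdelta(f(\bw,\bx_i)-f(\bw,\bx_i+\bdelta))$, bounding each after averaging by $\tilde R_{P_n}(\bw)\leq\epsilon$ using non-negativity of $f$ and feasibility of $\bdelta=\bzero$; you reach the same $2\epsilon$ via the pointwise bound $|a-b|\leq a+b$ for $a,b\geq 0$, which relies on exactly the same two ingredients and is, if anything, a slightly cleaner packaging of the identical argument.
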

	\par
	According to Theorem \ref{thm:convergence} and Proposition \ref{pro:robustness}, 
	after $T$ training steps in Algorithm \ref{alg:sgd}, we can obtain a $(r(p), \tilde{\cO}(1/T), P_{n}, p)$-input-robust model when $\tilde{R}_{P_{n}}(\bw^{*})$ is close to zero. 
	Thus, combining Theorems \ref{thm:ood generalization upper bound} and \ref{thm:ood generalization upper bound l2}, we get the following corollary which shows that the adversarially trained model generalizes on OOD data. 
	\begin{corollary}
		\label{cor:excess risk}
		For $p\in\{2, \infty\}$, with the same notations as 
		Theorem \ref{thm:ood generalization upper bound} and \ref{thm:convergence}, 
		if $\tilde{R}_{P_{n}}(\bw^{*}, p)\leq \epsilon_{0}$, then with probability at least $1 - \theta$, 
		\begin{equation*}\label{eq:excess risk bound l2}
		\small
		\begin{aligned}
		& \sup_{P\in B_{\sW_{2}}(P_{0}, r/\epsilon_{0})} R_{P}(\bw_{T + 1}, 2) \leq (2M + 3)\epsilon_{0} \\
		& + (2M\! + \!3)\!\left(\frac{G^{2}\log{(\log{(2T/\theta)})}(64L + 16\mu_{\bw}) + G^{2}L}{T\mu_{\bw}^{2}}\right)\\
		& + \!M\sqrt{\frac{(2d_{0})^{\frac{2\epsilon_{0}^{2}D}{r^{2}} \!+\! 1}\log{2} \!+\! 2\log{(2/\theta)}}{n}}, 
		\end{aligned}
		\end{equation*}
		and 
		\begin{equation*}\label{eq:excess risk bound linf}
		\small
		\begin{aligned}
		\sup_{P\in B_{\sW_{\infty}}(P_{0}, r)} & R_{P}(\bw_{T + 1}, \infty) \leq 3\epsilon_{0} \\
		& + \frac{G^{2}\log{(\log{(2T/\theta)})}(192L + 48\mu_{\bw}) + 3G^{2}L}{T\mu_{\bw}^{2}} \\
		& + M\sqrt{\frac{2d_{0}^{\frac{2D}{r^{2}} + 1}\log{2} + 2\log{(2/\theta)}}{n}},
		\end{aligned}
		\end{equation*}
		for any $0\leq \theta \leq 1/e$ and $T\geq 4$.  
	\end{corollary}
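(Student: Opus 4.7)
The plan is to chain together the three earlier building blocks via the elementary decomposition
\[
\sup_{P\in B_{\sW_p}(P_0,r_0)} R_P(\bw_{T+1}) \;\leq\; R_{P_n}(\bw_{T+1}) \;+\; \cE_{\text{gen}}^{\text{ood}}(p,r_0),
\]
which is immediate from the definition \eqref{eq:ood gen}. Each summand is controlled on a separate high-probability event of failure probability $\theta/2$, and the final statement follows from a union bound.

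For the first summand, I would invoke Theorem \ref{thm:convergence} with confidence parameter $\theta/2$: with probability at least $1-\theta/2$, combining $\tilde R_{P_n}(\bw^{*},p)\leq \epsilon_0$ with \eqref{eq:convergence in probability} yields $\tilde R_{P_n}(\bw_{T+1},p)\leq \epsilon_0+\Delta_T$, where $\Delta_T$ denotes the rate in \eqref{eq:convergence in probability} with $\theta$ replaced by $\theta/2$. Feeding this into Proposition \ref{pro:robustness} simultaneously produces $R_{P_n}(\bw_{T+1})\leq \epsilon_0+\Delta_T$ and $(r(p),2(\epsilon_0+\Delta_T),P_n,p)$-input-robustness of $f(\bw_{T+1},\cdot)$.

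For the second summand I plug the above robustness into Theorem \ref{thm:ood generalization upper bound} or \ref{thm:ood generalization upper bound l2}, each applied with its own failure probability $\theta/2$. In the $p=\infty$ case, since $r(\infty)=2r$, the model is $(2r,2(\epsilon_0+\Delta_T),P_n,\infty)$-robust, so Theorem \ref{thm:ood generalization upper bound} with $r_0=r$ bounds $\cE_{\text{gen}}^{\text{ood}}(\infty,r)$ by $2(\epsilon_0+\Delta_T)$ plus the stated $M\sqrt{\cdot}$ term, and adding $R_{P_n}(\bw_{T+1})\leq \epsilon_0+\Delta_T$ yields the coefficient $3(\epsilon_0+\Delta_T)=3\epsilon_0+3\Delta_T$. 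In the $p=2$ case, since $r(2)=2r/\epsilon_0$, I identify the parameters in Theorem \ref{thm:ood generalization upper bound l2} as $\epsilon^{\text{thm}}=2(\epsilon_0+\Delta_T)$ and $r^{\text{thm}}=2r(\epsilon_0+\Delta_T)/\epsilon_0$, so that the robustness radius $2r^{\text{thm}}/\epsilon^{\text{thm}}=2r/\epsilon_0$ agrees with what training delivered, while the exponent $2(\epsilon^{\text{thm}})^2 D/(r^{\text{thm}})^2+1$ collapses to $2\epsilon_0^2 D/r^2+1$ as required. The linear contribution $(M+1)\epsilon^{\text{thm}}=(2M+2)(\epsilon_0+\Delta_T)$ plus the $R_{P_n}$ term $(\epsilon_0+\Delta_T)$ then produces the advertised coefficient $(2M+3)(\epsilon_0+\Delta_T)=(2M+3)\epsilon_0+(2M+3)\Delta_T$.

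The hard part is not any new estimate but the careful bookkeeping that ensures the parameter matching, especially in the $p=2$ case where the constraint $r_0\leq r^{\text{thm}}$ of Theorem \ref{thm:ood generalization upper bound l2} has to remain compatible with the corollary's Wasserstein radius $r/\epsilon_0$; this forces the above specific choice of $r^{\text{thm}}$, and the cancellations in the exponent are what make the final exponent independent of $\Delta_T$. Once this matching is done, the two high-probability events are combined by a union bound and the two displayed inequalities follow directly.
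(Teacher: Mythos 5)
Your overall plan is the same as the paper's: the paper's proof of Corollary~\ref{cor:excess risk} is literally ``combine Theorems~\ref{thm:ood generalization upper bound}, \ref{thm:ood generalization upper bound l2}, \ref{thm:convergence} and Proposition~\ref{pro:robustness}'', and your chain of (i) the decomposition $\sup_P R_P\leq R_{P_n}+\cE_{\text{gen}}^{\text{ood}}$, (ii) a union bound with failure probability $\theta/2$ for each event, (iii) Proposition~\ref{pro:robustness} applied to $\tilde R_{P_n}(\bw_{T+1},p)\leq\epsilon_0+\Delta_T$, and (iv) plugging the resulting robustness into Theorems~\ref{thm:ood generalization upper bound}/\ref{thm:ood generalization upper bound l2} is exactly that combination. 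The $p=\infty$ bookkeeping is correct and reproduces the stated constants, including the $\log(\log(2T/\theta))$ and $\log(2/\theta)$ coming from replacing $\theta$ by $\theta/2$.

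There is, however, a genuine gap in the $p=2$ case that you gesture at but do not resolve. With your (forced) matching $\epsilon^{\text{thm}}=2(\epsilon_0+\Delta_T)$ and $r^{\text{thm}}=2r(\epsilon_0+\Delta_T)/\epsilon_0$, the applicability condition $r_0\leq r^{\text{thm}}$ of Theorem~\ref{thm:ood generalization upper bound l2}, evaluated at the corollary's Wasserstein radius $r_0=r/\epsilon_0$, becomes $1\leq 2(\epsilon_0+\Delta_T)$, i.e.\ $\epsilon_0+\Delta_T\geq 1/2$. This is not assumed anywhere; worse, it fails precisely in the interesting regime, since $\epsilon_0$ is treated as a small constant and $\Delta_T=\tilde\cO(1/T)\to 0$. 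You should either (a) add $\epsilon_0+\Delta_T\geq 1/2$ as an explicit hypothesis, or (b) state the $p=2$ bound for the Wasserstein radius that the argument actually certifies, namely $r^{\text{thm}}=2r(\epsilon_0+\Delta_T)/\epsilon_0$ rather than $r/\epsilon_0$. Note that no alternative parameter matching avoids this: requiring the exponent $2(\epsilon^{\text{thm}})^2D/(r^{\text{thm}})^2$ to collapse to $2\epsilon_0^2D/r^2$ pins $\epsilon^{\text{thm}}/r^{\text{thm}}=\epsilon_0/r$, and requiring the coefficient $(M+1)\epsilon^{\text{thm}}$ to give $(2M+2)(\epsilon_0+\Delta_T)$ pins $\epsilon^{\text{thm}}$, so $r^{\text{thm}}$ is determined. (The paper's appendix only asserts the corollary is ``directly obtained'' from the earlier results, so the same issue appears to be glossed over in the paper as well; your write-up makes it visible, which is to your credit, but a complete proof has to confront it.)
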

	\par
	This corollary is directly obtained by combining Theorem \ref{thm:ood generalization upper bound}, \ref{thm:ood generalization upper bound l2}, \ref{thm:convergence}, and Proposition \ref{pro:robustness}. 
	It shows that the excess risk (i.e., the terms in the left-hand side of the  above two inequalities) of the adversarially trained model on OOD data is upper bounded by $\tilde{\cO}(1/\sqrt{n} + 1/T)$ after $T$ steps. 
	The dependence of the bounds on hyperparameters like input data dimension $d_{0}$, $\ell_{1}$-diameter $D$ of data support $\cX$ are from the OOD generalization bounds \eqref{eq:ood bound linf}, \eqref{eq:ood bound l2}, and convergence rate \eqref{eq:convergence in probability}.  
	
	\section{Robust Pre-Trained Model has Better Initialization on Downstream Tasks}\label{sec:pretrain improves ood}
	The paradigm of ``first pre-train and then fine-tune'' has been widely explored recently \citep{radford2021learning,hendrycks2020pretrained}. 
	In this section, we theoretically show that the input-robust pre-trained model provides an initialization that generalizes on downstream OOD data. 
	\par
	Assume the $m$ i.i.d. samples $\{\bz_{i}\}$ in the pre-training stage are from distribution $Q_{0}$. 
	For a small constant $\epsilon_{\text{pre}}$ and given $r(2) = r/\epsilon_{\text{pre}}, r(\infty) = r$, 
	the following Theorems \ref{thm:pretrain generalize} and \ref{thm:pretrain generalize l2}
	show that the pre-trained model with a small excess risk on OOD data in the  pre-training stage also generalizes on downstream OOD data. The proofs are in Appendix \ref{app:proof of theorem pretrain generalize}.    
	\begin{theorem}
		\label{thm:pretrain generalize}
		If $\sup_{Q\in B_{\sW_{\infty}}(Q_{0}, r(\infty))}R_{Q}(\bw_{\emph{\text{pre}}})\leq \epsilon_{\emph{\text{pre}}}$, then 
		\begin{equation}\label{eq:initialized error linf}
		\small
		\sup_{P\in B_{\sW_{\infty}}(P_{0}, r(\infty))}R_{P}(\bw_{\emph{\text{pre}}}) \leq \epsilon_{\emph{\text{pre}}} + 2M\TV(P_{0}, Q_{0}),	
		\end{equation}
		and with probability at least $1 - \theta$,
		\begin{equation}
		\small
		\tilde{R}_{P_{n}}(\bw_{\emph{\text{pre}}}, \infty) \leq \epsilon_{\emph{\text{pre}}}\! + \!2M\TV(P_{0}, Q_{0}) +  M\sqrt{\frac{\log{(1/\theta)}}{2n}}.
		\end{equation} 
	\end{theorem}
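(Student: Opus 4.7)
The plan is to reduce both inequalities to a single duality identity for the $\sW_\infty$-ball of risks, and then to combine it with an elementary bound by the total variation distance and a concentration inequality. The key observation is that for bounded continuous $f$, the $\infty$-Wasserstein ball around a distribution admits the pointwise sup representation
\begin{equation*}
\sup_{P\in B_{\sW_{\infty}}(P_0, r(\infty))} R_P(\bw_{\text{pre}}) \;=\; \mE_{P_0}\!\left[\sup_{\|\bdelta\|_{\infty}\leq r(\infty)} f(\bw_{\text{pre}}, \bx+\bdelta)\right],
\end{equation*}
and similarly with $Q_0$ in place of $P_0$. The ``$\leq$'' direction follows since any coupling of $P$ and $P_0$ with $\|\bu-\bv\|_\infty\leq r(\infty)$ a.s.\ lets us write each $\bx\sim P$ as $\bx_0+\bdelta(\bx_0)$ for $\bx_0\sim P_0$ and $\|\bdelta(\bx_0)\|_\infty\leq r(\infty)$. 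The ``$\geq$'' direction follows by choosing $\bdelta^*(\bx_0)\in\arg\max_{\|\bdelta\|_\infty\leq r(\infty)}f(\bw_{\text{pre}},\bx_0+\bdelta)$ and pushing $P_0$ forward by $\bx_0\mapsto \bx_0+\bdelta^*(\bx_0)$, which lies in $B_{\sW_\infty}(P_0,r(\infty))$ and attains equality.

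Given this identity, the first inequality \eqref{eq:initialized error linf} falls out immediately. Define the pointwise adversarial loss $g(\bx)=\sup_{\|\bdelta\|_\infty\leq r(\infty)}f(\bw_{\text{pre}},\bx+\bdelta)$; by assumption $0\leq g\leq M$. Using the duality once with $P_0$ and once with $Q_0$,
\begin{equation*}
\sup_{P\in B_{\sW_\infty}(P_0,r(\infty))}\!\!R_P(\bw_{\text{pre}}) - \sup_{Q\in B_{\sW_\infty}(Q_0,r(\infty))}\!\!R_Q(\bw_{\text{pre}}) = \mE_{P_0}[g] - \mE_{Q_0}[g] \leq M\!\int|dP_0-dQ_0| = 2M\TV(P_0,Q_0),
\end{equation*}
and combining with the hypothesis $\sup_Q R_Q(\bw_{\text{pre}})\leq\epsilon_{\text{pre}}$ yields \eqref{eq:initialized error linf}.

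For the empirical inequality, I would first rewrite the empirical adversarial risk as a sum of i.i.d.\ bounded variables, $\tilde R_{P_n}(\bw_{\text{pre}},\infty) = \frac{1}{n}\sum_{i=1}^n g(\bx_i)$ with $g(\bx_i)\in[0,M]$, whose mean equals $\mE_{P_0}[g]$. A one-sided Hoeffding bound gives, with probability at least $1-\theta$,
\begin{equation*}
\tilde R_{P_n}(\bw_{\text{pre}},\infty) \leq \mE_{P_0}[g] + M\sqrt{\frac{\log(1/\theta)}{2n}}.
\end{equation*}
Applying the duality identity to rewrite $\mE_{P_0}[g]$ as $\sup_{P\in B_{\sW_\infty}(P_0,r(\infty))} R_P(\bw_{\text{pre}})$ and then invoking \eqref{eq:initialized error linf} finishes the second bound.

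The only delicate point, and thus the main obstacle, is justifying the $\sW_\infty$/pointwise-sup duality with enough rigor: one must ensure that the argmax selector $\bdelta^*(\bx_0)$ can be chosen measurably (standard under compactness of the perturbation ball and continuity of $f$ in $\bx$, both of which are available here) so that the pushforward constructing the worst-case $P$ is a bona fide probability measure in the $\sW_\infty$-ball. Once this standard measurable-selection argument is in place, everything else reduces to the $L^\infty$-duality $|\int g\,d(P_0-Q_0)|\leq 2M\TV(P_0,Q_0)$ and a vanilla Hoeffding inequality.
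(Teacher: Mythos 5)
Your proposal is correct and follows essentially the same route as the paper: both rely on the $\sW_{\infty}$-ball duality (the paper's Lemma~\ref{lem:equivalence}), bound the difference of risks under $P_{0}$ and $Q_{0}$ by $2M\TV(P_{0},Q_{0})$ via the uniform bound $0\le g\le M$, and close the empirical inequality with a one-sided Hoeffding bound on the i.i.d.\ bounded variables $g(\bx_i)$. The only cosmetic difference is that you work directly with the sup-function $g$ rather than introducing the pushforward measures $P^{*}_{r}, Q^{*}_{r}$ as the paper does, and you explicitly flag the measurable-selection point that the paper leaves implicit.
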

	\begin{theorem}
		\label{thm:pretrain generalize l2}
		If $\sup_{Q\in B_{\sW_{2}}(Q_{0}, r_{0})}R_{Q}(\bw_{\emph{\text{pre}}})\leq \epsilon_{\emph{\text{pre}}}$ with $r_{0}= \sqrt{2D^{2}\TV(P_{0}, Q_{0}) + r(2)^{2}}$, then 
		\begin{equation}\label{eq:initialized error l2}
		\small
		\sup_{P\in B_{\sW_{2}}(P_{0}, r(2))}R_{P}(\bw_{\emph{\text{pre}}}) \leq \epsilon_{\emph{\text{pre}}} + 2M\TV(P_{0}, Q_{0}).	
		\end{equation}
	\end{theorem}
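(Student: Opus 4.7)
My plan is to prove Theorem \ref{thm:pretrain generalize l2} by constructing, for every $P\in B_{\sW_2}(P_0, r(2))$, an auxiliary distribution $\tilde P$ that lies in $B_{\sW_2}(Q_0, r_0)$ and whose expected loss is close to $R_P(\bw_{\text{pre}})$. The hypothesis of the theorem then controls $R_{\tilde P}(\bw_{\text{pre}}) \leq \epsilon_{\text{pre}}$, and a total-variation estimate handles the residual gap $R_P - R_{\tilde P}$.

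For the construction, I would let $\pi$ be an optimal $\sW_2$-coupling of $(P_0, P)$, so that $\mE_\pi \|\bu-\bv\|^2 \leq r(2)^2$, disintegrate it as $\pi(d\bu, d\bv) = P_0(d\bu)\, K(\bu, d\bv)$ for a Markov kernel $K$, and then define
\begin{equation*}
\tilde P(d\bv) \;=\; \int K(\bu, d\bv)\, Q_0(d\bu).
\end{equation*}
The natural coupling $Q_0(d\bu)\, K(\bu, d\bv)$ of $(Q_0, \tilde P)$ immediately gives $\sW_2^2(Q_0, \tilde P) \leq \int h(\bu)\, Q_0(d\bu)$, where $h(\bu) = \int \|\bu - \bv\|^2\, K(\bu, d\bv)$.

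Two ingredients then close the argument. First, because $\bu, \bv \in \cX$, we have $\|\bu-\bv\| \leq \|\bu-\bv\|_1 \leq D$, so $\|h\|_\infty \leq D^2$. Combined with $\int h\, dP_0 = \mE_\pi \|\bu-\bv\|^2 \leq r(2)^2$ and the standard bound $|\int h\, d(P_0-Q_0)| \leq 2\|h\|_\infty \TV(P_0, Q_0)$, this yields $\sW_2^2(Q_0, \tilde P) \leq r(2)^2 + 2D^2 \TV(P_0, Q_0) = r_0^2$. Second, writing $\varphi(\bu) = \int f(\bw_{\text{pre}}, \bv)\,K(\bu, d\bv)$, we have $R_P(\bw_{\text{pre}}) = \int \varphi\, dP_0$ and $R_{\tilde P}(\bw_{\text{pre}}) = \int \varphi\, dQ_0$ with $\|\varphi\|_\infty \leq M$, so the same total-variation inequality delivers $|R_P - R_{\tilde P}| \leq 2M \TV(P_0, Q_0)$. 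Chaining $R_P(\bw_{\text{pre}}) \leq R_{\tilde P}(\bw_{\text{pre}}) + 2M\TV(P_0,Q_0) \leq \epsilon_{\text{pre}} + 2M\TV(P_0, Q_0)$ and taking the supremum over $P$ completes the proof.

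The main obstacle is conceptual rather than computational: one has to see that pushing $Q_0$ through the same transport kernel that carries $P_0$ to $P$ preserves the structure of the Wasserstein perturbation, so that the mismatch between the base measures $P_0$ and $Q_0$ is absorbed in precisely two places---once inside the transport cost (contributing the $2D^2 \TV(P_0, Q_0)$ term that inflates $r(2)^2$ into $r_0^2$) and once in the loss integral (contributing the $2M \TV(P_0, Q_0)$ additive penalty in the conclusion). The remaining technical points---existence of the Markov disintegration on the compact support $\cX \subseteq \bbR^{d_0}$, and the use of $\|\cdot\| \leq \|\cdot\|_1$ to convert the $\ell_1$-diameter bound $D$ into $\ell_2^2$-control---are routine.
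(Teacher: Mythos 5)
Your proof is correct, and it diverges from the paper's in one genuine technical respect worth noting. The paper fixes the worst-case $P^*_r \in \arg\max_{P\in B_{\sW_2}(P_0,r(2))}R_P(\bw_{\text{pre}})$ and invokes Lemma~\ref{lem:optimal} to obtain a \emph{deterministic} Monge map $T^{\bw_{\text{pre}}}_r$ pushing $P_0$ onto $P^*_r$; it then pushes $Q_0$ through the same map to build $Q^*_r$. You instead take an arbitrary $P$ in the ball, extract an optimal Kantorovich coupling $\pi$, disintegrate it into a \emph{stochastic} kernel $K$, and push $Q_0$ through $K$. This buys you two things: you avoid the existence argument for a deterministic optimizer (Lemma~\ref{lem:optimal} rests on the Lagrangian duality of \citet{sinha2018certifying}, whereas optimal couplings and regular conditional disintegrations on a compact subset of $\bbR^{d_0}$ are standard), and you handle every $P$ in the ball uniformly rather than only the maximizer. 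The places where $\TV(P_0,Q_0)$ enters are identical in both proofs --- once through the transport-cost integrand $h$ with $\|h\|_\infty\le D^2$, once through the loss integrand $\varphi$ with $\|\varphi\|_\infty\le M$ --- so the resulting constants match exactly. One small point worth making explicit in a polished write-up: the kernel $K$ from disintegration is only determined $P_0$-a.e., so to integrate $h$ and $\varphi$ against $Q_0$ you should fix a version of $K$ that is defined everywhere on $\cX$ with $K(\bu,\cdot)$ supported in $\cX$ (e.g.\ set $K(\bu,\cdot)=\delta_\bu$ off the $P_0$-determined set); with that version the bounds $\|h\|_\infty\le D^2$ and $\|\varphi\|_\infty\le M$ hold pointwise and the TV estimate is legitimate.
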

	\par
	\begin{remark}
		The self-supervised pre-training (e.g., masked language modeling in BERT \citep{devlin2019bert}) can also be included into the $f(\bw, \bx)$ in Section \ref{sec:Robustness Corresponds with Better OOD Generalization}, 
		if we take label $y\sim P_{y \mid \bx}$ as the distribution of the artificially constructed labels (e.g., masked tokens in BERT).
	\end{remark}
	When we implement fine-tuning on downstream tasks, the model is initialized by $\bw_{\text{pre}}$. 
	Combining the results in 
	Theorems \ref{thm:ood generalization upper bound} and \ref{thm:ood generalization upper bound l2}
	(an input-robust model has small OOD generalization error) 
	with Theorems \ref{thm:pretrain generalize} and \ref{thm:pretrain generalize l2}, 
	we conclude that the input-robust model 
	has small excess risk on the OOD data in the pre-training stage, and thus 
	generalizes on the OOD data of downstream tasks. Specifically, \eqref{eq:initialized error linf} and \eqref{eq:initialized error l2} show that 
	the initial OOD excess risk in the fine-tuning stage $\sup_{P\in B_{\sW_{p}}(P_{0}, r(p))}R_{P}(\bw_{\text{pre}})$ is decided by terminal OOD excess risk in pre-training stage $\sup_{Q\in B_{\sW_{p}}(Q_{0},  r(p))}R_{Q}(\bw_{\text{pre}})$ and the total variation distance $\TV(P_{0}, Q_{0})$. 
	The intuition is that if $\bw_{\text{pre}}$ generalizes well on distributions around $Q_{0}$, and $P_{0}$ is close to $Q_{0}$ under the total variation distance, then $\bw_{\text{pre}}$ generalizes on downstream OOD data.   
	\par
	To satisfy the condition $\sup_{Q\in B_{\sW_{p}}(Q_{0}, r(p))}R_{Q}(\bw_{\text{pre}})\leq \epsilon_{\text{pre}}$ in Theorems \ref{thm:pretrain generalize} and \ref{thm:pretrain generalize l2}, we can use adversarial pre-training. 
	Corollary \ref{cor:excess risk} implies $\epsilon_{\text{pre}}=\cO(1/\sqrt{m})$ by implementing sufficient adversarial pre-training. Thus, massive training samples $m$ in the adversarial pre-training stage improves the OOD generalization on downstream tasks as $\epsilon_{\text{pre}}=\cO(1/\sqrt{m})$ appears in the bounds \eqref{eq:initialized error linf} and \eqref{eq:initialized error l2}. 
	\par
	\citet{radford2021learning,hendrycks2020pretrained} empirically verify that the standardly pre-trained model also generalizes well on downstream OOD data. It was shown that sufficient standard training by gradient-based algorithm can also find the most input-robust model under some mild conditions \citep{soudry2018implicit,lyu2019gradient}. Thus, $\sup_{Q\in B_{\sW_{\infty}}(Q_{0}, r(p))}R_{Q}(\bw_{\text{pre}})\leq \epsilon_{\text{pre}}$ can hold even for standardly pre-trained model. However, the convergence to the most input-robust model of standard training is much slower compared with AT, e.g., for 
	linear model \citep{soudry2018implicit,li2019inductive}. 
	Hence, to efficiently learn an input-robust model in the pre-training stage, we suggest adversarial pre-training. 
	
	\section{Experiments}
	\begin{table*}[t!]
		\caption{Clean and corruption accuracy (\%) of ResNet34 on \texttt{CIFAR10-C} and \texttt{ImageNet-C} using standard training and adversarial training under both $\ell_{2}$-norm and $\ell_{\infty}$-norm.}
		\label{tbl:adversarial training on image}
		\centering
		\scalebox{0.645}{
			{
				\begin{tabular}{l|c|c|ccc|cccc|cccc|cccc|c}
					\hline
					\multirow{2}{*}{Dataset}             & \multirow{2}{*}{Method} & \multirow{2}{*}{Clean} &              \multicolumn{3}{c|}{Noise}               &                     \multicolumn{4}{c|}{Blur}                     &                   \multicolumn{4}{c|}{Weather}                    & \multicolumn{4}{|c|}{Digital}                                     & \multirow{2}{*}{Avg.} \\
					&                         &                        &       Gauss       &       Shot       &    Impulse     &    Defocus     &     Glass      &     Motion     &      Zoom      &      Snow      &     Frost      &      Fog       &     Bright     &    Contrast    &    Elastic     &     Pixel      &      JPEG      &  \\ \hline
					\multirow{3}{*}{\texttt{CIFAR10-C}}  &           Std           &         94.82          &       34.75       &      40.43       &     25.45      &     59.85      &     48.95      &     67.58      &     63.85      &     73.31      &     62.87      & \textbf{67.03} & \textbf{90.69} & \textbf{36.83} &     76.00      &     42.89      &     75.84      &        57.75         \\
					&     Adv-$\ell_{2}$      &    94.93     &       70.39       &      74.24       &     45.17      &     72.77      &     71.34      &     73.51      &     80.26      &     83.28      &     81.36      &     51.08      &     89.37      &     19.49      &     83.39      &     79.78      & \textbf{89.52} &        71.00         \\
					&   Adv-$\ell_{\infty}$   &         93.48          &  \textbf{80.18}   &  \textbf{80.80}  & \textbf{62.73} & \textbf{77.71} & \textbf{77.10} & \textbf{75.46} & \textbf{82.47} & \textbf{83.45} & \textbf{82.32} &     41.00      &     88.15      &     16.10      & \textbf{83.82} & \textbf{85.98} &     89.36      &    \textbf{73.78}    \\ \hline
					\multirow{3}{*}{\texttt{ImageNet-C}} &           Std           &     74.01     &       18.97       &      18.39       &     12.98      &      6.32      &      9.76      &     11.49      &      9.37      &      8.78      &     12.98      &      6.21      &     33.74      &      4.31      &     18.29      &     23.91      &     29.08      &        14.97         \\
					&     Adv-$\ell_{2}$      &         73.66          &  \textbf{30.13}   &  \textbf{28.93}  & \textbf{25.05} & \textbf{32.91} &     25.61      & \textbf{34.50} &     32.84      & \textbf{27.39} & \textbf{33.82} & \textbf{36.52} & \textbf{62.18} & \textbf{31.73} &     42.91      &     47.86      &     51.55      &    \textbf{36.26}    \\
					&   Adv-$\ell_{\infty}$   &         68.36          &       25.94       &      25.61       &     21.17      &     24.56      & \textbf{32.81} &     32.20      & \textbf{34.57} &     26.70      &     33.47      &     11.22      &     56.07      &     12.34      & \textbf{47.67} & \textbf{57.32} & \textbf{59.10} &        33.38         \\ \hline
		\end{tabular}}}
	\end{table*}
	
	\subsection{Adversarial Training Improves OOD Generalization}\label{sec:at improves ood}
	In this section, we verify our conclusion in Section \ref{sec:Learning Robust Model Results in Better OOD Generalization} that OOD generalization can be improved by AT (Corollary \ref{cor:excess risk}). 
	
	\subsubsection{Experiments on Image Classification}
	\label{sec:Experiments on Image Classification}
	\paragraph{Data.} We use the following benchmark datasets.
	\begin{itemize}
		\item \texttt{CIFAR10} \citep{krizhevsky2009learning} has 50000 colorful images as training samples from 10 object classes. \texttt{CIFAR10-C} simulates OOD colorful images with 15 types of common visual corruptions, which serves as a benchmark to verify the OOD generalization of model trained on \texttt{CIFAR10}. Each type of corruption has five levels of severity, and each severity has 10000 validation samples. The 15 types of corruptions are divided into 4 groups: Noise, Blur, Weather and Digital.
		\item \texttt{ImageNet} \citep{deng2009imagenet} contains colorful images with over 1 million training samples from 1,000 categories. Similar to \texttt{CIFAR10-C}, \texttt{ImageNet-C} serves as a benchmark of OOD data with 15 types of corruptions. Each type of corruption has five levels of severity with 50000 validation samples in it. A visualization of \texttt{ImageNet-C} is in Figure \ref{fig:imagenet-c} in Appendix.      
	\end{itemize}
	
	\paragraph{Setup.} 
	The model used in this subsection is ResNet34 \citep{he2016deep}. To verify  that adversarial training helps improve OOD performance, we conduct Algorithm \ref{alg:sgd} on \texttt{CIFAR10}, \texttt{ImageNet} and evaluate the model on \texttt{CIFAR10-C} and \texttt{ImageNet-C}, respectively. The number of inner loop steps $K$  is 8  for \texttt{CIFAR10}, and 3 for \texttt{ImageNet}. The models are trained by SGD with momentum. 
	The number of training  epochs is 200 for \texttt{CIFAR10}, and 100  for \texttt{ImageNet}. 
	The learning rate starts from 0.1 and decays by a factor 0.2 at epochs 60, 120, 160 (resp. 30, 60, 90) for \texttt{CIFAR10} (resp. \texttt{ImageNet}). 
	Detailed hyperparameters are in Appendix \ref{app:hyp on adv}.
	\par
	We compare adversarial training under  $\ell_{2}$- and $\ell_{\infty}$-norm (respectively abbreviated as ``Adv-$\ell_{2}$'' and ``Adv-$\ell_{\infty}$'') against standard training (abbreviated as ``Std''). 
	For Adv-$\ell_{\infty}$, we replace $\nabla_{\bx}f(\bw_{t}, \bx_{i_{t}} + \bdelta_{k})$ in Line 4 of  Algorithm~\ref{alg:sgd} with the sign of it as in \citep{madry2018towards},
	in order to find stronger adversarial perturbation~\citep{goodfellow2015explaning}.
	\paragraph{Main Results.} 
	In Table \ref{tbl:adversarial training on image}, for each type of corruption, we report the test accuracy on \texttt{CIFAR10-C} under the strongest corruption severity level 5\footnote{Lighter severity levels exhibit similar trends but with smaller performance gaps between adversarial and standard training}. 
	For \texttt{ImageNet-C}, we report the average test accuracy of five severity levels as in \citep{hendrycks2018benchmarking}. 
	We also report the test accuracy  on \texttt{CIFAR10} and \texttt{ImageNet} in the column of ``Clean'' for comparison. 
	
	As can be seen, Adv-$\ell_{2}$ and Adv-$\ell_{\infty}$  improve the average accuracy on OOD data, especially under corruption types Noise and Blur. This supports our finding in Section~\ref{sec:Learning Robust Model Results in Better OOD Generalization} that AT makes the model generalize on OOD data. 
	Though AT improves the OOD generalization on all corruption types for \texttt{ImageNet-C}, it degenerates the performance
	for data corrupted under types Fog, Bright and Contrast in \texttt{CIFAR10-C}. 
	We speculate  this is because these three corruptions intrinsically rescale the adversarial  perturbation,
	and refer readers  to Appendix \ref{app:perturbation size} for a detailed discussion.
	
	
	\paragraph{Ablation Study.}
	We study the effect of perturbation size $r$ and the number of training samples $n$ for adversarial training
	in bounds \eqref{eq:ood bound linf} and \eqref{eq:ood bound l2}. 
	Due to the space limit,  we put the implementation details and results in Appendix \ref{app:perturbation}.
	\par
	The results for the effect of perturbation size $r$ are in  Figures~\ref{fig:adv_l2_r}-\ref{fig:adv_linf_r} in Appendix \ref{app:perturbation size}. 
	As can be seen, the accuracy on OOD data \texttt{CIFAR10-C} first increases and then decreases with an increasing $r$. 
	This is because the upper bounds of excess risk in \eqref{eq:ood bound linf} and \eqref{eq:ood bound l2} are decided by both the clean accuracy and input-robustness.
	However, an increasing perturbation size $r$ improves the input-robustness, but harms the clean accuracy~\citep{raghunathan2019adversarial}. 
	Specifically, when the perturbation size $r$ is small, the clean accuracy is relatively stable and the robustness dominates. 
	Thus the overall OOD performance increases as $r$ increases. 
	However, when $r$ is relatively large, a larger $r$ leads to worse clean accuracy though better robustness, and can lead to  worse overall OOD performance. 
	Thus, to achieve the optimal performance on OOD data, we should properly choose the perturbation size $r$ rather than continually increasing it. 
	\par
	The results for the effect of the number of training samples $n$ are in  Figures~\ref{fig:adv_l2_num}-\ref{fig:adv_linf_num} in Appendix \ref{app:number of training samples}. 
	The accuracy on OOD data increases with the number of training samples, which is consistent with our findings in Theorems \ref{thm:ood generalization upper bound} and \ref{thm:ood generalization upper bound l2}.  
	
	\subsubsection{Experiments on Natural Language Understanding}\label{sec:Experiments on Natural Language Understanding}
	\begin{table}[t!]
		\caption{Performance of  $\text{BERT}$ base model on NLU tasks using standard training and adversarial training under both $\ell_{2}$-norm and $\ell_{\infty}$-norm.}
		\label{tbl:adversarial training on text}
		\centering
		\scalebox{0.6}{
			{
				\begin{tabular}{c|cc|ccc}
					\hline
					Dataset     		  &    Train                    &    Test      &     Std     &    Adv-$\ell_{2}$   &  Adv-$\ell_{\infty}$ \\
					\hline
					\multirow{8}{*}{\texttt{STS-B}}       &    \multirow{2}{*}{Images}     &   Images     &    98.38    &      97.81       &         96.39     \\
					&                                &   MSRvid     &    89.52(-8.86)    &      \textbf{90.61}(-7.20)       &         90.09(-6.30)      \\
					\cline{2-6}
					&    \multirow{2}{*}{MSRvid}     &   MSRvid     &    98.55            &      97.45               &         96.65     \\
					&                                &   Images     &    \textbf{84.12}(-14.43)    &      83.63(-13.82)       &         83.11(-13.54)     \\
					\cline{2-6}
					&    \multirow{2}{*}{Headlines}  &   Headlines  &   97.59             &     96.73                &    95.75          \\
					&                                &   MSRpar     &   62.07(-35.52)     &     64.48(-32.25)        &    \textbf{67.67}(-28.08)           \\
					\cline{2-6}
					&    \multirow{2}{*}{MSRpar}     &   MSRpar     &   97.55             &     97.33                &    97.55               \\
					&                                &   Headlines  &   75.58(-21.97)     &     75.27(-22.06)        &    \textbf{76.12}(-21.43)        \\
					\hline
					\multirow{4}{*}{\texttt{SST-2}; \texttt{IMDb}}  &    \multirow{2}{*}{SST-2}&   SST-2      &   93.57     &      93.57       &  93.92             \\
					&                                &   IMDb       &   90.06(-3.51)     &      \textbf{91.50}(-2.07)       &  91.32(-2.60)                 \\
					\cline{2-6}
					&    \multirow{2}{*}{IMDb}       &   IMDb       &   94.36            &       94.88             &  94.68            \\
					&                                &   SST-2      &   87.00(-7.36)     &       \textbf{88.53}(-6.35)      &  88.07(-6.61)             \\
					\hline
					\multirow{3}{*}{\texttt{MNLI}}          & \multirow{3}{*}{Telephone}     &   Telephone  &  83.01           &     83.16       &  82.90 \\ 
					&                                &   Letters    &  82.45(-0.56)      &     83.76(+0.60)        &  \textbf{84.07}(+1.17) \\                        &                                & Face-to-face &  81.56(-1.45)      &     \textbf{83.59}(+0.43)        &  \textbf{83.59}(+0.69) \\
					\hline 
		\end{tabular}}}
	\end{table}
	
	\paragraph{Data.}
	As in \citep{hendrycks2020pretrained}, we use three pairs of datasets as the original and OOD datasets for NLU tasks. 
	\begin{itemize}
		\item \texttt{SST-2} \citep{socher2013recursive} and \texttt{IMDb} \citep{maas2011learning} are sentiment analysis datasets,
		with pithy expert and full-length lay movie reviews, respectively. 
		As in \citep{hendrycks2020pretrained}, we train on one dataset and evaluate on the other.
		Then we report the accuracy of a review's binary sentiment predicted by the model. 
		\item \texttt{STS-B} consists of texts from different genres and sources. 
		It requires the model to predict the textual similarity between pairs of sentences \citep{cer2017semeval}. 
		As in \citep{hendrycks2020pretrained}, we use four sources from two genres: MSRpar(news), Headlines (news); MSRvid(captions), Images(captions). The evaluation metric is Pearson's correlation coefficient. 
		\item \texttt{MNLI} is a textual entailment dataset which contains sentence pairs from different genres of text \citep{williams2018broad}. 
		We select training samples from two genres of transcribed text (Telephone and Face-to-Face) and the other of written text (Letters) as in \citep{hendrycks2020pretrained}, and report the classification accuracy.
	\end{itemize}
	
	\paragraph{Setup.} For a pre-trained  language model e.g., BERT, 
	each input token
	is encoded as a one-hot vector and then mapped into a continuous embedding space. 
	Instead of adding perturbations to the  one-hot vectors, 
	we construct adversarial samples in the word embedding space as in \citep{zhu2019freelb}.    
	\par
	The backbone model is the base version of $\text{BERT}$ \citep{devlin2019bert} which has been widely used in the NLP community. 
	We conduct AT in the fine-tuning stage to see its effectiveness on OOD generalization. 
	The models are trained by AdamW \citep{loshchilov2018decoupled} for 10 epochs. 
	Detailed hyperparameters are in Appendix \ref{app:hyp on adv}. 
	As in Section \ref{sec:Experiments on Image Classification}, 
	we compare Adv-$\ell_{2}$ and Adv-$\ell_{\infty}$ with Std.
	\paragraph{Main Results.}  
	In Table \ref{tbl:adversarial training on text},
	we report the results on in-distribution data and OOD data, and the gap between them (in the brackets) as in \citep{hendrycks2020pretrained}. The gaps in brackets are used to alleviate the interference by the general benefits from AT itself, since it was shown in \citep{zhu2019freelb} that AT can improve the generalization ability of model on in-distribution textual data. 
	\par
	As can be seen, adversarially trained models perform similarly or even better than standardly trained models on in-distribution data, while significantly better on OOD data especially for \texttt{MNLI}. The smaller gaps between in-distribution and OOD data support our finding that AT can be used to improve OOD generalization. 
	\subsection{Robust Pre-Trained Model Improves OOD Generalization}
	\label{expt:pretrain}
	Previously in Section \ref{sec:pretrain improves ood}, we theoretically show that 
	an input-robust pre-trained model gives a better initialization for fine-tuning on downstream task, in terms of OOD generalization. 
	In this section, we empirically show that this better initialization also leads to better OOD generalization after finetuning on image classification tasks.
	\begin{table*}[htbp!]
		\caption{Clean and corruption accuracy (\%) of ResNet34 on \texttt{CIFAR10-C} with no pre-training, standard pre-training, and adversarial pre-training under $\ell_{2}$-norm and $\ell_{\infty}$-norm.}
		\label{tbl:adversarial pre-training}
		\centering
		\scalebox{0.64}{
			{
				\begin{tabular}{l|c|c|ccc|cccc|cccc|cccc|c}
					\hline
					\multirow{2}{*}{Fine-Tuning}         & \multirow{2}{*}{Pre-Training} & \multirow{2}{*}{Clean} &              \multicolumn{3}{c|}{Noise}               &                     \multicolumn{4}{c|}{Blur}                     &                   \multicolumn{4}{c|}{Weather}                    & \multicolumn{4}{|c|}{Digital}                                     & \multirow{2}{*}{Avg.} \\
					&                               &                        &       Gauss       &       Shot       &    Impulse     &    Defocus     &     Glass      &     Motion     &      Zoom      &      Snow      &     Frost      &      Fog       &     Bright     &    Contrast    &    Elastic     &     Pixel      &      JPEG      &                      \\ \hline
					\multirow{4}{*}{Std}                 &              No               &     95.21     &       40.55       &      40.64       &     19.91      &     83.21      &     67.77      &     77.86      &     90.31      &     80.71      &     77.91      &     67.27      & \textbf{90.88} &     48.14      &     80.80      &     81.99      &     80.84      &        68.59         \\
					&              Std              &         94.65          &       41.25       &      42.91       &     22.58      &     85.19      &     71.03      &     78.49      &     \textbf{90.82}      &     82.78      & \textbf{80.04} &     67.66      &     89.97      &     45.70      & \textbf{83.89} &     82.03      & \textbf{80.99} &        69.69         \\
					&          Adv-$\ell_{2}$          &         95.06          &  \textbf{45.10}   &  \textbf{50.58}  &     27.57      &     87.27      & \textbf{72.95} &     79.08      &     90.57      & \textbf{83.29} &     77.25      &     65.41      &     90.15      & \textbf{50.41} &     82.81      &     78.01      &     78.95      &        70.63         \\
					&       Adv-$\ell_{\infty}$        &         94.30          &       40.94       &      46.42       & \textbf{29.39} & \textbf{87.60} &     70.79      & \textbf{81.44} & 90.69 &     82.77      &     79.28      & \textbf{68.84} &     89.19      &     45.29      &     83.59      & \textbf{83.13} &     80.86      &    \textbf{70.68}    \\ \hline
					\multirow{4}{*}{Adv-$l_{2}$}         &              No               &         94.43          &       56.82       &      60.58       &     29.34      &     85.44      &     71.67      &     81.80      &     90.08      &     83.68      &     80.37      &     61.68      &     89.96      &     34.76      &     83.76      &     85.16      &     83.24      &        71.89         \\
					&              Std              &         94.09          &       57.64       &      60.96       &     26.35      &     86.78      & \textbf{73.52} &     82.16      &     90.46      &     82.12      &     80.64      &     62.58      &     88.98      &     34.68      &     84.29      &     83.42      &     83.42      &        71.87         \\
					&        Adv-$\ell_{2}$         &         94.45          &  \textbf{58.98}   &  \textbf{62.99}  & \textbf{35.08} &     87.07      &     72.29      &     81.66      &     91.07      &     83.53      &     81.38      &     62.82      &     89.52      & \textbf{39.53} &     84.35      & \textbf{86.60} &     88.55      &        73.69         \\
					&      Adv-$\ell_{\infty}$      &     95.25     &       58.64       &      62.18       &     29.86      & \textbf{88.15} &     73.00      & \textbf{82.95} & \textbf{91.98} & \textbf{84.76} & \textbf{83.86} & \textbf{64.76} & \textbf{91.00} &     37.35      & \textbf{84.65} &     86.57      & \textbf{88.59} &    \textbf{73.89}    \\ \hline
					\multirow{4}{*}{Adv-$\ell_{\infty}$} &              No               &         92.46          &       80.91       &      81.69       &     52.00      &     79.58      &     80.94      &     77.42      &     80.21      &     80.57      &     79.35      &     35.41      &     83.15      &     18.06      &     83.51      &     87.79      &     87.44      &        72.54         \\
					&              Std              &         92.05          &       80.21       &      81.06       & \textbf{63.02} &     77.94      &     77.80      &     75.60      &     80.04      & \textbf{83.77} &     81.22      &     41.57      & \textbf{89.94} & \textbf{19.04} &     82.39      &     85.49      & \textbf{88.76} &        73.86         \\
					&        Adv-$\ell_{2}$         &     92.55     &  \textbf{81.96}   &  \textbf{82.86}  &     58.95      & \textbf{80.51} & \textbf{82.66} & \textbf{78.21} & \textbf{86.56} &     81.49      &     81.10      &     42.07      &     89.76      &     18.56      & \textbf{84.58} & \textbf{88.53} &     88.05      &    \textbf{75.06}    \\
					&      Adv-$\ell_{\infty}$      &         92.28          &       81.74       &      82.37       &     56.96      &     80.34      &     81.90      &     77.94      &     85.76      &     81.48      & \textbf{81.70} & \textbf{42.99} &     89.00      &     18.45      &     84.50      &     88.07      &     87.50      &        74.71         \\ \hline
		\end{tabular}}}
	\end{table*}

	\paragraph{Setup.} 
	Following \citep{salman2020adversarially}, we pre-train the model on \texttt{ImageNet} and then fine-tune it on \texttt{CIFAR10}.
	To get an input-robust model in the pre-training stage, we consider adversarially pre-train the model. 
	We compare adversarial pre-training (Adv-$\ell_{2}$ and Adv-$\ell_{\infty}$) against standard  pre-training and no pre-training
	as in Section \ref{sec:Experiments on Image Classification}.
	In the fine-tuning stage, the data from \texttt{CIFAR10} are resized to $224\times224$ as in \citep{salman2020adversarially}. We also compare stadard fine-tuning and adversarial fine-tuning under both $\ell_{2}$- and $\ell_{\infty}$-norm. After fine-tuning, we verify the OOD generalization on \texttt{CIFAR10-C}. The other settings are the same as Section \ref{sec:Experiments on Image Classification}.
	
	\paragraph{Main Results.} 
	The results 
	are shown in Table \ref{tbl:adversarial pre-training}.   
	As can be seen, 
	for all  fine-tuning methods, 
	adversarially pre-trained models consistently achieve better performance on OOD data 
	than standardly pre-trained models or models without pre-training. 
	Thus, the initialization from the adversarially pre-trained input-robust model leads to
	better OOD generalization on downstream tasks after fine-tuning. In addition, standard pre-training slightly improves the OOD generalization compared with no pre-training
	when we conduct Adv-$\ell_{\infty}$ fine-tuning or standard fine-tuning.
	We also observe that for all four kinds of pre-training, 
	adversarial fine-tuning under $\ell_{\infty}$-norm has better performance than $\ell_{2}$-norm.
	This agrees with the observations in Section~\ref{sec:Experiments on Image Classification}. 
	Note that the results of 
	models without pre-training are different from those in Table \ref{tbl:adversarial training on image} due to the resized input data.
	
	\subsection{Discussion}
	\label{sec:discussion}
	
	It is shown in \citep{hendrycks2020pretrained}  that the language model BERT~\citep{devlin2019bert} pre-trained on large corpus generalizes well  on downstream OOD data, and RoBERTa~\citep{liu2019roberta} pre-trained with more training data and updates generalizes even better than BERT.
	We speculate this is because (i) sufficient pre-training obtains an input-robust model as discussed in Section \ref{sec:pretrain improves ood}, and this better-initialization leads to better OOD generalization after finetuning as observed in Section~\ref{expt:pretrain}; and (ii) the objective of masked language modeling predicts the masked (perturbed) input tokens and enables a certain amount of input-robustness.
	
	In this section, we empirically show 
	that the model initialized by BERT has higher input-robustness than a randomly initialized model. 
	Besides, compared with BERT, RoBERTa is pre-trained with more training samples and updating steps
	and the model initialized by it is more robust to input perturbations. 

	\paragraph{Setup.} We compare the input-robustness of the base versions of pre-trained language model BERT
	\citep{devlin2019bert} and  RoBERTa
	\citep{liu2019roberta},
	against a randomly initialized model whose 
	parameters 
	are independently sampled from $\cN(0, 0.02^{2})$ \citep{wolf2020transformers}.  
	The three models have exactly the same structure.
	Compared with $\text{BERT}$, $\text{RoBERTa}$ is pre-trained on a larger corpus for more updating steps.  
	Experiments are performed on \texttt{MRPC} and \texttt{CoLA}  datasets from the GLUE  benchmark \citep{wang2018glue},
	with 3.7k and 8.5k training samples, respectively. 
	Similar as Section \ref{sec:Experiments on Natural Language Understanding}, 
	we add adversarial perturbations in the embedding space.
	We use 3 steps of $\ell_{\infty}$-norm attack to construct perturbation.
	The perturbation size is 0.001 and the perturbation step size 0.0005.  
	Since the the last classification layer of $\text{BERT}$ or $\text{RoBERTa}$ is randomly initialized  during downstream task fine-tuning, 
	we study the difference in the hidden states of the last Transformer layer 
	before the classification layer. 
	Denote $\textbf{h}, \textbf{h}_{\text{per}}\in \bbR^{128\times 768}$ as the hidden states from the original input and the adversarially perturbed input, respectively. We use the $\ell_{2}$-norm $\|\textbf{h}_{\text{per}} - \textbf{h}\|$ and the cosine
	similarity $\langle\textbf{h}, \textbf{h}_{\text{per}}\rangle/(\|\textbf{h}\|\|\textbf{h}_{\text{per}}\|)$ to measure the difference. The cosine similarity is used to alleviate the potential interference caused by the scale of $\textbf{h}$ over different pre-trained models. The results are in Figure \ref{fig:comp}.
	\begin{figure}[htbp]	
		\centering
		\subfloat{
			\includegraphics[width=0.3\textwidth]{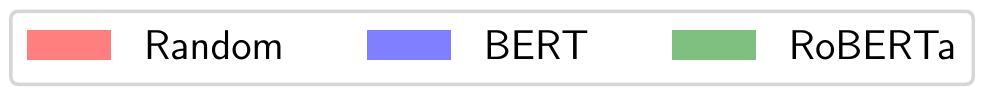}}
		\vspace{-0.1in}
		\\
		\addtocounter{subfigure}{-1}
		\subfloat[\texttt{MRPC.}\label{fig:mrpc_norm}]{
			\includegraphics[width=0.24\textwidth]{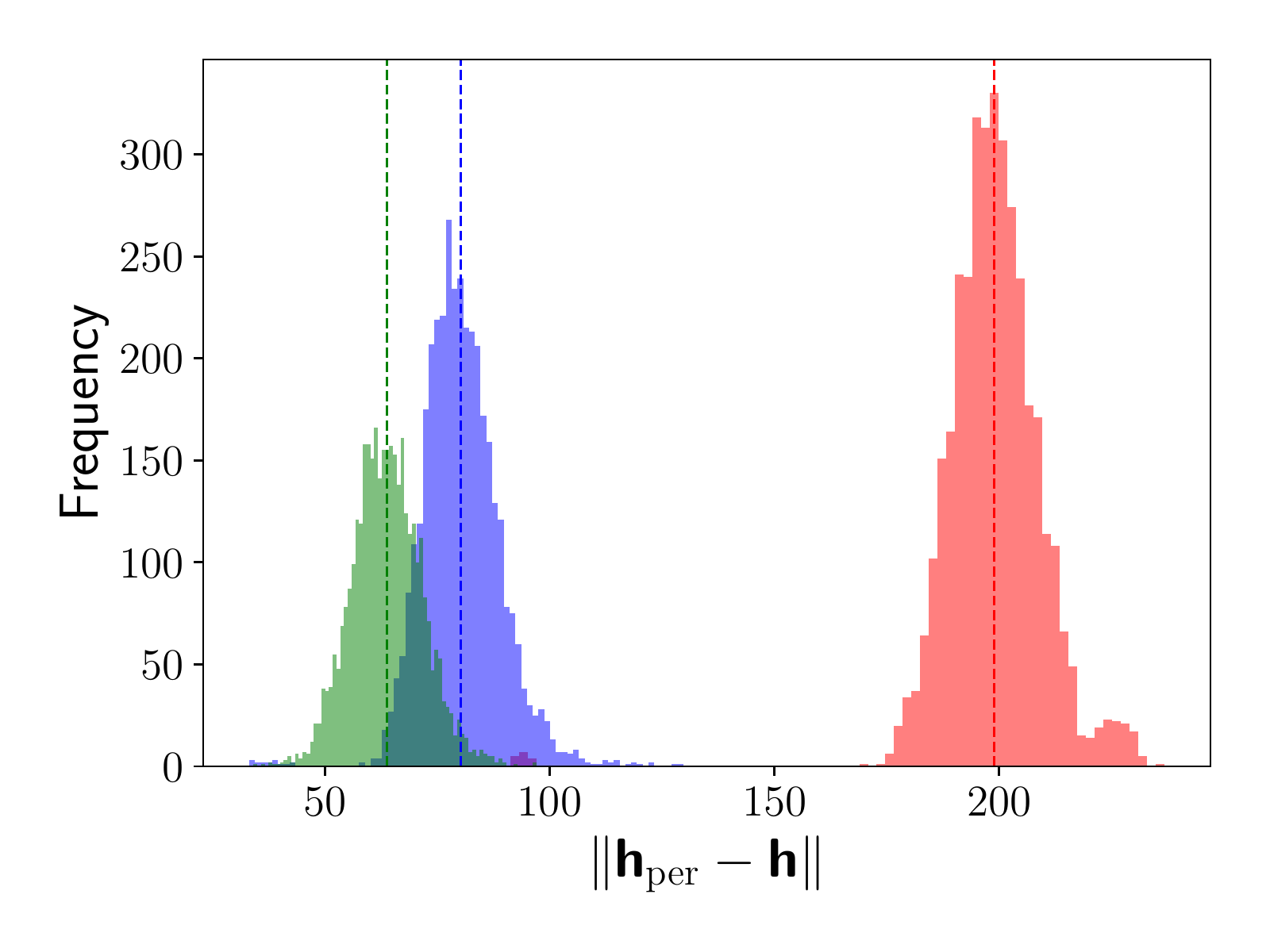}}
		\subfloat[\texttt{CoLA.}\label{fig:cola_norm}]{
			\includegraphics[width=0.24\textwidth]{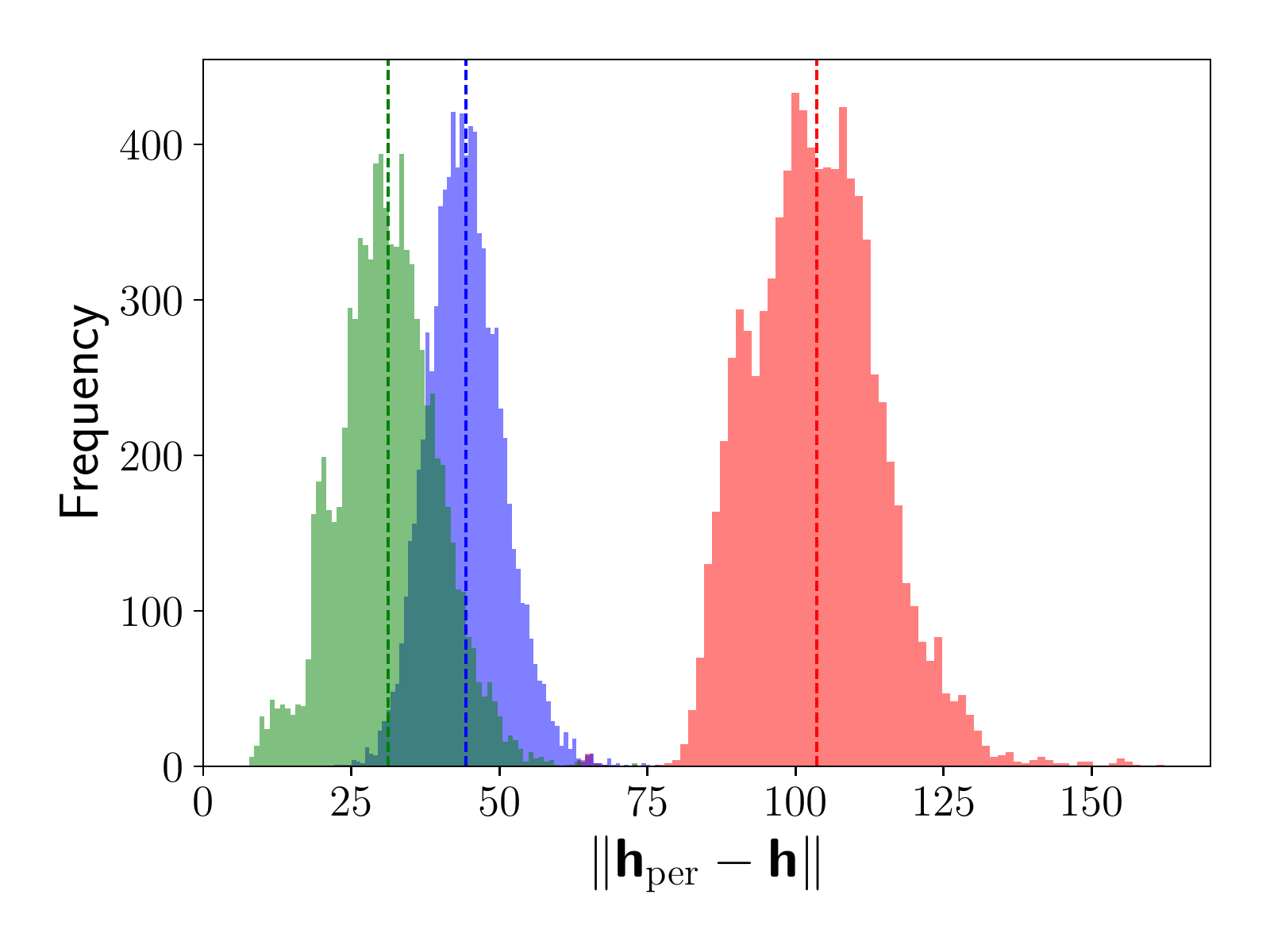}}\\
		\subfloat[\texttt{MRPC.}\label{fig:mrpc}]{
			\includegraphics[width=0.24\textwidth]{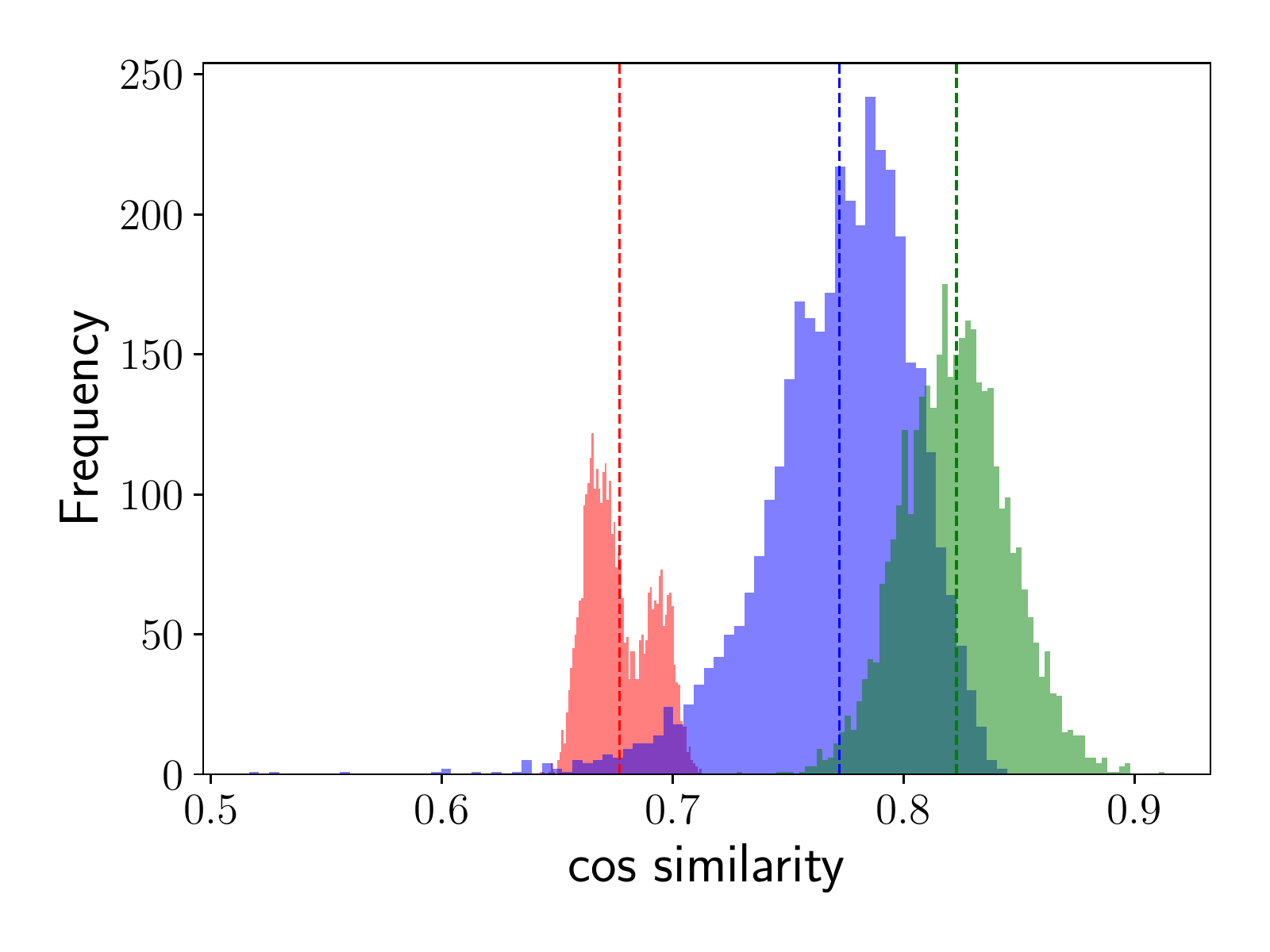}}
		\subfloat[\texttt{CoLA.}\label{fig:cola}]{
			\includegraphics[width=0.24\textwidth]{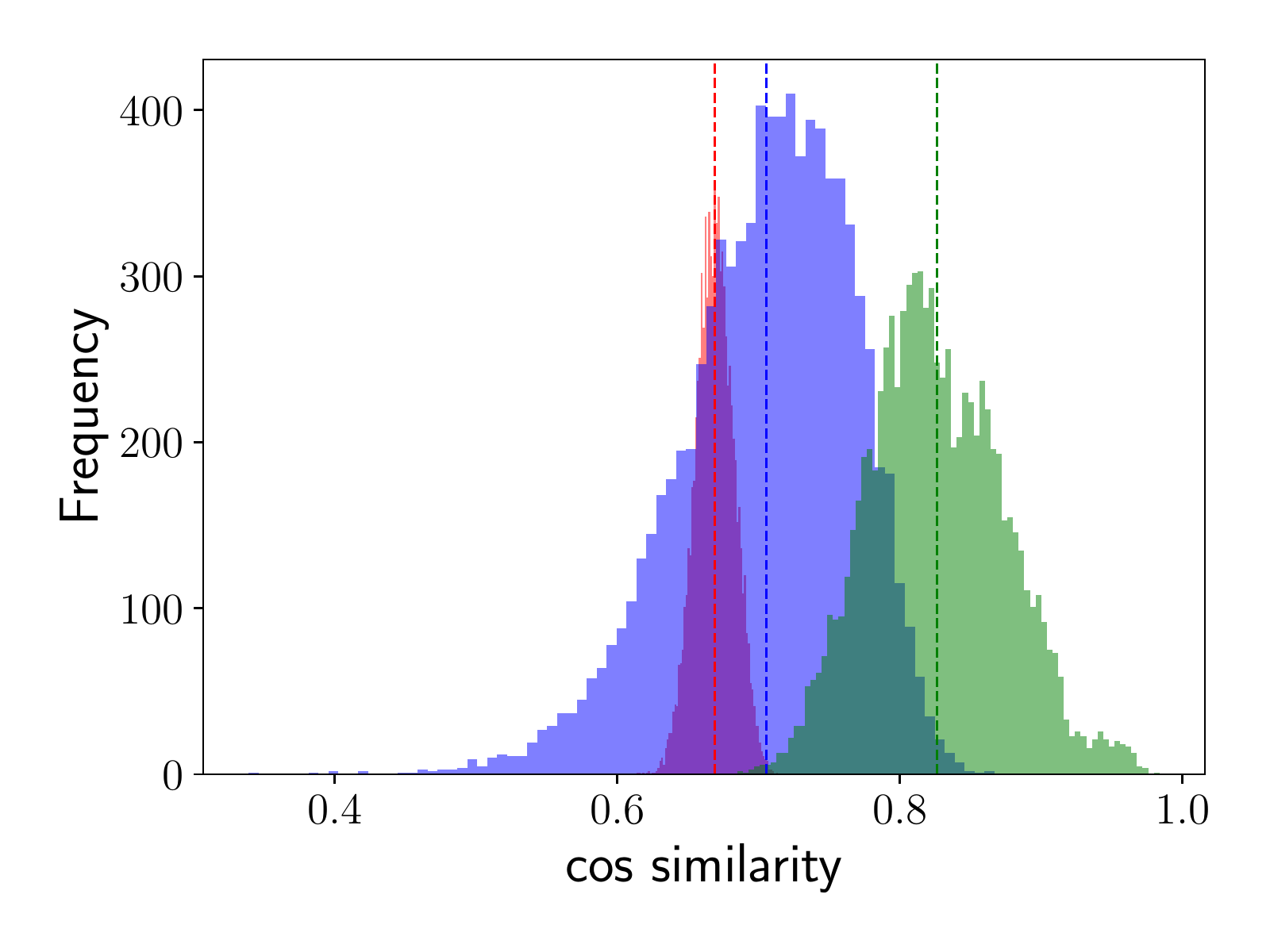}}
		\vspace{-0.05in}
		\caption{Difference of hidden states in the last Transformer layer between the original input and adversarially perturbed input, measured by $\ell_{2}$-norm and cosine similarity. The models compared are randomly initialized model, $\text{BERT}$, and $\text{RoBERTa}$.
		The datasets used are \texttt{MRPC} and \texttt{CoLA} from the GLUE benchamrk. 
		The dashed lines in the upper and bottom figures are respectively the mean of $\|\textbf{h}_{\text{per}} - \textbf{h}\|$ and $\langle\textbf{h}, \textbf{h}_{\text{per}}\rangle/(\|\textbf{h}\|\|\textbf{h}_{\text{per}}\|)$ from all samples in a dataset.} 
		\label{fig:comp}
		\vspace{-0.1in}
	\end{figure}
	
	\paragraph{Main Results.} 
	The histograms of $\|\textbf{h}_{\text{per}} - \textbf{h}\|$ and $\langle\textbf{h}, \textbf{h}_{\text{per}}\rangle/(\|\textbf{h}\|\|\textbf{h}_{\text{per}}\|)$ from all training samples 
	in MRPC and CoLA 
	are shown in Figure \ref{fig:mrpc_norm}, \ref{fig:cola_norm} and Figure \ref{fig:mrpc}, \ref{fig:cola}, respectively. 
	We can observe that (i) $\text{BERT}$ is more robust than the randomly initialized model, 
	indicating that the masked language modeling objective and 
	sufficient  pre-training  improves input-robustness, and  leads to better ood performance after fine-tuning; 
	(ii) $\text{RoBERTa}$ is more input-robust compared with $\text{BERT}$, which implies that
	that more training samples and updating steps in the pre-training stage improve the input-robustness.
	Combining  with that a more input-robust pre-trained model also leads to better OOD generalization on downstream tasks empirically (Section~\ref{sec:pretrain improves ood}),
	the above observations (i) and (ii) may also explain the finding in \citep{hendrycks2020pretrained} that $\text{BERT}$ generalizes worse on downstream OOD data than $\text{RoBERTa}$, but much better than the model without pretraining. 
	
	\section{Conclusion}  
	In this paper, we explore the relationship between the robustness and OOD generalization of a model.
	We theoretically show that the input-robust model can generalize well on OOD data
	under the definition of OOD generalization via Wasserstein distance. Thus, for a model trained from scratch, 
	we suggest using adversarial training to improve the input-robustness of the model which results in better OOD generalization. 
	Under mild conditions, we show that the excess risk on OOD data of an adversarially trained model is upper bounded by $\tilde{\cO}(1/\sqrt{n} + 1/T)$. 
	For the framework of first pre-training and then fine-tuning, 
	we show that a pre-trained input-robust model provides a theoretically good initialization which empirically improves OOD generalization after fine-tuning. Various experiments on CV and NLP verify our theoretical findings.  
	\bibliography{reference}
	\bibliographystyle{icml2021}
	\newpage
	\appendix
	\onecolumn
\section{Proofs for Section \ref{sec:Learning Robust Model Results in Better OOD Generalization}}\label{app:proof in Learning Robust Model Results in Better OOD Generalization}
\subsection{Proofs for Section \ref{sec:Robustness Corresponds with Better OOD Generalization}}\label{app:proof in Robustness Corresponds with Better OOD Generalization}
\subsubsection{Proof of Theorem \ref{thm:ood generalization upper bound}}
To start the proof of Theorem \ref{thm:ood generalization upper bound}, we need the following lemma.
\begin{lemma}
	\label{lem:equivalence}
	For any $\bw$ and $r$, we have 
	\begin{equation}
	\small
	\sup_{P\in B_{\sW_{\infty}}(P_{0}, r)}R_{P}(\bw) = \mE_{P_{0}}\left[\sup_{\|\bdelta\|_{\infty}\leq r}f(\bw, \bx + \bdelta)\right].
	\end{equation}
\end{lemma}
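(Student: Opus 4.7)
My plan is to prove both inequalities separately, using the coupling characterization of $\sW_\infty$ on one side and a pushforward construction on the other.

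For the upper bound $\sup_{P\in B_{\sW_{\infty}}(P_{0}, r)}R_{P}(\bw) \leq \mE_{P_{0}}\bigl[\sup_{\|\bdelta\|_{\infty}\leq r}f(\bw, \bx + \bdelta)\bigr]$, I would take an arbitrary $P\in B_{\sW_{\infty}}(P_{0}, r)$ and invoke the standard fact that the $\infty$-Wasserstein constraint is equivalent to the existence of a coupling $\pi\in(P_{0},P)$ with $\|\bu-\bv\|_{\infty}\leq r$ holding $\pi$-almost surely (this follows from $\sW_{\infty}=\lim_{p\to\infty}\sW_{p}$ together with compactness of the support $\cX$). Writing $\bv=\bu+\bdelta$ with $\|\bdelta\|_{\infty}\leq r$ almost surely, I have
\begin{equation*}
R_{P}(\bw)=\mE_{(\bu,\bv)\sim\pi}[f(\bw,\bv)]\leq \mE_{\bu\sim P_{0}}\Bigl[\sup_{\|\bdelta\|_{\infty}\leq r}f(\bw,\bu+\bdelta)\Bigr],
\end{equation*}
and taking the supremum over $P$ on the left yields one direction.

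For the matching lower bound, I would exhibit a distribution that (nearly) saturates the inequality. Define $g(\bx):=\sup_{\|\bdelta\|_{\infty}\leq r}f(\bw,\bx+\bdelta)$. Because $f(\bw,\cdot)$ is continuous and the $\ell_\infty$-ball is compact, for every $\eta>0$ and every $\bx$ there is some $\bdelta(\bx)$ with $\|\bdelta(\bx)\|_{\infty}\leq r$ such that $f(\bw,\bx+\bdelta(\bx))\geq g(\bx)-\eta$. By the Kuratowski–Ryll-Nardzewski measurable selection theorem applied to the closed-valued multifunction $\bx\mapsto\{\bdelta:\|\bdelta\|_\infty\leq r,\ f(\bw,\bx+\bdelta)\geq g(\bx)-\eta\}$, this selection can be chosen measurably. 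Define the measurable map $T_\eta(\bx)=\bx+\bdelta(\bx)$ and let $P^{*}_\eta:=(T_\eta)_{\#}P_{0}$. The coupling $\pi:=(\mathrm{id},T_\eta)_{\#}P_{0}$ satisfies $\|\bu-T_\eta(\bu)\|_{\infty}\leq r$ $P_{0}$-a.s., hence $\sW_{\infty}(P_{0},P^{*}_\eta)\leq r$. Therefore
\begin{equation*}
\sup_{P\in B_{\sW_{\infty}}(P_{0},r)}R_{P}(\bw)\geq R_{P^{*}_\eta}(\bw)=\mE_{P_{0}}[f(\bw,\bx+\bdelta(\bx))]\geq \mE_{P_{0}}[g(\bx)]-\eta,
\end{equation*}
and letting $\eta\downarrow 0$ gives the other direction.

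The only delicate point is the measurable selection, which is the main obstacle; everything else is a direct application of the coupling definition together with a pushforward construction. An alternative that avoids the measurable selection theorem is to discretize: approximate $\bdelta(\bx)$ by a locally constant selection on a fine partition of $\cX$ (using continuity of $f$ and compactness of $\cX$), which is automatically measurable, and then take the mesh to zero. I would mention both routes and present the Kuratowski–Ryll-Nardzewski argument as the cleanest one.
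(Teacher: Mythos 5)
Your proof is correct and follows essentially the same route as the paper's: one direction via the almost-sure boundedness of an optimal $\sW_{\infty}$ coupling, the other via pushing $P_{0}$ forward under a (near-)maximizing perturbation map. The only difference is that you treat the measurability of the selection carefully (via Kuratowski--Ryll-Nardzewski and an $\eta$-approximation), whereas the paper simply asserts the existence of the argmax map $T_{r}^{\bw}$ from continuity; your version is the more rigorous one.
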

\begin{proof}
	Let $T_{r}^{\bw}(\bx) = \bx + \arg\max_{\{\bdelta: \|\bdelta\|_{\infty} \leq r\}}f(\bw, \bx + \bdelta)$ with $\bx$ is an input data. The existence of $T_{r}^{\bw}(\bx)$ is guaranteed by the continuity of $f(\bw, \bx)$. $P_{r}$ is the distribution of $T_{r}^{\bw}(\bx)$ with $\bx\sim P_{0}$. Then
	\begin{equation}
	\small
	\mE_{P_{0}}\left[\sup_{\|\bdelta\|_{\infty}\leq r}f(\bw, \bx + \bdelta)\right] = \mE_{P_{r}}[f(\bw, \bx)].
	\end{equation}
	Since
	\begin{equation}
	\small
	\sW_{\infty}(P_{0}, P_{r}) \leq \mE_{P_{0}}[\|\bx - T_{r}^{\bw}(\bx)\|_{\infty}] \leq r,
	\end{equation}
	we have 
	\begin{equation}
	\small
	\mE_{P_{0}}\left[\sup_{\|\bdelta\|_{\infty}\leq r}f(\bw, \bx + \bdelta)\right] \leq \sup_{P\in B_{\sW_{\infty}}(P_{0}, r)}R_{P}(\bw).
	\end{equation}
	On the other hand, let $P^{*}_{r}\in\arg\max_{P\in B_{\sW_{\infty}}(P_{0}, r)}R_{P}(\bw)$. Due to Kolmogorov's theorem, $P^{*}_{r}$ can be distribution of some random vector $\bz$, due to the definition of $\sW_{\infty}$-distance, we have $\|\bz - \bx\|_{\infty}\leq r$ holds almost surely. Then we conclude
	\begin{equation}
	\small
	\sup_{P\in B_{\sW_{\infty}}(P_{0}, r)}R_{P}(\bw) = R_{P^{*}_{r}}(\bw) = \mE_{P^{*}_{r}}[f(\bw, \bz)] \leq \mE_{P_{0}}\left[\sup_{\|\bdelta\|_{\infty}\leq r}f(\bw, \bx + \bdelta)\right].
	\end{equation}
	Thus, we get the conclusion.   
\end{proof}
This lemma shows that the distributional perturbation measured by $\sW_{\infty}$-distance is equivalent to input perturbation. Hence we can study $\sW_{\inf}$-distributional robustness through $\ell_{\inf}$-input-robustness. The basic tool for our proof is the covering number, which is defined as follows. 
\begin{definition}\citep{wainwright2019}
	A $r$-cover of $(\cX, \|\cdot\|_{p})$ is any point set $\{\bu_{i}\}\subseteq\cX$ such that for any $\bu\in\cX$, there exists $\bu_{i}$ satisfies $\|\bu - \bu_{i}\|_{p}\leq r$. The covering number $\cN(r, \cX, \|\cdot\|_{p})$ is the cardinality of the smallest $r$-cover.  
\end{definition}
Now we are ready to give the proof of Theorem \ref{thm:ood generalization upper bound} which is motivated by \citep{xu2012robustness}.
\begin{proof}[Proof of Theorem \ref{thm:ood generalization upper bound}]
	We can construct a $r$-cover to $(\cX, \|\cdot\|_{2})$ then $\cN(r, \cX, \|\cdot\|_{2})\leq (2d_{0})^{(2D/r^{2} + 1)} = N$, because the $\cX$ can be covered by a polytope with $\ell_{2}$-diameter smaller than $2D$ and $2d_{0}$ vertices, see \citep{vershynin2018} Theorem 0.0.4 for details. Due to the geometrical structure, we have $\cN(r, \cX, \|\cdot\|_{\infty})\leq (2d_{0})^{(2D/r^{2} + 1)}$. Then, there exists $(C_{1}, \cdots, C_{N})$ covers $(\cX, \|\cdot\|_{\infty})$ where $C_{i}$ is disjoint with each other, and $\|\bu - \bv\|_{\infty} \leq r$ for any $\bu, \bv\in C_{i}$. This can be constructed by $C_{i} = \hat{C}_{i}\bigcap\left(\bigcup_{j=1}^{i-1}\hat{C}_{j}\right)^{c}$ with $(\hat{C}_{1},\cdots, \hat{C}_{N})$ covers $(\cX, \|\cdot\|_{\infty})$, and the diameter of each $\hat{C}_{i}$ is smaller than $r$ since $\cN(r, \cX, \|\cdot\|_{\infty}) \leq N$. Let $A_{j} = \{\bx_{i}: \bx_{i}\in C_{j}\}$, and $|A_{j}|$ is the cardinality of $A_{j}$. Due to Lemma \ref{lem:equivalence}, we have 
	\begin{equation}
	\label{eq:generalization decomposition}
	\small
	\begin{aligned}
	\left|\sup_{P\in B_{\sW_{\infty}}(P_{0}, r_{0})}R_{P}(\bw) - R_{P_{n}}(\bw) \right| & = \left|\mE_{P_{0}}\left[\sup_{\|\bdelta\|_{\infty}\leq r_{0}}f(\bw, \bx + \bdelta)\right] -  R_{P_{n}}(\bw)\right| \\
	& = \left|\sum\limits_{j=1}^{N}\mE_{P_{0}}\left[\sup_{\|\bdelta\|_{\infty}\leq r_{0}}f(\bw, \bx + \bdelta)\mid \bx\in C_{j}\right]P_{0}(C_{j}) - R_{P_{n}}(\bw)\right| \\
	& \leq \left|\sum\limits_{j=1}^{N}\mE_{P_{0}}\left[\sup_{\|\bdelta\|_{\infty}\leq r_{0}}f(\bw, \bx + \bdelta)\mid \bx\in C_{j}\right]\frac{|A_{j}|}{n} - \frac{1}{n}\sum\limits_{i=1}^{n}f(\bw, \bx_{i})\right|\\
	& + \left|\sum\limits_{j=1}^{N}\mE_{P_{0}}\left[\sup_{\|\bdelta\|_{\infty}\leq r_{0}}f(\bw, \bx + \bdelta)\mid \bx\in C_{j}\right]\left(\frac{|A_{j}|}{n} - P_{0}(C_{j})\right)\right| \\
	& \leq \left|\frac{1}{n}\sum\limits_{j=1}^{N}\sum\limits_{\bx_{i}\in C_{j}}\sup_{\bx\in C_{j} + B_{\infty}(\textbf{0}, r_{0})}|f(\bw, \bx) - f(\bw, \bx_{i})|\right| + M\sum\limits_{j=1}^{N}\left|\frac{|A_{j}|}{n} - P_{0}(C_{j})\right| \\
	& \overset{a}{\leq} \frac{1}{n}\sum\limits_{i=1}^{n}\sup_{\|\bdelta\|_{\infty}\leq 2r}\left|f(\bw, \bx_{i} + \bdelta) - f(\bw, \bx_{i})\right| + M\sum\limits_{j=1}^{N}\left|\frac{|A_{j}|}{n} - P_{0}(C_{j})\right| \\ 
	& \leq \epsilon + M\sum\limits_{j=1}^{N}\left|\frac{|A_{j}|}{n} - P_{0}(C_{j})\right|.
	\end{aligned}
	\end{equation}
	Here $a$ is due to $C_{j} + B_{\infty}(\textbf{0}, r) \subseteq B_{\infty}(\bx_{i}, 2r)$ when $\bx_{i}\in C_{j}$, since $\ell_{\infty}$-diameter of $C_{j}$ is smaller than $r$. The last inequality is due to $(2r, \epsilon, P_{n}, \infty)$-robustness of $f(\bw, \bx)$. On the other hand, due to Proposition A6.6 in \citep{van2000weak}, we have
	\begin{equation}
	\label{eq:multinomial concentration}
	\small
	\bbP\left(\sum\limits_{j=1}^{N}\left|\frac{|A_{j}|}{n} - P_{0}(C_{j})\right|\geq \theta\right) \leq 2^{N}\exp\left(\frac{-n\theta^{2}}{2}\right).
	\end{equation}
	Combine this with \eqref{eq:generalization decomposition}, due to the value of $N$, we get the conclusion.        
\end{proof}
\subsubsection{Proof of Theorem \ref{thm:ood generalization upper bound l2}}
There is a little difference of proving Theorem \ref{thm:ood generalization upper bound l2} compared with Theorem \ref{thm:ood generalization upper bound}. Because the out-distribution $P$ constrained in $B_{\sW_{\infty}}(P_{0}, r)$ only correspond with OOD data that contained in a $\ell_{\infty}$-ball of in-distribution data almost surely, see Lemma \ref{lem:equivalence} for a rigorous description. Hence, we can utilize $\ell_{\infty}$-robustness of model to derive the OOD generalization under $\sW_{\infty}$-distance by Theorem \ref{thm:ood generalization upper bound}. 
However, in the regime of $\sW_{2}$-distance, roughly speaking, the transformed OOD data $T_{r}^{\bw}(\bx)$ is contained in a $\ell_{2}$-ball of $\bx$ in expectation. Thus, Lemma \ref{lem:equivalence} is invalid under $\sW_{2}$-distance. 
\par
To discuss the OOD generalization under $\sW_{2}$-distance, we need to give a delicate characterization to the distribution $P\in B_{\sW_{2}}(P_{0}, r)$. First, we need the following lemma.
\begin{lemma}\label{lem:optimal}
	For any $r$ and $\bw$, let $P^{*}_{r}\in\arg\max_{P\in B_{\sW_{2}}(P_{0}, r)}R_{P}(\bw)$. Then, there exists a mapping $T_{r}^{\bw}(\bx)$ such that  $T_{r}^{\bw}(\bx)\sim P^{*}_{r}$ with $\bx\sim P_{0}$.  
\end{lemma}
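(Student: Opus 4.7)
The plan is to break the lemma into two parts: first, show that the supremum in the definition of $P^{*}_{r}$ is actually attained, and second, realize $P^{*}_{r}$ as the pushforward of $P_{0}$ under some mapping $T_{r}^{\bw}$.

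For existence of $P^{*}_{r}$, I would argue by compactness. Since $\cX$ is compact, the set $\cP(\cX)$ of Borel probability measures on $\cX$ is weakly compact by Prokhorov's theorem. The $\sW_{2}$-ball $B_{\sW_{2}}(P_{0},r)$ is weakly closed because $\sW_{2}$ is lower semi-continuous with respect to weak convergence on a compact Polish space; thus $B_{\sW_{2}}(P_{0},r)$ is weakly compact. Combined with the continuity and boundedness of $f(\bw,\cdot)$, the map $P\mapsto R_{P}(\bw)=\mE_{P}[f(\bw,\bx)]$ is continuous with respect to weak convergence. A continuous function on a compact set attains its maximum, giving the existence of $P^{*}_{r}$.

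For the construction of $T_{r}^{\bw}$, I would take any optimal coupling $\pi^{*}\in\Pi(P_{0},P^{*}_{r})$ attaining $\sW_{2}(P_{0},P^{*}_{r})\le r$ (the existence of such $\pi^{*}$ is standard on a Polish space with a lower semi-continuous cost). Since $\cX$ is a Polish space, Rokhlin's disintegration theorem yields a Markov kernel $K(\bx,\cdot)$ on $\cX$ such that $\pi^{*}(d\bx,d\by)=P_{0}(d\bx)\,K(\bx,d\by)$. I then define $T_{r}^{\bw}(\bx)$ as a random element of $\cX$ with conditional distribution $K(\bx,\cdot)$ given $\bx$. By construction, when $\bx\sim P_{0}$ the law of $T_{r}^{\bw}(\bx)$ equals the second marginal of $\pi^{*}$, which is $P^{*}_{r}$, as required. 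As the authors already noted in the proof of Lemma~\ref{lem:equivalence}, this interpretation of ``mapping'' as a random variable built on the same probability space as $\bx$ is exactly the one used elsewhere (e.g., their appeal to Kolmogorov's theorem in the $\sW_{\infty}$ case).

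The main technical subtlety is not the disintegration step (which is routine on Polish spaces) but rather verifying that $B_{\sW_{2}}(P_{0},r)$ is weakly closed so that the supremum is attained; this relies on lower semi-continuity of $\sW_{2}$, which in turn uses compactness of $\cX$ to avoid tightness issues. Once the maximizer exists, extracting a kernel that realizes it as a pushforward is essentially bookkeeping. A minor alternative, avoiding disintegration, would be to invoke the Borel isomorphism theorem to produce any measurable $T$ with $T_{\#}P_{0}=P^{*}_{r}$, but using the optimal coupling has the added advantage that the resulting $T_{r}^{\bw}$ is close to the identity in the $\sW_{2}$ sense, which is what subsequent arguments in the paper will need.
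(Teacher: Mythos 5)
Your proof is correct but proceeds by a genuinely different route from the paper's. The authors invoke the Lagrangian duality of Sinha et al.\ (their Theorem 6) to rewrite $\sup_{P\in B_{\sW_{2}}(P_{0},r)}R_{P}(\bw)$ as $\inf_{\lambda\ge 0}\sup_{P,\pi}\bigl(\int f(\bw,\bx)-\lambda\|\bx-\bz\|^{2}\,d\pi+\lambda r\bigr)$, and then establish by a sandwich argument (passing through regular conditional distributions and then restricting to deterministic maps $\bx(\cdot)$) that the inner supremum is attained by a coupling of the form $(T_{r}^{\bw}(\bz),\bz)$ with $T_{r}^{\bw}$ the pointwise argmax of $f(\bw,\bx)-\lambda\|\bx-\bz\|^{2}$. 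You instead prove attainment of the outer supremum from weak compactness of $\cP(\cX)$ and lower semi-continuity of $\sW_{2}$, and then realize $P^{*}_{r}$ by disintegrating an optimal coupling into a Markov kernel. The two constructions differ in kind: the paper's $T_{r}^{\bw}$ is a deterministic measurable function while yours is a randomized map, but the only properties used downstream in the proof of Theorem~\ref{thm:ood generalization upper bound l2} are the pushforward identity $T_{r}^{\bw}(\bx)\sim P_{r}^{*}$ and the coupling-cost bound $\int\|T_{r}^{\bw}(\bx)-\bx\|^{2}\,dP_{0}(\bx)\le\sW_{2}(P_{0},P^{*}_{r})^{2}\le r^{2}$ that feeds Markov's inequality, and your explicit choice of an \emph{optimal} coupling delivers the cost bound directly, whereas the paper's argument leaves that step implicit via strong duality. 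Your route is also somewhat more robust (it does not rely on single-valuedness of the argmax or on the duality hypotheses, and it proves existence of $P^{*}_{r}$ rather than assuming it); the paper's route buys a concrete, deterministic description of the worst-case transport map, which is what motivates the ``cost-regularized argmax'' intuition used elsewhere in the paper.
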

\begin{proof}
	The proof of Theorem 6 in \citep{sinha2018certifying} shows that 
	\begin{equation}
	\small
	R_{P^{*}_{r}}(\bw) = \sup_{P\in B_{\sW_{2}}(P_{0}, r)}R_{P}(\bw) = \inf_{\lambda\geq 0}\sup_{P, \pi\in (P, P_{0})}\left(\int_{\cX\times\cX}f(\bw, \bx) - \lambda\|\bx - \bz\|^{2}d\pi(\bx, \bz) + \lambda r\right).
	\end{equation}
	We next show that the supremum over $\pi$ in the last equality is attained by the joint distribution $(T_{r}^{\bw}(\bx), \bx)$, which implies our conclusion. For any $\lambda > 0$, we have 
	\begin{equation}
	\small
	\sup_{P, \pi\in (P, P_{0})}\left(\int_{\cX\times\cX}f(\bw, \bx) - \lambda\|\bx - \bz\|^{2}d\pi(\bx, \bz)\right) \leq \int_{\cX}\sup_{\bx}\left(f(\bw, \bx) - \lambda\|\bx - \bz\|^{2}\right)dP_{0}(\bz),
	\end{equation}
	due to the supremum in the left hand side is taken over $P$ and $\pi$. On the other hand, let $P(\cdot\mid \bz)$ and $\bx(\cdot)$ respectively be the regular conditional distribution on $\cX$ with $\bz$ given and the function on $\cX$. Since $P(\cdot\mid \bz)$ is measurable,  
	\begin{equation}
	\small
	\begin{aligned}
	\sup_{P, \pi\in (P, P_{0})}\left(\int_{\cX\times\cX}f(\bw, \bx) - \lambda\|\bx - \bz\|^{2}d\pi(\bx, \bz)\right) & \geq \sup_{P(\cdot\mid \bz)}\left(\int_{\cX\times\cX}f(\bw, \bx) - \lambda\|\bx - \bz\|^{2}dP(\bx\mid \bz)dP_{0}(\bz)\right) \\
	& \geq \sup_{\bx(\cdot)}\left(\int_{\cX}f(\bw, \bx(\bz)) - \lambda\|\bx(\bz) - \bz\|^{2}dP_{0}(\bz)\right) \\
	& \geq \int_{\cX}\sup_{\bx}\left(f(\bw, \bx) - \lambda\|\bx - \bz\|^{2}\right)dP_{0}(\bz).
	\end{aligned}
	\end{equation}
	Thus, we get the conclusion.  
\end{proof}
\begin{proof}[Proof of Theorem \ref{thm:ood generalization upper bound l2}]
	Similar to the proof of Theorem \ref{thm:ood generalization upper bound}, we can construct a disjoint cover $(C_{1}, \cdots, C_{N})$ to $(\cX, \|\cdot\|_{2})$ such that $N\leq (2d_{0})^{(2\epsilon^{2}D/r^{2} + 1)}$, and the $l_{2}$-diameter of each $C_{i}$ is smaller than $r/\epsilon$. Let $P^{*}_{r}\in\arg\max_{P\in B_{\sW_{2}}(P_{0}, r)}R_{P}(\bw)$, by Lemma \ref{lem:optimal}, we have 
	\begin{equation}
	\label{eq:sup bound}
	\small
	\begin{aligned}
	\sup_{P\in B_{\sW_{2}}(P_{0}, r)}R_{P}(\bw) & = R_{P^{*}_{r}}(\bw) \\
	& = \mE_{P_{0}}\left[f(\bw, T_{r}^{\bw}(\bx))\right] \\
	& = \mE_{P_{0}}\left[f(\bw, T_{r}^{\bw}(\bx))\left(\textbf{1}_{T_{r}^{\bw}(\bx)\in B_{2}(\bx, r / \epsilon)} + \textbf{1}_{T_{r}^{\bw}(\bx)\notin B_{2}(\bx, r / \epsilon)}\right)\right] \\
	& \leq  \mE_{P_{0}}\left[\sup_{\|\bdelta\|_{2}\leq r/\epsilon}f(\bw, \bx + \bdelta)\right] + M\bbP(T_{r}^{\bw}(\bx)\notin B_{2}(\bx, r / \epsilon)).
	\end{aligned}
	\end{equation}
	Due to the definition of $T_{r}^{\bw}(\bx)$, by Markov's inequality, we have  
	\begin{equation}
	\small
	\left(\frac{r}{\epsilon}\right) \bbP(T_{r}^{\bw}(\bx)\notin B_{2}(\bx, r / \epsilon)) \leq \int_{\cX}\|T_{r}^{\bw}(\bx) - \bx\|^{2} dP_{0}(\bx) = \sW_{2}(P_{0}, P^{*}_{r}) \leq r.
	\end{equation}
	Plugging this into \eqref{eq:sup bound}, and due to the definition of Wasserstein distance, we have 
	\begin{equation}
	\label{eq:upper bound on optimal}
	\small
	\mE_{P_{0}}\left[\sup_{\|\bdelta\|_{2}\leq r/\epsilon}f(\bw, \bx + \bdelta)\right] \leq \sup_{P\in B_{\sW_{2}}(P_{0}, r)}R_{P}(\bw) \leq \mE_{P_{0}}\left[\sup_{\|\bdelta\|_{2}\leq r/\epsilon}f(\bw, \bx + \bdelta)\right] + M\epsilon. 
	\end{equation}
	Similar to the proof of Theorem \ref{thm:ood generalization upper bound}, due to the model is $(2r/\epsilon, \epsilon, P_{n}, 2)$-robust, we have  
	\begin{equation}
	\small
	\left|\mE_{P_{0}}\left[\sup_{\|\bdelta\|_{2}\leq r/\epsilon}f(\bw, \bx + \bdelta)\right] - R_{P_{n}}(\bw)\right|\leq \epsilon + M\sqrt{\frac{(2d_{0})^{(2\epsilon^{2}D/r^{2} + 1)}\log{2} + 2\log{(1/\theta)}}{n}}
	\end{equation}
	holds with probability at least $1 - \theta$. Combining this with \eqref{eq:upper bound on optimal}, we get the conclusion.  
\end{proof}
\subsection{Proofs for Section \ref{sec:robust training}}\label{app:proof in robust training}
The proof of Theorem \ref{thm:convergence} is same for $p\in\{2, \infty\}$, we take $p=\infty$ as an example. Before providing the proof, we first give a lemma to characterize the convergence rate of the first inner loop in Algorithm \ref{alg:sgd}.
\begin{lemma}
	\label{lem:convergence}
	For any $\bw, \bx\in\{\bx_{i}\}$, and $r$, there exists $\bdelta^{*}\in\arg\max_{\{\bdelta:\|\bdelta\|_{\infty}\leq r\}}f(\bw, \bx + \bdelta)$ such that 
	\begin{equation}
	\small
	\|\bdelta_{K + 1} - \bdelta^{*}\|^{2} \leq \left(1 - \frac{\mu_{\bx}}{L_{22}}\right)^{K}\|\bdelta_{1} - \bdelta^{*}\|^{2}
	\end{equation}
	when $\bdelta_{k + 1} = \emph{\text{Proj}}_{B_{\infty}(\emph{\textbf{0}}, r)}\left(\bdelta_{k} +\eta_{\bx}\nabla_{\bx}f(\bw, \bx + \bdelta_{k})\right)$ with $\eta_{\bx} = 1 /L_{22}$. 
\end{lemma}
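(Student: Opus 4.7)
The plan is to reinterpret the inner-loop update as projected gradient descent on the strongly convex smooth function $h(\bdelta) := -f(\bw, \bx + \bdelta)$ over the compact convex set $B_{\infty}(\mathbf{0}, r)$. By Assumption~\ref{ass:Lip continuous}, $h$ has $L_{22}$-Lipschitz gradient in $\bdelta$, and by Assumption~\ref{ass:PL inequality}, $h$ is $\mu_{\bx}$-strongly convex on $\{\bdelta : \|\bdelta\|_{p}\le r\}$. Since the feasible set is compact and $h$ is continuous, a minimizer $\bdelta^{*}$ (equivalently, maximizer of $f(\bw,\bx+\bdelta)$) exists; I will fix one such $\bdelta^{*}$.

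The first step is to observe that $\bdelta^{*}$ is a fixed point of the projected gradient map: by the first-order optimality condition for constrained minimization, $\langle \nabla h(\bdelta^{*}), \bdelta - \bdelta^{*}\rangle \ge 0$ for all feasible $\bdelta$, which is equivalent to $\bdelta^{*} = \text{Proj}_{B_{\infty}(\mathbf{0},r)}\bigl(\bdelta^{*} - \eta_{\bx}\nabla h(\bdelta^{*})\bigr)$ for any $\eta_{\bx} > 0$. Combined with the non-expansiveness of Euclidean projection onto a convex set, this yields
\begin{equation*}
\|\bdelta_{k+1} - \bdelta^{*}\|^{2} \le \bigl\|(\bdelta_{k} - \bdelta^{*}) - \eta_{\bx}\bigl(\nabla h(\bdelta_{k}) - \nabla h(\bdelta^{*})\bigr)\bigr\|^{2}.
\end{equation*}

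The second step is to expand the right-hand side and establish a one-step contraction. Let $u = \bdelta_{k} - \bdelta^{*}$ and $v = \nabla h(\bdelta_{k}) - \nabla h(\bdelta^{*})$. Expanding gives $\|u\|^{2} - 2\eta_{\bx}\langle u, v\rangle + \eta_{\bx}^{2}\|v\|^{2}$. Using strong convexity, $\langle u, v\rangle \ge \mu_{\bx}\|u\|^{2}$; using co-coercivity of the gradient of a smooth convex function, $\langle u, v\rangle \ge \tfrac{1}{L_{22}}\|v\|^{2}$. Splitting the cross term $2\eta_{\bx}\langle u,v\rangle = \eta_{\bx}\langle u,v\rangle + \eta_{\bx}\langle u,v\rangle$ and applying each bound to one half, then substituting $\eta_{\bx} = 1/L_{22}$, makes the $\|v\|^{2}$ terms cancel and leaves
\begin{equation*}
\|\bdelta_{k+1} - \bdelta^{*}\|^{2} \le \left(1 - \frac{\mu_{\bx}}{L_{22}}\right)\|\bdelta_{k} - \bdelta^{*}\|^{2}.
\end{equation*}
Iterating this inequality from $k=1$ to $K$ gives the claimed bound.

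The main subtlety — rather than a real obstacle — is the cross-term bookkeeping in the second step: a naive split using only strong convexity and smoothness gives $1 - 2\eta_{\bx}\mu_{\bx} + \eta_{\bx}^{2}L_{22}^{2}$, which at $\eta_{\bx}=1/L_{22}$ equals $2 - 2\mu_{\bx}/L_{22}$ and is not a contraction. The co-coercivity inequality is essential to kill the $\eta_{\bx}^{2}\|v\|^{2}$ term exactly when $\eta_{\bx} = 1/L_{22}$. A secondary point to handle carefully is that strong concavity is only assumed locally on $\{\|\bdelta\|_{p}\le r(p)\}$ (Assumption~\ref{ass:PL inequality}); since all iterates $\bdelta_{k}$ and the optimum $\bdelta^{*}$ remain in $B_{\infty}(\mathbf{0},r)$ by construction of the projection, the local strong convexity and smoothness can be applied along the segment connecting them, which is also contained in the convex feasible set.
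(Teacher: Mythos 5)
Your proof is correct in its main ideas and recovers the same contraction factor $(1-\mu_{\bx}/L_{22})$, but it takes a genuinely different route from the paper. You run the classic projected-gradient analysis for a strongly convex, smooth objective: fix-point property of $\bdelta^{*}$ plus non-expansiveness of the projection, then strong monotonicity and co-coercivity of $\nabla h$ to close the one-step bound. The paper instead argues directly at the level of function values: it combines the strong-concavity upper bound at $\bdelta_{k}$, the smoothness descent lemma from $\bdelta_{k}$ to $\bdelta_{k+1}$, and the variational characterization of the projection $\langle\bdelta_{k}+\eta_{\bx}\nabla_{\bx}f(\bw,\bx+\bdelta_{k})-\bdelta_{k+1},\,\bdelta^{*}-\bdelta_{k+1}\rangle\leq 0$, and then uses the sign condition $f(\bw,\bx+\bdelta^{*})-f(\bw,\bx+\bdelta_{k+1})\geq 0$ together with the identity $\|\bdelta_{k+1}-\bdelta^{*}\|^{2}=\|\bdelta_{k+1}-\bdelta_{k}\|^{2}+\|\bdelta_{k}-\bdelta^{*}\|^{2}+2\langle\bdelta_{k+1}-\bdelta_{k},\bdelta_{k}-\bdelta^{*}\rangle$ to land on the same contraction. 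You correctly diagnose why the naive split $1-2\eta_{\bx}\mu_{\bx}+\eta_{\bx}^{2}L_{22}^{2}$ fails at $\eta_{\bx}=1/L_{22}$ and why co-coercivity is what rescues the argument at this step size.

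The one place your proof is thinner than the paper's is precisely the co-coercivity step, and your closing remark does not fully dispose of it. The standard derivation of $\langle\nabla h(x)-\nabla h(y),x-y\rangle\geq\frac{1}{L_{22}}\|\nabla h(x)-\nabla h(y)\|^{2}$ evaluates $h$ at the auxiliary point $z=y-\frac{1}{L_{22}}(\nabla h(y)-\nabla h(x))$ and uses convexity of $h$ at $z$; this $z$ can lie outside $B_{\infty}(\mathbf{0},r)$ even when $x,y$ are both inside, so invoking convexity only along the segment $[x,y]\subseteq B_{\infty}(\mathbf{0},r)$ is not enough to justify co-coercivity — Assumption~\ref{ass:PL inequality} only gives concavity of $f$ on the ball, not on a dilation of it. The paper's function-value argument is immune to this: every quantity it evaluates ($f$ and $\nabla_{\bx}f$ at $\bdelta_{k}$, $\bdelta_{k+1}$, $\bdelta^{*}$) stays in the feasible ball, so only the local strong concavity and the globally Lipschitz gradient from Assumption~\ref{ass:Lip continuous} are ever used. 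If you want to keep your route, you should either strengthen the hypothesis to concavity of $f(\bw,\bx+\cdot)$ on a slightly larger ball, or replace co-coercivity with a proof that works purely inside $B_{\infty}(\mathbf{0},r)$ — the cleanest fix being, in effect, to switch to the paper's derivation.
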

\begin{proof}
	The existence of $\bdelta^{*}$ is due to the continuity of $f(\bw, \cdot)$. Then 
	\begin{equation}
	\label{eq:distance descent}
	\small
	\begin{aligned}
	f(\bw, \bx + \bdelta^{*}) - f(\bw, \bx + \bdelta_{k + 1}) & = f(\bw, \bx + \bdelta^{*}) - f(\bw, \bx + \bdelta_{k}) + f(\bw, \bx + \bdelta_{k}) - f(\bw, \bx + \bdelta_{k + 1})\\
	& \overset{a}{\leq} \langle \nabla_{\bx}f(\bw, \bx + \bdelta_{k}), \bdelta^{*} - \bdelta_{k}\rangle -\frac{\mu_{\bx}}{2}\|\bdelta_{k} - \bdelta^{*}\|^{2}\\
	& + \langle\nabla_{\bx}f(\bw, \bx + \bdelta_{k}), \bdelta_{k} - \bdelta_{k + 1}\rangle + \frac{L_{22}}{2}\|\bdelta_{k + 1} - \bdelta_{k}\|^{2} \\
	& = \langle\nabla_{\bx}f(\bw, \bx + \bdelta_{k}), \bdelta^{*} - \bdelta_{k + 1}\rangle - \frac{\mu_{\bx}}{2}\|\bdelta_{k} - \bdelta^{*}\|^{2} + \frac{L_{22}}{2}\|\bdelta_{k + 1} - \bdelta_{k}\|^{2} \\
	& \overset{b}{\leq}  L_{22}\langle \bdelta_{k + 1} - \bdelta_{k}, \bdelta^{*} - \bdelta_{k + 1}\rangle - \frac{\mu_{\bx}}{2}\|\bdelta_{k} - \bdelta^{*}\|^{2} + \frac{L_{22}}{2}\|\bdelta_{k + 1} - \bdelta_{k}\|^{2} \\
	& = L_{22}\langle \bdelta_{k + 1} - \bdelta_{k}, \bdelta^{*} - \bdelta_{k}\rangle - \frac{\mu_{\bx}}{2}\|\bdelta_{k} - \bdelta^{*}\|^{2} - \frac{L_{22}}{2}\|\bdelta_{k + 1} - \bdelta_{k}\|^{2},
	\end{aligned}
	\end{equation}
	where $a$ is due to the $L_{22}$-Lipschitz continuity of $\nabla_{\bx}f(\bw, \bx)$ and strongly convexity, $b$ is because the property of projection (see Lemma 3.1 in \citep{bubeck2014convex}). Then we get 
	\begin{equation}
	\small
	\begin{aligned}
	\|\bdelta_{k + 1} - \bdelta^{*}\|^{2} & = \|\bdelta_{k + 1} - \bdelta_{k}\|^{2} + \|\bdelta_{k} - \bdelta^{*}\|^{2} + 2\langle\bdelta_{k + 1} - \bdelta_{k}, \bdelta_{k} - \bdelta^{*}\rangle \\
	& \leq \left(1 - \frac{\mu_{\bx}}{L_{22}}\right)\|\bdelta_{k} - \bdelta^{*}\|^{2}
	\end{aligned}	
	\end{equation}
	by plugging \eqref{eq:distance descent} into the above equality and $f(\bw, \bx + \bdelta^{*}) - f(\bw, \bx + \bdelta_{k + 1}) \geq 0$. Thus, we get the conclusion.  
\end{proof}
This lemma shows that the inner loop in Algorithm \ref{alg:sgd} can efficiently approximate the worst-case perturbation for any $\bw_{t}$ and $\bx_{i}$. Now we are ready to give the proof of Theorem \ref{thm:convergence}. 
\par
We need the following lemma, which is Theorem 6 in \citep{rakhlin2012making}. 
\begin{lemma}
	\label{lem:concentration}
	Let $\{\xi_{1},\cdots, \xi_{t}\}$ be a martingale difference sequence with a uniform upper bound $b$. Let $V_{t} = \sum_{j=1}^{t}\Var(\xi_{j}\mid\cF_{j - 1})$ with $\cF_{j}$ is the $\sigma$-field generated by $\{\xi_{1},\cdots,\xi_{j}\}$. Then for every $a$ and $v>0$,
	\begin{equation}
	\small
	\bbP\left(\bigcup_{s\leq t}\left(\left\{\sum\limits_{j=1}^{t}\xi_{j} \geq a\right\}\bigcap \left\{V_{t} \leq v \right\}\right)\right) \leq \exp\left(\frac{-a^{2}}{2(v + ba)}\right).
	\end{equation} 
\end{lemma}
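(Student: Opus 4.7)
The plan is to prove this via Freedman's classical exponential supermartingale technique. I would construct, for each $\lambda > 0$, a process $M_s(\lambda) = \exp\bigl(\lambda S_s - \phi(\lambda) V_s\bigr)$, where $S_s = \sum_{j=1}^s \xi_j$ and $\phi$ is chosen so that $M_s$ is a nonnegative supermartingale, then apply Markov's inequality at a suitable stopping time and optimize over $\lambda$.

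The first step is the key one-step estimate: for $0 < \lambda < 1/b$ and $u \leq \lambda b$, one has $e^u - 1 - u \leq (e^{\lambda b} - 1 - \lambda b)(u/(\lambda b))^2$ since the map $u \mapsto (e^u - 1 - u)/u^2$ is nondecreasing. Applying this with $u = \lambda \xi_j$, using $\xi_j \leq b$ and $\mE[\xi_j \mid \cF_{j-1}] = 0$, and then the bound $1 + x \leq e^x$, yields the Bennett-type inequality
\begin{equation*}
\mE[e^{\lambda \xi_j} \mid \cF_{j-1}] \leq \exp\!\left(\frac{e^{\lambda b} - 1 - \lambda b}{b^2}\,\Var(\xi_j \mid \cF_{j-1})\right).
\end{equation*}
Setting $\phi(\lambda) = (e^{\lambda b} - 1 - \lambda b)/b^2$ immediately gives $\mE[M_s(\lambda) \mid \cF_{s-1}] \leq M_{s-1}(\lambda)$, so $M_s$ is a nonnegative supermartingale with $M_0 = 1$.

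Second, to handle the supremum over $s \leq t$ and the constraint $V_s \leq v$, I would introduce the stopping time $\tau = \inf\{s \leq t : S_s \geq a\}$ (with $\tau = \infty$ if no such $s$ exists) and consider the stopped supermartingale $M_{s \wedge \tau}$. On the event of interest we have $\tau \leq t$, $S_\tau \geq a$, and $V_\tau \leq V_t \leq v$, so $M_\tau \geq \exp(\lambda a - \phi(\lambda) v)$. Optional stopping plus Markov then yields
\begin{equation*}
\bbP\!\left(\exists\, s \leq t:\; S_s \geq a,\ V_s \leq v\right) \leq \exp\bigl(-\lambda a + \phi(\lambda) v\bigr).
\end{equation*}

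The remaining task is to optimize $\lambda$ and simplify $\phi$ into the clean Bernstein form in the statement. Here I would use the elementary bound $e^{\lambda b} - 1 - \lambda b \leq (\lambda b)^2/(2(1 - \lambda b/?))$ (the appropriate cleaner variant being $\phi(\lambda) \leq \lambda^2/(2(1 - \lambda b))$ after rearrangement), then choose $\lambda = a/(v + ab)$, which belongs to $(0, 1/b)$ and gives $-\lambda a + \phi(\lambda) v \leq -a^2/\bigl(2(v + ba)\bigr)$. The main obstacle is bookkeeping the constants so as to recover exactly the denominator $2(v + ba)$ rather than the slightly tighter Bennett form $2(v + ba/3)$; this is purely a tradeoff between the sharpness of the one-step inequality and the simplicity of the Bernstein-type conclusion, and choosing the looser quadratic envelope for $\phi$ yields the stated bound directly.
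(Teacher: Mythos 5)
Your argument is correct and is the standard proof of Freedman's inequality: the one-step MGF bound from the monotonicity of $u \mapsto (e^u - 1 - u)/u^2$ (which only uses the one-sided bound $\xi_j \leq b$, matching the hypothesis), the exponential supermartingale $M_s(\lambda) = \exp(\lambda S_s - \phi(\lambda)V_s)$, optional stopping at the first crossing of level $a$, and the Bernstein-type envelope $\phi(\lambda) \leq \lambda^2/(2(1-\lambda b))$ with $\lambda = a/(v+ab)$, which gives exactly the exponent $-a^2/(2(v+ba))$. The paper does not actually prove this lemma; it is cited verbatim as Theorem~6 of Rakhlin, Shamir, and Sridharan (2012), so there is no ``paper proof'' to compare against. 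One thing worth flagging: as written in the paper, the union $\bigcup_{s\leq t}$ is over events that do not depend on $s$ (both the partial sum and the quadratic variation are indexed by $t$), making the union vacuous; you silently corrected this to the intended uniform-in-$s$ form $\bigcup_{s \leq t}\{\sum_{j\leq s}\xi_j \geq a,\, V_s \leq v\}$, which is the version your stopping-time argument proves and the version the paper actually needs when it invokes the lemma to obtain a bound holding simultaneously for all $t \leq T$ in the proof of Theorem~\ref{thm:convergence}.
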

This is a type of Bennett's inequality which is sharper compared with Azuma-Hoeffding's inequality when the variance $v$ is much smaller than uniform bound $b$.   
\subsubsection{Proof of Theorem \ref{thm:convergence}}
\begin{proof}
	With a little abuse of notation, let $r(p) = r$ and define $g(\bw, \bx) = \sup_{\bdelta:\|\bdelta\|_{\infty} \leq r}f(\bw, \bx + \bdelta)$. Lemma A.5 in \citep{nouiehed2019solving} implies $g(\bw, \bx)$ has $L_{11} + \frac{L_{12}L_{21}}{\mu_{\bx}}$-Lipschitz continuous gradient with respect to $\bw$ for any specific $\bx$. Then  $\tilde{R}_{P_{n}}(\bw)$ has $L = L_{11} + \frac{L_{12}L_{21}}{\mu_{\bx}}$-Lipschitz continuous gradient. Let $\bx^{*}\in \bx + \arg\max_{\{\bdelta:\|\bdelta\|_{\infty}\leq r\}}f(\bw, \bx + \bdelta)$, due to the Lipschitz gradient of $\tilde{R}_{P_{n}}(\bw)$, 
	\begin{equation}
	\small
	\begin{aligned}
	\tilde{R}_{P_{n}}(\bw_{t + 1}) - \tilde{R}_{P_{n}}(\bw_{t}) & \leq \langle\nabla \tilde{R}_{P_{n}}(\bw_{t}), \bw_{t + 1} - \bw_{t}\rangle + \frac{L}{2}\|\bw_{t + 1} - \bw_{t}\|^{2} \\
	& = -\eta_{\bw_{t}}\langle\nabla \tilde{R}_{P_{n}}(\bw_{t}), \nabla_{\bw}f(\bw_{t}, \bx_{i_{t}} + \bdelta_{K})\rangle + \frac{\eta_{\bw_{t}}^{2}L}{2}\|\nabla_{\bw}f(\bw_{t}, \bx_{i_{t}} + \bdelta_{K})\|^{2} \\
	& = -\eta_{\bw_{t}}\|\nabla \tilde{R}_{P_{n}}(\bw_{t})\|^{2} + \eta_{\bw_{t}}\langle\nabla\tilde{R}_{P_{n}}(\bw_{t}), \nabla_{\bw}f(\bw_{t}, \bx_{i_{t}}^{*}) - \nabla_{\bw}f(\bw_{t}, \bx_{i_{t}} + \bdelta_{K})\rangle \\
	& + \eta_{\bw_{t}}\langle\nabla\tilde{R}_{P_{n}}(\bw_{t}),\nabla\tilde{R}_{P_{n}}(\bw_{t}) - \nabla_{\bw}f(\bw_{t}, \bx_{i_{t}}^{*})\rangle + \frac{\eta_{\bw_{t}}^{2}L}{2}\|\nabla_{\bw}f(\bw_{t}, \bx_{i_{t}} + \bdelta_{K})\|^{2}.
	\end{aligned}
	\end{equation}
	Here the last equality is due to $\nabla_{\bw}g(\bw, \bx) = \nabla_{\bw}f(\bw, \bx^{*})$ (Similar to Danskin's theorem, see Lemma A.5 in \citep{nouiehed2019solving}), and $\bx^{*}_{i_{t}}$ is the local maxima approximated by $\bx_{i_{t}} + \bdelta_{K}$ in Lemma \ref{lem:convergence}. By taking expectation to $\bw_{t + 1}$ with $\bw_{t}$ given in the both side of the above equation, Jesen's inequality, combining Lemma \ref{lem:convergence} and $\eta_{\bw_{t}} = 1/\mu_{\bw}t$, 
	\begin{equation}
	\label{eq:convergence in exp}
	\small
	\begin{aligned}
	\mE[\tilde{R}_{P_{n}}(\bw_{t + 1})] - \tilde{R}_{P_{n}}(\bw^{*})  & \leq \tilde{R}_{P_{n}}(\bw_{t}) - \tilde{R}_{P_{n}}(\bw^{*}) -\eta_{\bw_{t}}\|\nabla \tilde{R}_{P_{n}}(\bw_{t})\|^{2} \\
	& + \mE\left[\eta_{\bw_{t}}\|\nabla \tilde{R}_{P_{n}}(\bw_{t})\|\|\nabla_{\bw}f(\bw_{t}, \bx_{i_{t}}^{*}) - \nabla_{\bw}f(\bw, \bx_{i_{t}} + \bdelta_{K})\|\right] + \frac{\eta_{\bw_{t}}^{2}G^{2}L}{2} \\
	& \leq \tilde{R}_{P_{n}}(\bw_{t}) - \tilde{R}_{P_{n}}(\bw^{*}) -\eta_{\bw_{t}}\|\nabla \tilde{R}_{P_{n}}(\bw_{t})\|^{2} + \eta_{\bw_{t}}\|\nabla \tilde{R}_{P_{n}}(\bw_{t})\|\left(1 - \frac{\mu_{\bx}}{L_{22}}\right)^{K}\mE\left[\|\bdelta_{1} - \bdelta_{i_{t}}^{*}\|^{2}\right] + \frac{\eta_{\bw_{t}}^{2}G^{2}L}{2} \\
	& \leq \left(1 - 2\mu_{\bw}\eta_{\bw_{t}}\right)\left(\tilde{R}_{P_{n}}(\bw_{t}) - \tilde{R}_{P_{n}}(\bw^{*})\right) + \eta_{\bw_{t}}^{2}G^{2}L \\
	& = \left(1 - \frac{2}{t}\right)\left(\tilde{R}_{P_{n}}(\bw_{t}) - \tilde{R}_{P_{n}}(\bw^{*})\right) + \frac{G^{2}L}{\mu_{\bw}^{2}t^{2}}.
	\end{aligned}
	\end{equation}
	Here the third inequality is because  
	\begin{equation}
	\small
	\eta_{\bw_{t}}\|\nabla \tilde{R}_{P_{n}}(\bw_{t})\|\left(1 - \frac{\mu_{\bx}}{L_{22}}\right)^{K}\|\bdelta_{1} - \bdelta_{i_{t}}^{*}\|^{2} \leq \eta_{\bw_{t}}G\left(1 - \frac{\mu_{\bx}}{L_{22}}\right)^{K}4d_{0}r^{2} \leq \frac{\eta_{\bw_{t}}^{2}G^{2}L}{2},
	\end{equation}
	for any $\bdelta_{i_{t}}^{*}$, since 
	\begin{equation}
	\small
	K\log{\left(1 - \frac{\mu_{\bx}}{L_{22}}\right)} \leq -K\frac{\mu_{\bx}}{L_{22}} \leq \log{\left(\frac{GL}{8T\mu_{\bw}d_{0}r^{2}}\right)}.
	\end{equation}
	Then by induction,  
	\begin{equation}
	\small
	\begin{aligned}
	\mE[\tilde{R}_{P_{n}}(\bw_{t + 1})] - \tilde{R}_{P_{n}}(\bw^{*}) & \leq \frac{G^{2}L}{\mu^{2}_{\bw}}\sum\limits_{j=2}^{t}\frac{1}{j^{2}}\prod_{k=j + 1}^{t}\left(1 - \frac{2}{k}\right) \\
	& = \frac{G^{2}L}{\mu^{2}_{\bw}}\sum\limits_{j=2}^{t}\frac{1}{j^{2}}\frac{(j - 1)j}{(t - 1)t} \\
	& \leq \frac{G^{2}L}{t\mu^{2}_{\bw}}.
	\end{aligned}
	\end{equation}
	Thus we get the first conclusion of convergence in expectation by taking $t=T$ for $t\geq 2$. For the second conclusion, let us define $\xi_{t} = \langle\nabla\tilde{R}_{P_{n}}(\bw_{t}),\nabla\tilde{R}_{P_{n}}(\bw_{t}) - \nabla_{\bw}f(\bw_{t}, \bx_{i_{t}}^{*})\rangle$. Then Schwarz inequality implies that
	\begin{equation}
	\small
	|\xi_{t}| \leq \|\nabla\tilde{R}_{P_{n}}(\bw_{t})\|\|\nabla\tilde{R}_{P_{n}}(\bw_{t}) - \nabla_{\bw}f(\bw_{t}, \bx_{i_{t}}^{*})\| \leq 2G^{2}.
	\end{equation}
	Similar to \eqref{eq:convergence in exp}, for $t \geq 2$,
	\begin{equation}
	\label{eq:high probablity bound}
	\small
	\begin{aligned}
	\tilde{R}_{P_{n}}(\bw_{t + 1}) - \tilde{R}_{P_{n}}(\bw^{*}) & \leq \left(1 - 2\mu_{\bw}\eta_{\bw_{t}}\right)\left(\tilde{R}_{P_{n}}(\bw_{t}) - \tilde{R}_{P_{n}}(\bw^{*})\right) + \eta_{\bw_{t}}^{2}G^{2}L + 2\eta_{\bw_{t}}\xi_{t} \\
	& \leq \frac{G^{2}L}{t\mu^{2}_{\bw}} + \frac{2}{\mu_{\bw}}\sum\limits_{j=2}^{t}\frac{\xi_{j}}{j}\prod_{k = j + 1}^{t}\left(1 - \frac{2}{k}\right)\\
	& = \frac{G^{2}L}{t\mu^{2}_{\bw}} + \frac{2}{\mu_{\bw}}\sum\limits_{j=2}^{t}\frac{1}{j}\frac{(j - 1)j}{(t - 1)t}\xi_{j} \\
	& = \frac{G^{2}L}{t\mu^{2}_{\bw}} + \frac{2}{\mu_{\bw}}\sum\limits_{j=2}^{t}\frac{(j - 1)}{(t - 1)t}\xi_{j}.
	\end{aligned}
	\end{equation}
	Since the second term in the last inequality is upper bonded by $\sum_{j=2}^{t}\xi_{j}$ which is a sum of martingale difference, and $|\xi_{j}| \leq 2G^{2}$, a simple Azuma-Hoeffding's inequality based on bounded martingale difference (Corollary 2.20 in \citep{wainwright2019}) can give a $\cO(1/\sqrt{t})$ convergence rate in the high probability. However, we can sharpen the convergence rate via a Bennett's inequality (Proposition 3.19 in \citep{duchi2016lecture}), because the conditional variance of $\xi_{j}$ will decrease across training. 
	We consider the conditional variance of $\sum_{j=2}^{t}(j - 1)\xi_{j}$, let $\cF_{j}$ be the $\sigma$-field generated by $\{\bw_{1}, \cdots, \bw_{j}\}$, since $\mE[\xi_{j}] = 0$ we have 
	\begin{equation}
	\small
	\begin{aligned}
	\Var\left(\sum_{j=2}^{t}(j - 1)\xi_{j}\mid \cF_{j - 1}\right) & = \sum_{j=2}^{t}(j - 1)^{2}\Var\left(\xi_{j}\mid \cF_{j - 1}\right) \\
	& = \sum_{j=2}^{t}(j - 1)^{2}\mE\left[\xi_{j}^{2} \mid \cF_{j - 1}\right] \\
	& \leq  4G^{2}\sum_{j=2}^{t}(j - 1)^{2} \|\nabla\tilde{R}_{P_{n}}(\bw_{j})\|^{2} \\
	& \leq 8G^{2}L\sum_{j=2}^{t}(j - 1)^{2} \left(\tilde{R}_{P_{n}}(\bw_{j}) - \tilde{R}_{P_{n}}(\bw^{*})\right),
	\end{aligned}
	\end{equation}
	where first inequality is from Schwarz's inequality and the last inequality is because 
	\begin{equation}
	\small
	\begin{aligned}
	\tilde{R}_{P_{n}}\left(\bw^{*}\right) - \tilde{R}_{P_{n}}(\bw) & \leq \tilde{R}_{P_{n}}\left(\bw - \frac{1}{L}\nabla\tilde{R}_{P_{n}}(\bw)\right) - \tilde{R}_{P_{n}}(\bw) \\
	& \leq -\left\langle
	\nabla\tilde{R}_{P_{n}}(\bw), \frac{1}{L}\nabla\tilde{R}_{P_{n}}(\bw)\right\rangle + \frac{L}{2}\left\|\frac{1}{L}\nabla\tilde{R}_{P_{n}}(\bw)\right\|^{2} \\
	& = -\frac{1}{2L}\left\|\nabla\tilde{R}_{P_{n}}(\bw)\right\|^{2},
	\end{aligned}
	\end{equation}
	for any $\bw$. By applying Lemma \ref{lem:concentration}, as long as $T\geq 4$ and $0 < \theta < 1 / e$, then with probability at least $1 - \theta$, for all $t \leq T$, 
	\begin{equation}
	\small
	\begin{aligned}
	& \tilde{R}_{P_{n}}(\bw_{t + 1}) - \tilde{R}_{P_{n}}(\bw^{*})\\
	& \leq \frac{8G}{\mu_{\bw}(t - 1)t}\max\left\{\sqrt{2L\sum_{j=2}^{t}(j - 1)^{2} \left(\tilde{R}_{P_{n}}(\bw_{j}) - \tilde{R}_{P_{n}}(\bw^{*})\right)}, G(t - 1)\sqrt{\log{\left(\frac{\log{T}}{\theta}\right)}}\right\}\sqrt{\log{\left(\frac{\log{T}}{\theta}\right)}} + \frac{G^{2}L}{t\mu^{2}_{\bw}} \\
	& \leq \frac{8G\sqrt{\log{(\log{(T/\theta)})}}}{\mu_{\bw}(t - 1)t}\sqrt{2L\sum_{j=2}^{t}(j - 1)^{2} \left(\tilde{R}_{P_{n}}(\bw_{j}) - \tilde{R}_{P_{n}}(\bw^{*})\right)} + \frac{(8\mu_{\bw}G^{2}\log{(\log{(T/\theta)})} + G^{2}L)}{t\mu_{\bw}^{2}}.
	\end{aligned}
	\end{equation} 
	Then, an upper bound to the first term in the last inequality can give our conclusion. Note that if $\tilde{R}_{P_{n}}(\bw_{j}) - \tilde{R}_{P_{n}}(\bw^{*})$ is smaller than $\cO(1 / j - 1)$, the conclusion is full-filled. To see this, we should find a large constant $a$ such that $\tilde{R}_{P_{n}}(\bw_{t + 1}) - \tilde{R}_{P_{n}}(\bw^{*}) \leq a / t$. This is clearly hold when $a\geq G^{2} / 2\mu_{\bw}$ for $t = 1$ due to the PL inequality and bounded gradient. For $t\geq 2$, we find this $a$ by induction. Let $b = 8G\sqrt{2L\log{(\log{(T/\theta)})}}/\mu_{\bw}$ and $c = (8\mu_{\bw}G^{2}\log{(\log{(T/\theta)})} + G^{2}L) / \mu_{\bw}^{2}$. A satisfactory $a$ yields 
	\begin{equation}
	\small
	\begin{aligned}
	\frac{a}{t}  \geq \frac{b}{(t - 1)t}\sqrt{a\sum\limits_{j=2}^{t}(j - 1)} + \frac{c}{t} 
	= \frac{b}{(t - 1)t}\sqrt{\frac{at(t - 1)}{2}} + \frac{c}{t} \geq\frac{1}{t}\left(b\sqrt{\frac{a}{2}} + c\right). 
	\end{aligned}
	\end{equation}
	By solving a quadratic inequality, we conclude that $a - b\sqrt{a/2} - c \geq 0$. Then
	\begin{equation}
	\small
	a \geq \left(\frac{b + \sqrt{b^{2} + 8c}}{2\sqrt{2}}\right)^{2}.
	\end{equation}
	By taking
	\begin{equation}
	\small
	a \geq 2\left(\frac{2b^{2} + 8c}{8}\right) \geq \left(\frac{b + \sqrt{b + 8c}}{2\sqrt{2}}\right)^{2}, 
	\end{equation}
	we get 
	\begin{equation}
	\small
	a \geq \frac{64G^{2}L\log{(\log{(T/\theta)})}}{\mu_{\bw}^{2}} + \frac{(16\mu_{\bw}G^{2}\log{(\log{(T/\theta)})} + G^{2}L)}{\mu_{\bw}^{2}} =  \frac{G^{2}\log{(\log{(T/\theta)})}(64L + 16\mu_{\bw}) + G^{2}L}{\mu_{\bw}^{2}},
	\end{equation}
	due to the value of $b$ and $c$. Hence, we get the conclusion by taking $t=T$.   
\end{proof}  

\subsubsection{Proof of Proposition \ref{pro:robustness}}\label{app:proof of proposition robustness}
\begin{proof}
	From the definition of $\tilde{R}_{P_{n}}(\bw)$, for any $r\geq 0$, we have 
	\begin{equation}
	\small
	\frac{1}{n}\sum\limits_{i=1}^{n}\sup_{\|\bdelta\|_{p}\leq r}(f(\bw, \bx_{i} + \bdelta) - f(\bw, \bx_{i})) \leq \tilde{R}_{P_{n}}(\bw) \leq \epsilon.
	\end{equation}
	On the other hand
	\begin{equation}
	\small
	\frac{1}{n}\sum\limits_{i=1}^{n}\sup_{\|\bdelta\|_{p}\leq r}(f(\bw, \bx_{i}) - f(\bw, \bx_{i} + \bdelta)) \leq R_{P_{n}}(\bw) \leq \tilde{R}_{P_{n}}(\bw) \leq \epsilon.
	\end{equation} 
	Take a sum to the two above inequalities, we get 
	\begin{equation}
	\small
	\frac{1}{n}\sum\limits_{i=1}^{n}\sup_{\|\bdelta\|_{p}\leq r}|f(\bw, \bx_{i} + \bdelta) - f(\bw, \bx_{i}))| \leq \frac{1}{n}\sum\limits_{i=1}^{n}\left(\sup_{\|\bdelta\|_{p}\leq r}f(\bw, \bx_{i} + \bdelta) - \inf_{\|\bdelta\|_{p}\leq r}f(\bw, \bx_{i} + \bdelta))\right) \leq 2\epsilon.
	\end{equation}
	Then the conclusion is verified.  
\end{proof}
\section{Proofs for Section \ref{sec:pretrain improves ood}}
\subsection{Proof of Theorem \ref{thm:pretrain generalize}}\label{app:proof of theorem pretrain generalize}
\begin{proof}
	We have $r(\infty) = r$ in this theorem. The key is to bound the $|\sup_{P\in B_{\sW_{\infty}}(P_{0}, r)}R_{P}(\bw_{\text{pre}})- \sup_{Q\in B_{\sW_{\infty}}(Q_{0}, r)}R_{Q}(\bw_{\text{pre}})|$, then  triangle inequality and Hoeffding's inequality imply the conclusion. Let $P^{*}_{r}\in \arg\max_{\{P\in B_{\sW_{\infty}}(P_{0}, r)\}}R_{P}(\bw_{\text{pre}})$. For any given $\bx$, due to the continuity of $f(\bw_{\text{pre}},\cdot)$, similar to Lemma \ref{lem:equivalence}, we can find the $T^{\bw_{\text{pre}}}_{r}(\bx) = \bx + \arg\max_{\{\bdelta:\|\bdelta\|_{\infty}\leq r\}}f(\bw_{\text{pre}}, \bx + \bdelta)$. Then due to Lemma \ref{lem:equivalence}, 
	\begin{equation}
	\small
	R_{P^{*}_{r}}(\bw_{\text{pre}}) = \mE_{P_{0}}\left[\sup_{\|\bdelta\|_{\infty}\leq r}f(\bw_{\text{pre}}, \bx + \bdelta)\right].
	\end{equation}
	Thus, $T^{\bw_{\text{pre}}}_{r}(\bx) \sim P^{*}_{r}$ when $\bx\sim P_{0}$. We can find $\bz\sim Q_{0}$ due to the Kolmogorov's Theorem, and let  $T^{\bw_{\text{pre}}}_{r}(\bz)\sim Q^{*}_{r}$. By the definition of $\sW_{\infty}$-distance, one can verify $\sW_{\infty}(Q_{0}, Q^{*}_{r})\leq r$ as well as $R_{Q^{*}_{r}}(\bw_{\text{pre}}) \leq \epsilon_{\text{pre}}$. Note that $0\leq f(\bw_{\text{pre}}, \cdot) \leq M$, then 
	\begin{equation}
	\label{eq:tv distance}
	\small
	\begin{aligned}
	\left|R_{P^{*}_{r}}(\bw_{\text{pre}}) - R_{Q^{*}_{r}}(\bw_{\text{pre}})\right| & = \left|\int_{\cX}f(\bw_{\text{pre}}, \bx)dP^{*}_{r}(\bx) - \int_{\cX}f(\bw_{\text{pre}}, \bx)dQ^{*}_{r}(\bx)\right| \\
	& = \left|\int_{\cX}f(\bw_{\text{pre}}, T^{\bw_{\text{pre}}}_{r}(\bx))dP_{0}(\bx) - \int_{\cX}f(\bw_{\text{pre}}, T^{\bw_{\text{pre}}}_{r}(\bx))dQ_{0}(\bx)\right| \\
	& \leq \int_{\cX}\left|f(\bw_{\text{pre}}, T^{\bw_{\text{pre}}}_{r}(\bx))\right|\left|dP_{0}(\bx) - dQ_{0}(\bx)\right| \\
	& \leq M\int_{\cX}\left|dP_{0}(\bx) - dQ_{0}(\bx)\right| \\
	& = 2M\TV(P_{0}, Q_{0}).
	\end{aligned}
	\end{equation}
	The last equality is from the definition of total variation distance \citep{villani2008optimal}. Thus a simple triangle inequality implies that 
	\begin{equation}
	\label{eq:tv bound}
	\small
	R_{P^{*}_{r}}(\bw_{\text{pre}}) \leq \left|R_{P^{*}_{r}}(\bw_{\text{pre}}) - R_{Q^{*}_{r}}(\bw_{\text{pre}})\right| + R_{Q^{*}_{r}}(\bw_{\text{pre}}) \leq \epsilon_{\text{pre}} + 2M\TV(P_{0}, Q_{0}).
	\end{equation}
	Next we give the concentration result of $\tilde{R}_{P_{n}}(\bw_{\text{pre}})$. Due to the definition of $\tilde{R}_{P_{n}}(\bw_{\text{pre}})$, it can be rewritten as $R_{P_{n}^{*}}(\bw_{\text{pre}})$ where $P_{n}^{*}$ is the empirical distribution on $\{T^{\bw_{\text{pre}}}_{r}(\bx_{i})\}$. Since $0 \leq f(\bw_{\text{pre}}, \cdot) \leq M$ and $\{T^{\bw_{\text{pre}}}_{r}(\bx_{i})\}$ are i.i.d draws from $P^{*}_{r}$. Azuma-Hoeffding's inequality (Corollary 2.20 in \citep{wainwright2019}) shows that with probability at least $1 - \theta$, 
	\begin{equation}
	\small
	\begin{aligned}
	\tilde{R}_{P_{n}}(\bw_{\text{pre}}) - R_{P^{*}_{r}}(\bw_{\text{pre}}) & = \frac{1}{n}\sum\limits_{i=1}^{n}f(\bw_{\text{pre}}, T(\bx_{i})) - R_{P^{*}_{r}}(\bw_{\text{pre}}) \leq M\sqrt{\frac{\log{(1/\theta)}}{2n}}.
	\end{aligned}
	\end{equation}
	Hence we get our conclusion.   
\end{proof}
\subsection{Proof of Theorem \ref{thm:pretrain generalize l2}}
With a little abuse of notation, let $r(2) = r/\epsilon_{\text{pre}}$ denoted by $r$ in the proof, and $P^{*}_{r}\in\arg\max_{P\in B_{\sW_{2}}(P_{0}, r)}R_{P}(\bw)$. By Lemma \ref{lem:optimal}, there exists $T^{\bw_{\text{pre}}}_{r}(\bx)\sim P^{*}_{r}$ with $\bx\sim P_{0}$. Then we can find $\bz\sim Q_{0}$ due to Kolmogorov's Theorem. Let $T^{\bw_{\text{pre}}}_{r}(\bz)\sim Q^{*}_{r}$, we see 
\begin{equation}
\small
\begin{aligned}
\sW_{2}(Q_{0}, Q^{*}_{r})^{2} & \leq \int_{\cX}\|\bz - T^{\bw_{\text{pre}}}_{r}(\bz)\|^{2}dQ_{0}(\bz) \\
& \leq \int_{\cX}\|\bz - T^{\bw_{\text{pre}}}_{r}(\bz)\|^{2}\left|dQ_{0}(\bz) - dP_{0}(\bz)\right| + \int_{\cX}\|\bz - T^{\bw_{\text{pre}}}_{r}(\bz)\|^{2} dP_{0}(\bz) \\
& \leq D^{2}\int_{\cX}\left|dQ_{0}(\bz) - dP_{0}(\bz)\right| + r^{2} \\
& = 2D^{2}\TV(P_{0}, Q_{0}) + r^{2}. 
\end{aligned}
\end{equation}
Thus $R_{Q^{*}_{r}}(\bw_{\text{pre}}) \leq \epsilon_{\text{pre}}$. Similar to \eqref{eq:tv distance} and \eqref{eq:tv bound} we get the conclusion.  
\section{Hyperparameters}\label{app:hyp on adv}
\begin{table*}[htbp]
	
	\centering
	\scalebox{0.9}{
		\begin{minipage}{0.5\linewidth}\label{tbl:hyper adv cifar}
			\caption{Hyperparameters of adversarial training on \texttt{CIFAR10}.}
			\vspace{-0.1in}
			\begin{tabular}{c c c c}
				\hline
				Hyperparam        &  Std     & Adv-$\ell_{2}$       & Adv-$\ell_{\infty}$\\
				\hline
				Learning Rate     &  0.1     &  0.1              & 0.1     \\
				Momentum          &  0.9     &  0.9              & 0.9     \\
				Batch Size        &  128     &  128              & 128     \\
				Weight Decay      &  5e-4    &  5e-4             & 5e-4    \\
				Epochs            &  200     &  200              & 200     \\
				Inner Loop Steps  &    -     &   8               &  8      \\
				Perturbation Size &    -     &  2/12               &  2/255       \\
				Perturbation Step Size &  -  &  1/24              & 1/510     \\
				\hline
			\end{tabular}
		\end{minipage}
		\hspace{0.2in}
		\begin{minipage}{0.5\linewidth}\label{tbl:hyper adv imagenet}
			\caption{Hyperparameters of adversarial training on \texttt{ImageNet}.}
			\vspace{-0.1in}
			\begin{tabular}{c c c c}
				\hline
				Hyperparam        &  Std     & Adv-$\ell_{2}$       & Adv-$\ell_{\infty}$\\
				\hline
				Learning Rate     &  0.1     &  0.1              & 0.1     \\
				Momentum          &  0.9     &  0.9              & 0.9     \\
				Batch Size        &  512     &  512              & 512     \\
				Weight Decay      &  5e-4    &  5e-4             & 5e-4    \\
				Epochs            &  100     &  100              & 100     \\
				Inner Loop Steps  &    -     &   3               &  3      \\
				Perturbation Size &    -     & 0.25              &  2/255       \\
				Perturbation Step Size  &  - & 0.05              &  1/510       \\
				\hline
			\end{tabular}
		\end{minipage}
	}
\end{table*}


\begin{table*}[htbp]
	\caption{Hyperparameters of  adversarial training on $\text{BERT}$ base model.}
	\vspace{-0.1in}
	\label{tbl:hyper}
	\centering
	\scalebox{0.8}{
		{
			\begin{tabular}{c c c c}
				\hline
				Hyperparam &  Std     & Adv-$\ell_{2}$       & Adv-$\ell_{\infty}$\\
				\hline
				Learning Rate &  3e-5 &  3e-5     & 3e-5 \\
				Batch Size    &  32   &  32       & 32   \\
				Weight Decay  &  0    &  0        & 0    \\
				Hidden Layer Dropout Rate &  0.1  &  0.1  & 0.1  \\
				Attention Probability Dropout Rate &   0.1  &   0.1  & 0.1 \\
				Max Epochs    &  10    &  10  & 10 \\
				Learning Rate Decay   & Linear   & Linear  & Linear\\
				Warmup Ratio  &  0   &  0        & 0    \\
				Inner Loop Steps  &    -     &   3               &  3      \\
				Perturbation Size &    -     & 1.0              &  0.001        \\
				Perturbation Step Size  &  - & 0.1              &  0.0005        \\
				\hline
	\end{tabular}}}
\end{table*}

\section{Ablation Study}
\label{app:perturbation}
\subsection{Effect of Perturbation Size}\label{app:perturbation size}		
We study the effect of perturbation size $r$ in adversarial training
in bounds \eqref{eq:ood bound linf} and \eqref{eq:ood bound l2}.
We vary the perturbation size $r$ in $\{2^{-5}/12, 2^{-4}/12, 2^{-3}/12, 2^{-2}/12, 2^{-1}/12, 2^{0}/12, 2^{1}/12, 2^{2}/12, 2^{3}/12, 2^{4}/12, 2^{5}/12, 2^{6}/12, 2^{7}/12\}$ for Adv-$\ell_{2}$ and in $\{2^{-4}/255, 2^{-3}/255, 2^{-2}/255, 2^{-1}/255, 2^{0}/255, 2^{1}/255, 2^{2}/255, 2^{3}/255, 2^{4}/255\}$ for Adv-$\ell_{\infty}$. 
The perturbation step size $\eta_{\bx}$ in Algorithm \ref{alg:sgd} is set to be $r/4$ \citep{salman2020adversarially}. 
Experiments are conducted on $\texttt{CIFAR10}$ and the settings follow those in Section \ref{sec:Experiments on Image Classification}. 
\par
The results are shown in Figures \ref{fig:adv_l2_r} and \ref{fig:adv_linf_r}. In the studied ranges, the accuracy on the OOD data from all categories exhibits similar trend, i.e., 
first increases and then decreases, as $r$ increases. This is consistent with our discussion in Section \ref{sec:Experiments on Image Classification} that there is an optimal perturbation size $r$ for improving OOD generalization via adversarial training. For data corrupted under types Fog, Bright and Contrast, adversarial training degenerates the performance in Table \ref{tbl:adversarial training on image}. We speculate this is because the three corruption types rescale the input pixel values to smaller values and the same perturbation size $r$
leads to relatively large perturbation. 
Thus according to the discussion in Section \ref{sec:Experiments on Image Classification} that there is an optimal $r$ for improving OOD generalization, 
we suggest conducting adversarial training with a smaller perturbation size to defend these three types of corruption. 
Figures \ref{fig:adv_l2_r} and \ref{fig:adv_linf_r} also show 
that smaller optimal perturbation sizes have better performances for these three types of corruption. 

\subsection{Effect of the the Number of Training Samples}\label{app:number of training samples}
We study the effect of the number of training samples, as  bounds \eqref{eq:ood bound linf} and \eqref{eq:ood bound l2} suggest that more training samples lead to better OOD generalization. 
We split \texttt{CIFAR10} into 5 subsets, each of which has 10000, 20000, 30000, 40000 and 50000 training samples. 
The other settings follow those in Section \ref{sec:Experiments on Image Classification}. 
The results are in shown Figures \ref{fig:adv_l2_num} and \ref{fig:adv_linf_num}. 
\begin{figure*}[htbp]\centering
	\subfloat[Clean.]{\includegraphics[width=0.198\textwidth]{./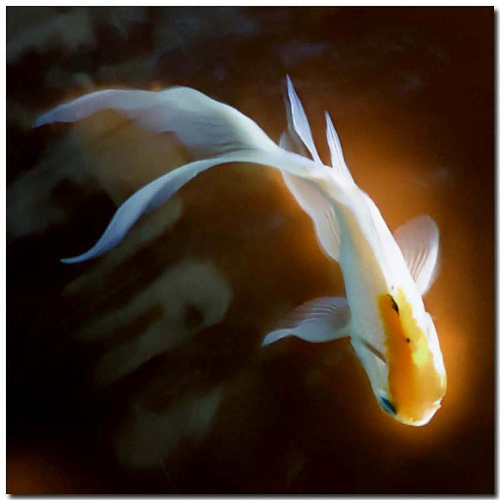}}
	\subfloat[Gauss.]{\includegraphics[width=0.195\textwidth]{./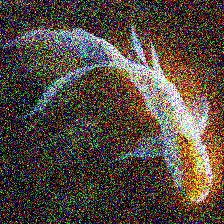}}
	\subfloat[Shot.]{\includegraphics[width=0.195\textwidth]{./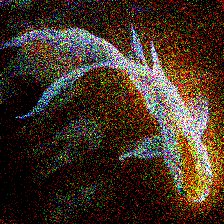}}
	\subfloat[Impulse.]{\includegraphics[width=0.195\textwidth]{./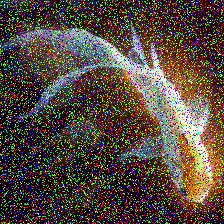}}
	\vspace{-0.1in}
	\\
	\subfloat[Defocus.]{\includegraphics[width=0.195\textwidth]{./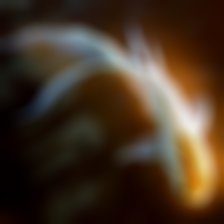}}
	\subfloat[Glass.]{\includegraphics[width=0.195\textwidth]{./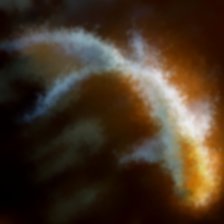}}
	\subfloat[Motion.]{\includegraphics[width=0.195\textwidth]{./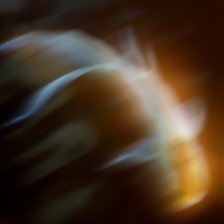}}
	\subfloat[Zoom.]{\includegraphics[width=0.195\textwidth]{./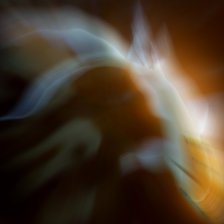}}
	\vspace{-0.1in}
	\\
	\subfloat[Snow.]{\includegraphics[width=0.195\textwidth]{./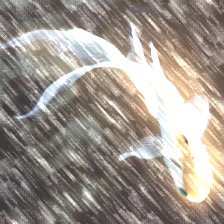}}
	\subfloat[Frost.]{\includegraphics[width=0.195\textwidth]{./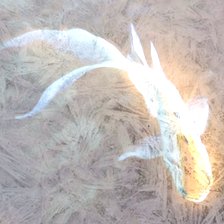}}
	\subfloat[Fog.]{\includegraphics[width=0.195\textwidth]{./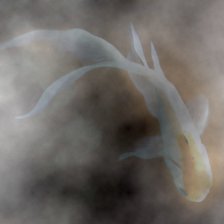}}
	\subfloat[Bright.]{\includegraphics[width=0.195\textwidth]{./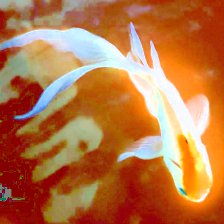}}
	\vspace{-0.1in}
	\\
	\subfloat[Contrast.]{\includegraphics[width=0.195\textwidth]{./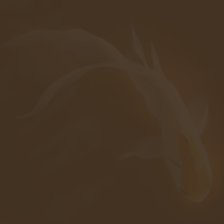}}
	\subfloat[Elastic.]{\includegraphics[width=0.195\textwidth]{./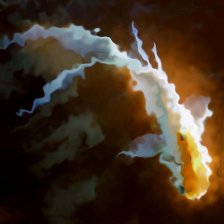}}
	\subfloat[pixel.]{\includegraphics[width=0.195\textwidth]{./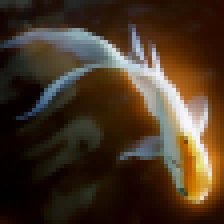}}
	\subfloat[JPEG.]{\includegraphics[width=0.195\textwidth]{./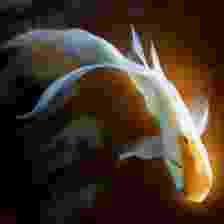}}
	\vspace{-0.1in}

	\caption{
		15 types of artificially constructed corruptions from four categories: Noise, Blur, Weather, and Digital from the \texttt{ImageNet-C} dataset \citep{hendrycks2018benchmarking}. 
		Each corruption has five levels of severity with figures under severity 5 are shown here.}
	\label{fig:imagenet-c}
\end{figure*}

\begin{figure*}[htbp]\centering
	\subfloat{\includegraphics[width=0.33\textwidth]{./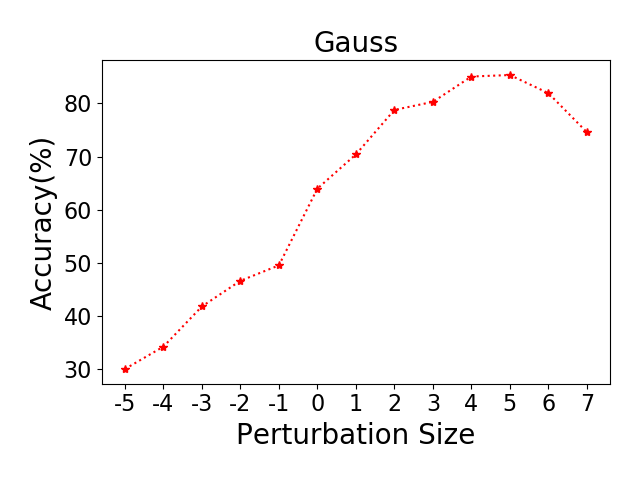}}
	\subfloat{\includegraphics[width=0.33\textwidth]{./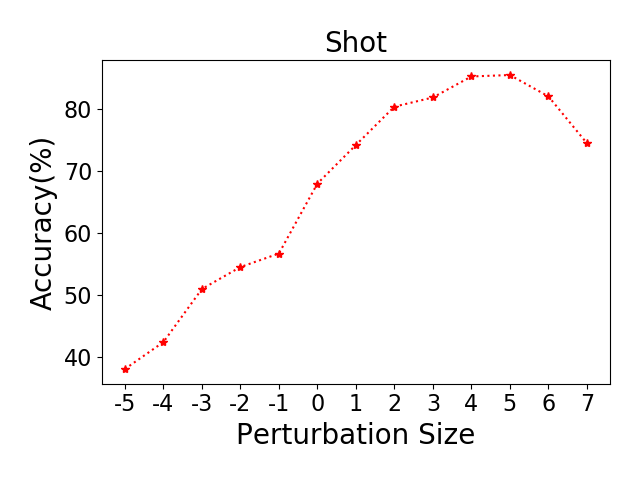}}
	\subfloat{\includegraphics[width=0.33\textwidth]{./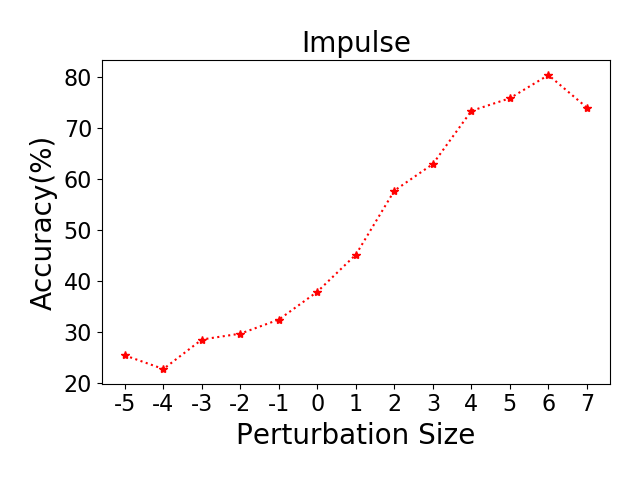}} 
	\vspace{-0.2in}
	\\
	\subfloat{\includegraphics[width=0.33\textwidth]{./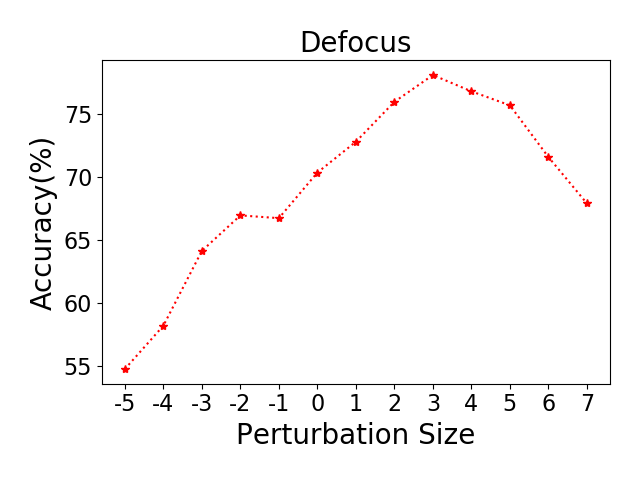}}
	\subfloat{\includegraphics[width=0.33\textwidth]{./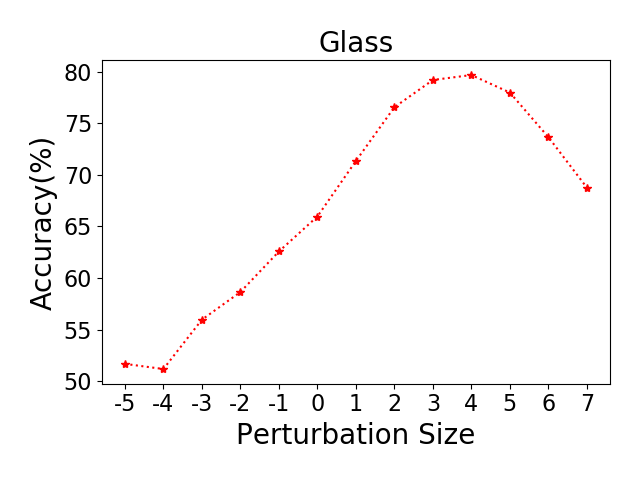}}
	\subfloat{\includegraphics[width=0.33\textwidth]{./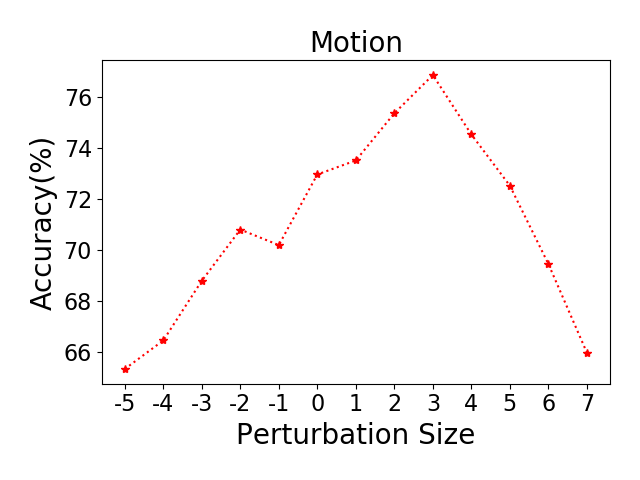}}
	\\
	\vspace{-0.2in}
	\subfloat{\includegraphics[width=0.33\textwidth]{./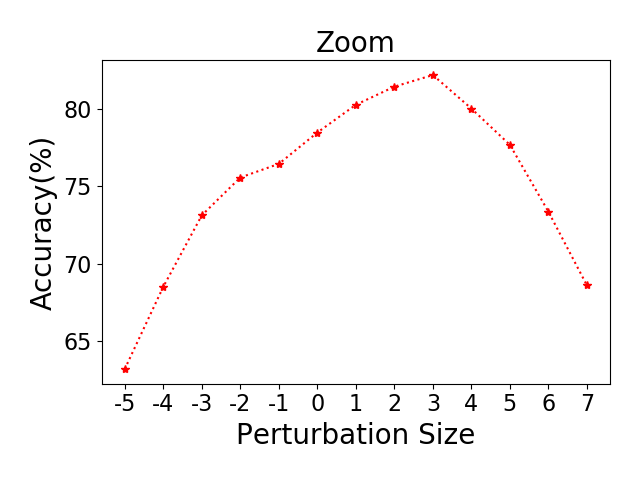}}
	\subfloat{\includegraphics[width=0.33\textwidth]{./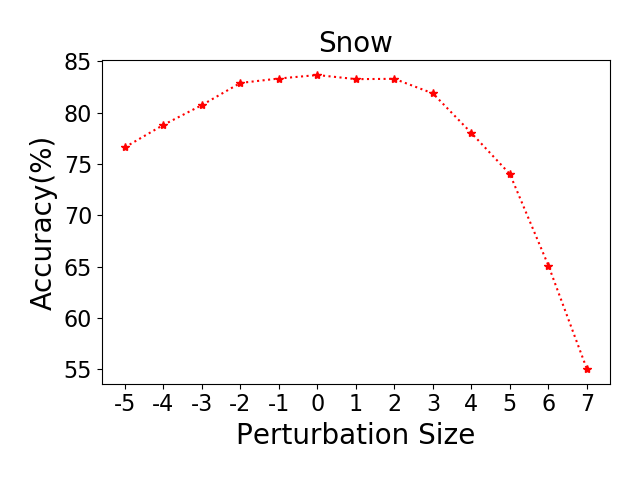}}
	\subfloat{\includegraphics[width=0.33\textwidth]{./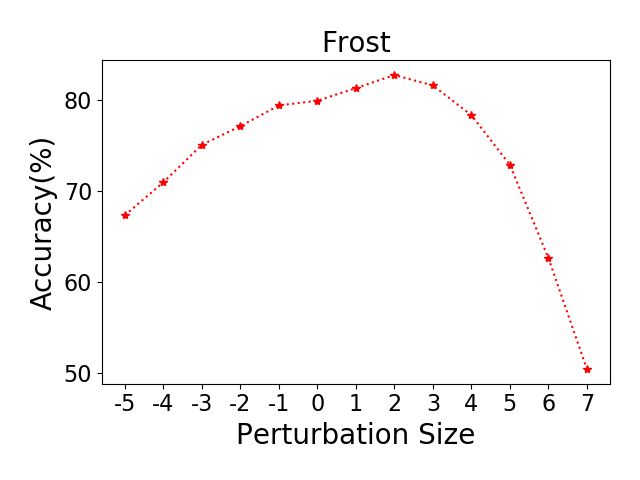}}
	\\
	\vspace{-0.2in}
	\subfloat{\includegraphics[width=0.33\textwidth]{./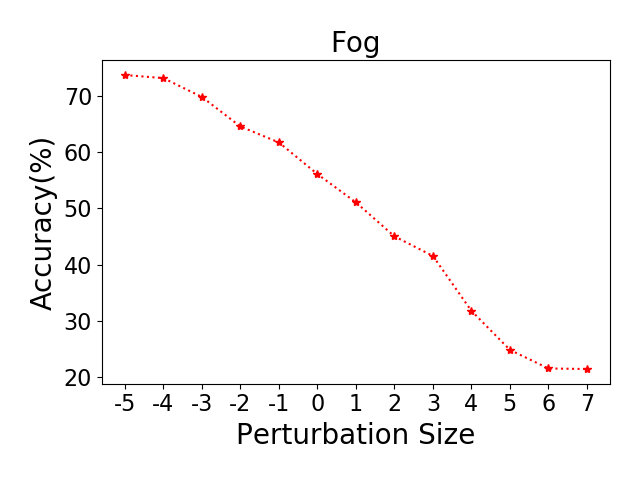}}
	\subfloat{\includegraphics[width=0.33\textwidth]{./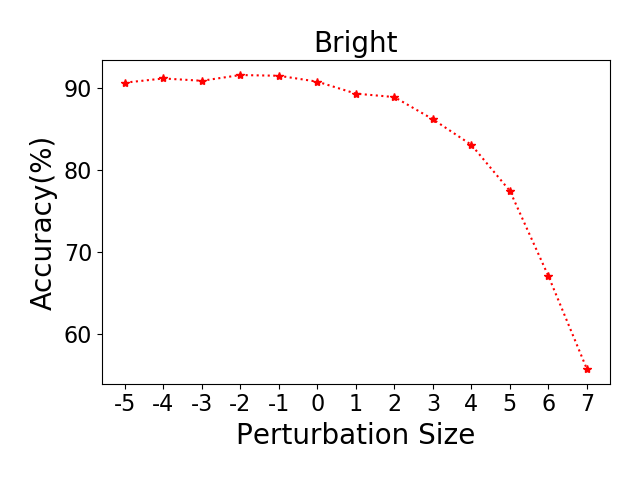}}
	\subfloat{\includegraphics[width=0.33\textwidth]{./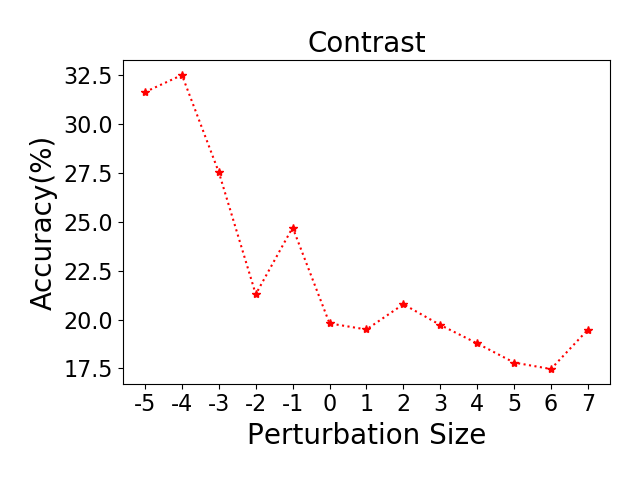}}
	\\
	\vspace{-0.2in}
	\subfloat{\includegraphics[width=0.33\textwidth]{./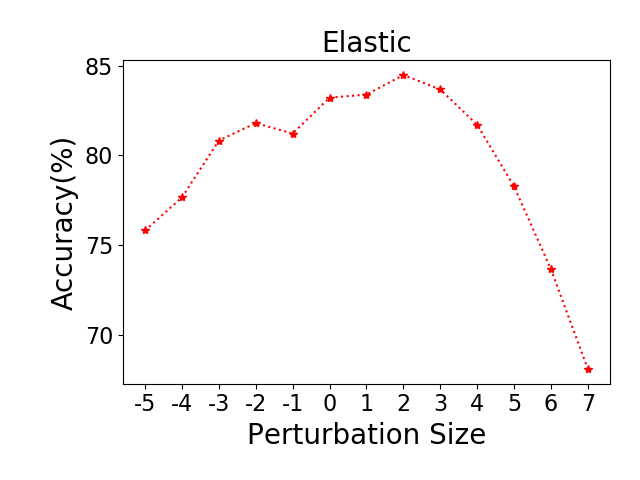}}
	\subfloat{\includegraphics[width=0.33\textwidth]{./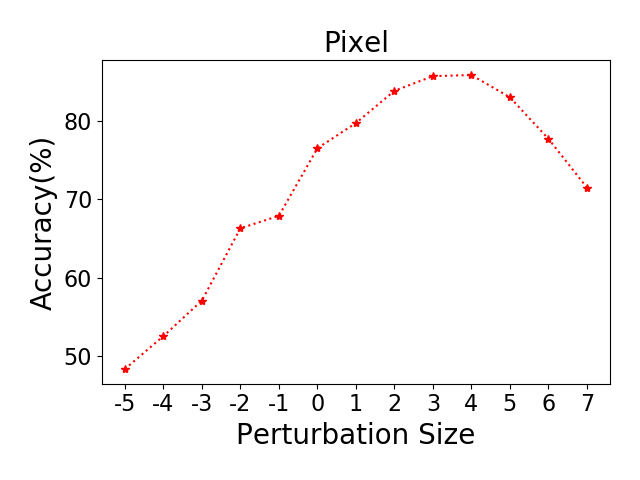}}
	\subfloat{\includegraphics[width=0.33\textwidth]{./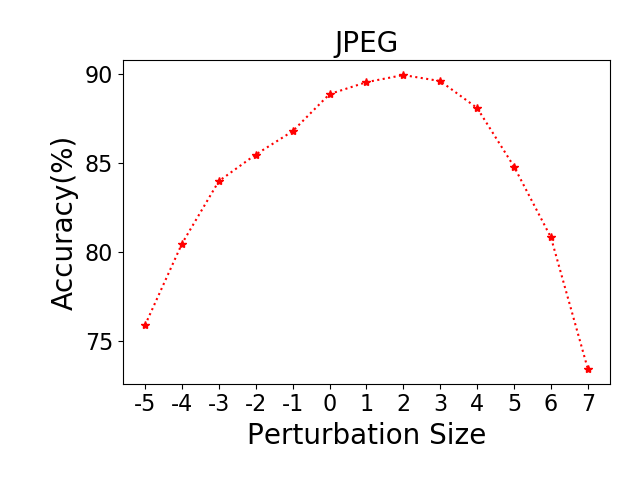}}

	\caption{Accuracy of Adv-$\ell_{2}$ on \texttt{CIFAR10-C} over various perturbation sizes. The $x$-axis means the perturbation size is $2^{x}/12$.}
	\label{fig:adv_l2_r}
\end{figure*}

\begin{figure*}[htbp]\centering
	\subfloat{\includegraphics[width=0.33\textwidth]{./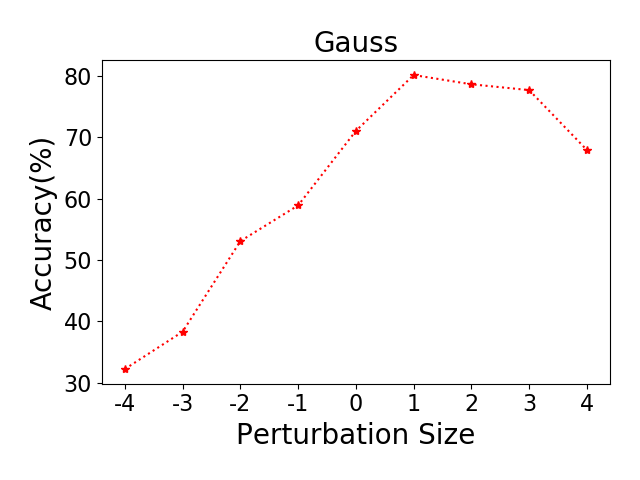}}
	\subfloat{\includegraphics[width=0.33\textwidth]{./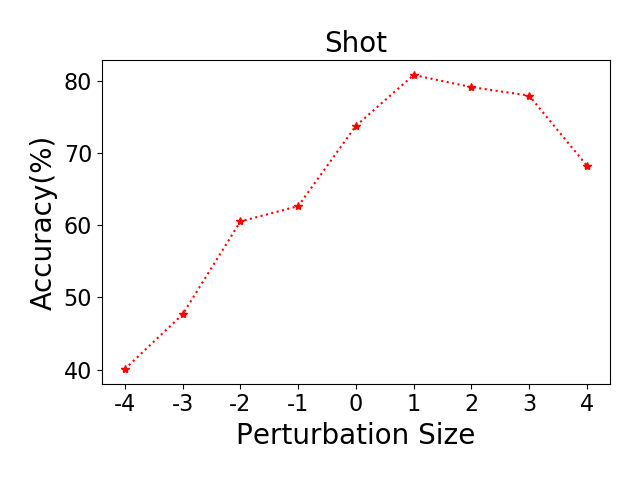}}
	\subfloat{\includegraphics[width=0.33\textwidth]{./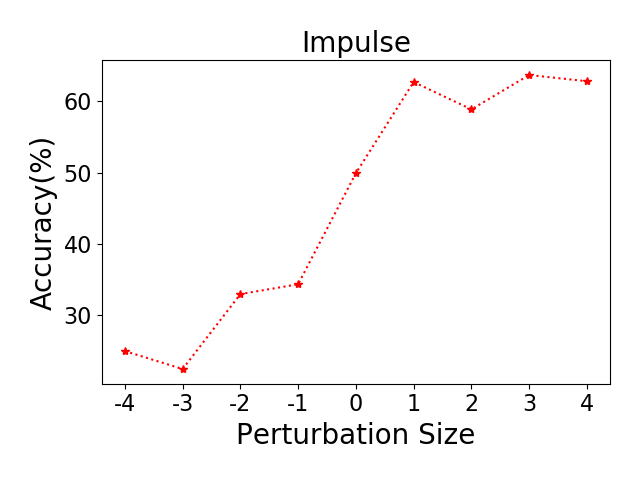}}
	\vspace{-0.2in}
	\\
	\subfloat{\includegraphics[width=0.33\textwidth]{./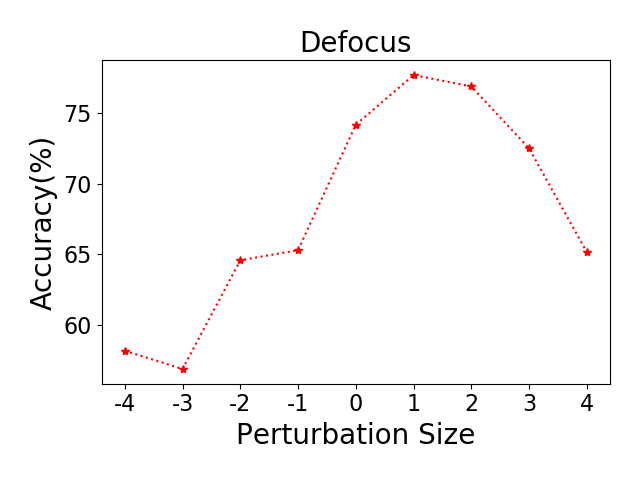}}
	\subfloat{\includegraphics[width=0.33\textwidth]{./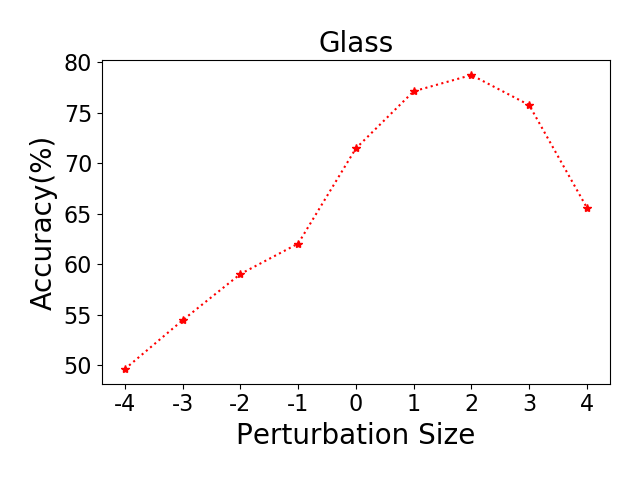}}
	\subfloat{\includegraphics[width=0.33\textwidth]{./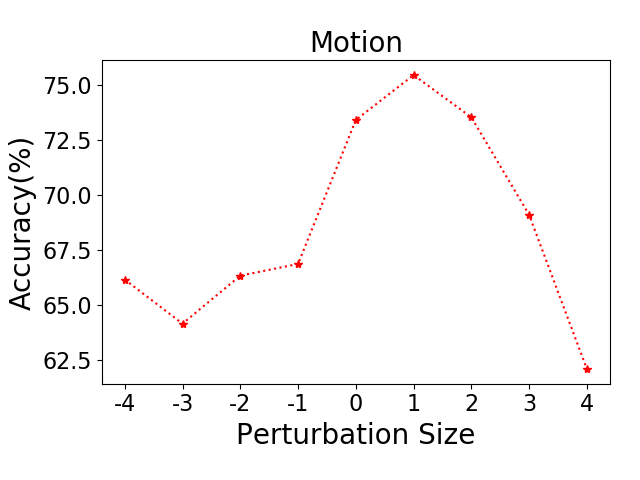}}
	\vspace{-0.2in}
	\\
	\subfloat{\includegraphics[width=0.33\textwidth]{./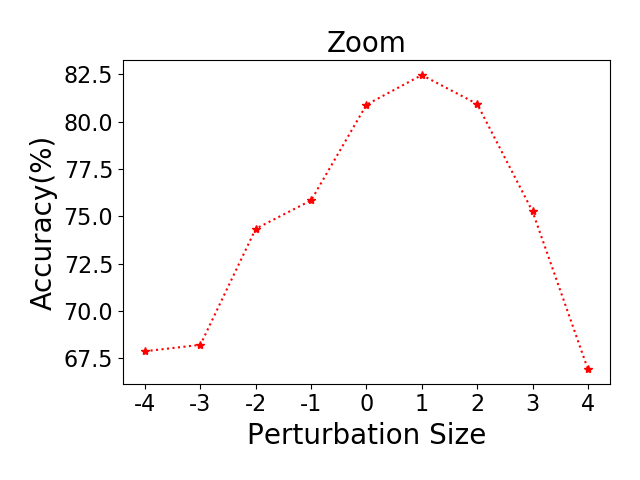}}
	\subfloat{\includegraphics[width=0.33\textwidth]{./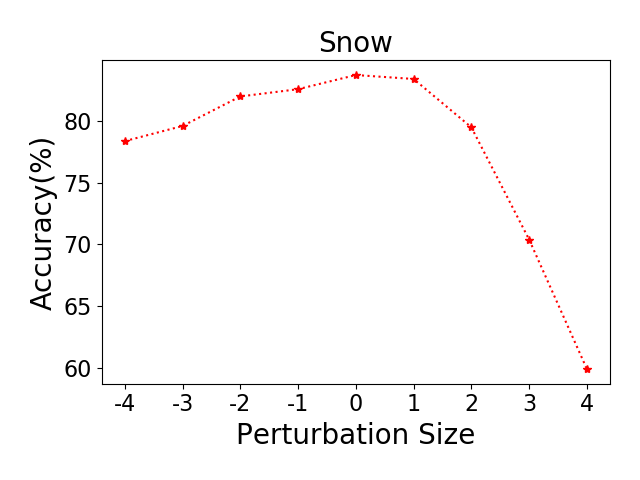}}
	\subfloat{\includegraphics[width=0.33\textwidth]{./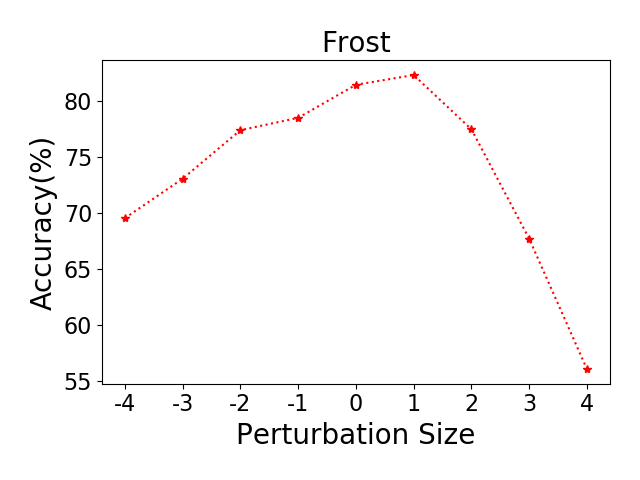}}
	\vspace{-0.2in}
	\\
	\subfloat{\includegraphics[width=0.33\textwidth]{./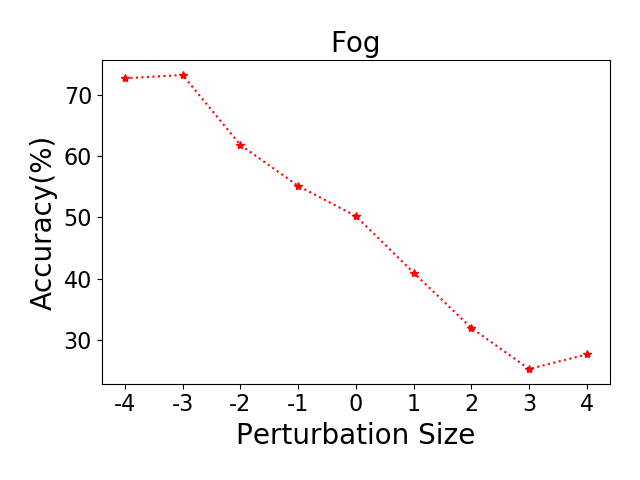}}
	\subfloat{\includegraphics[width=0.33\textwidth]{./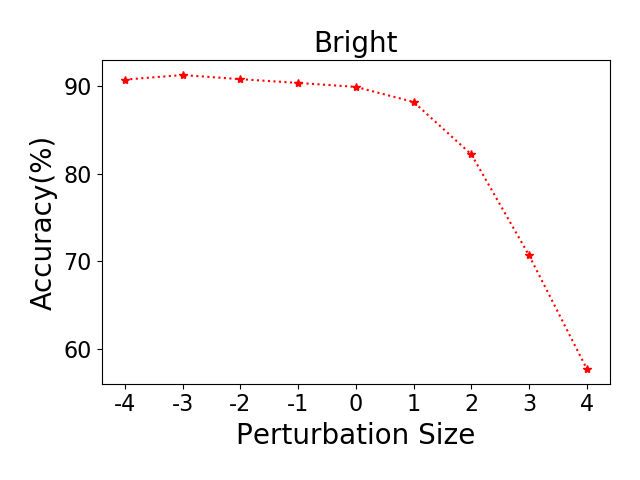}}
	\subfloat{\includegraphics[width=0.33\textwidth]{./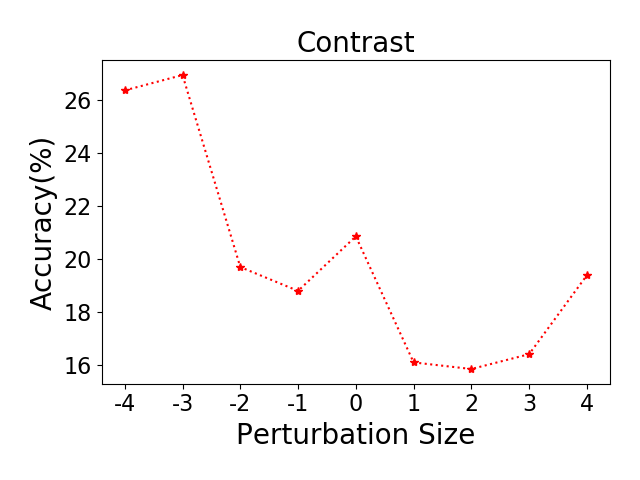}}
	\vspace{-0.2in}
	\\
	\subfloat{\includegraphics[width=0.33\textwidth]{./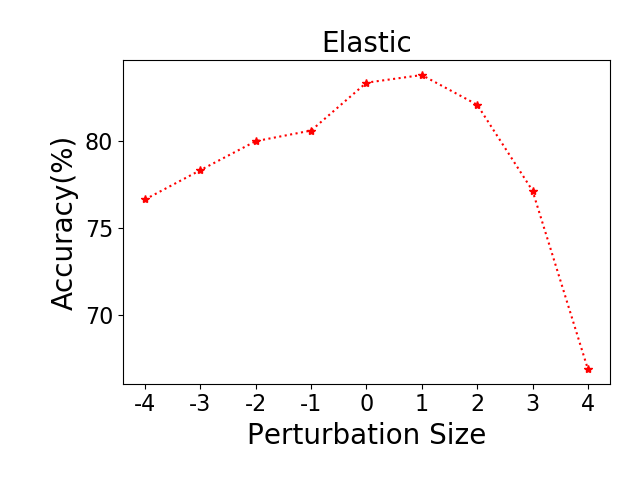}}
	\subfloat{\includegraphics[width=0.33\textwidth]{./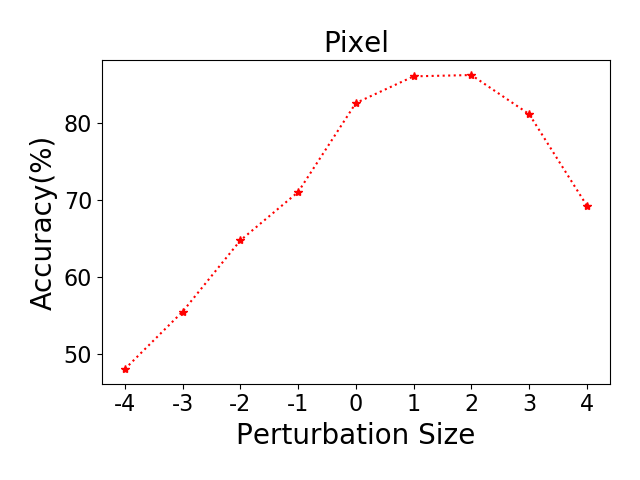}}
	\subfloat{\includegraphics[width=0.33\textwidth]{./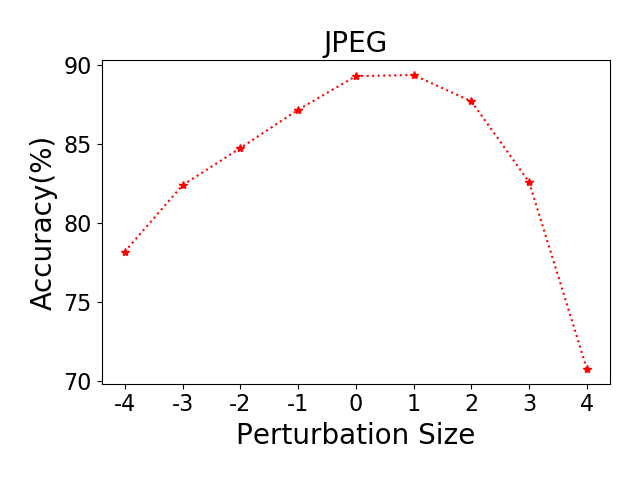}}
	\vspace{-0.2in}
	\caption{Accuracy of Adv-$\ell_{\infty}$ on \texttt{CIFAR10-C} over various perturbation sizes. The $x$-axis means the perturbation size is $2^{x}/255$.}
	\label{fig:adv_linf_r}
\end{figure*}

\begin{figure*}[htbp]\centering
	\subfloat{\includegraphics[width=0.33\textwidth]{./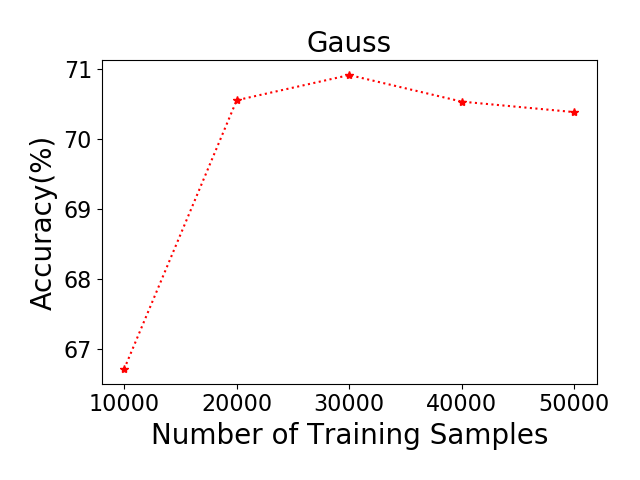}}
	\subfloat{\includegraphics[width=0.33\textwidth]{./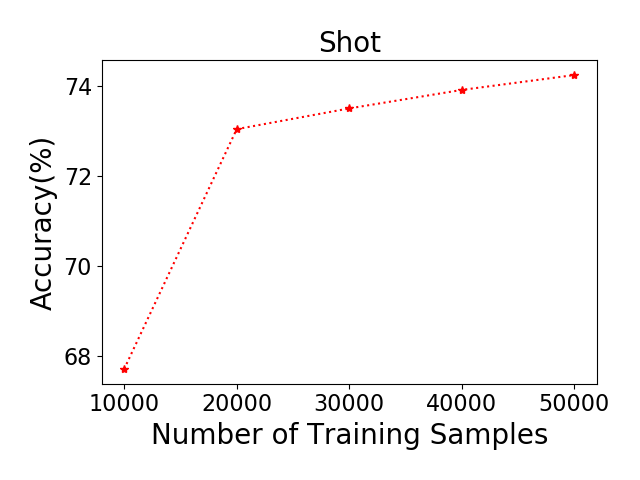}}
	\subfloat{\includegraphics[width=0.33\textwidth]{./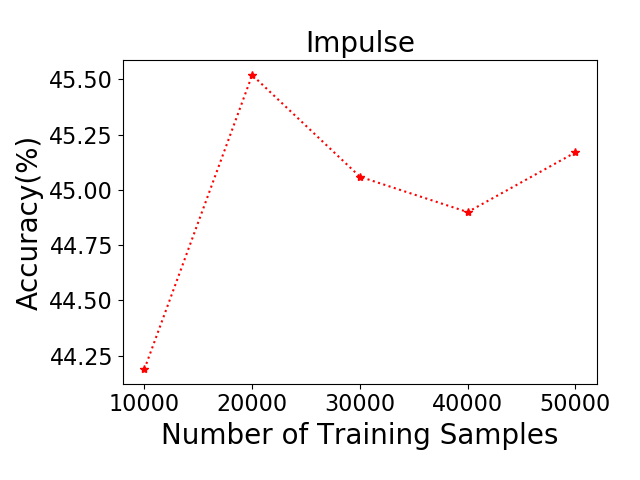}}
	\vspace{-0.2in}
	\\
	\subfloat{\includegraphics[width=0.33\textwidth]{./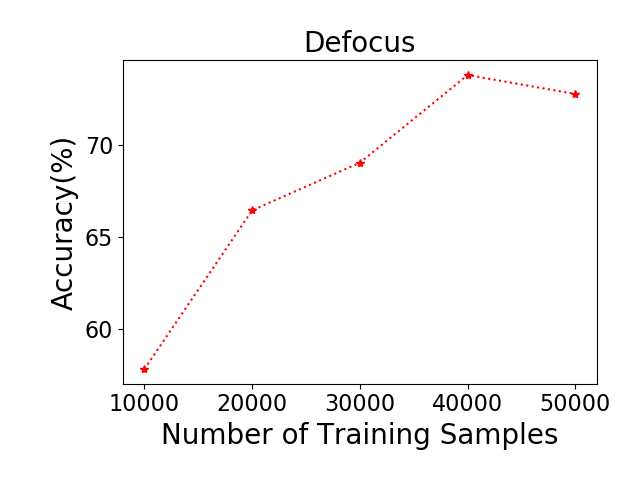}}
	\subfloat{\includegraphics[width=0.33\textwidth]{./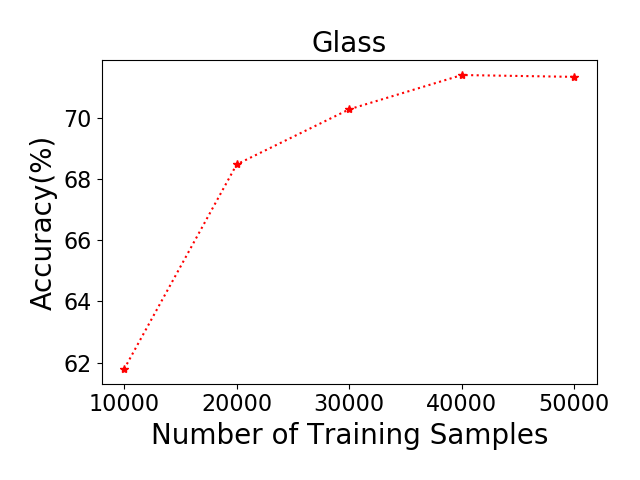}}
	\subfloat{\includegraphics[width=0.33\textwidth]{./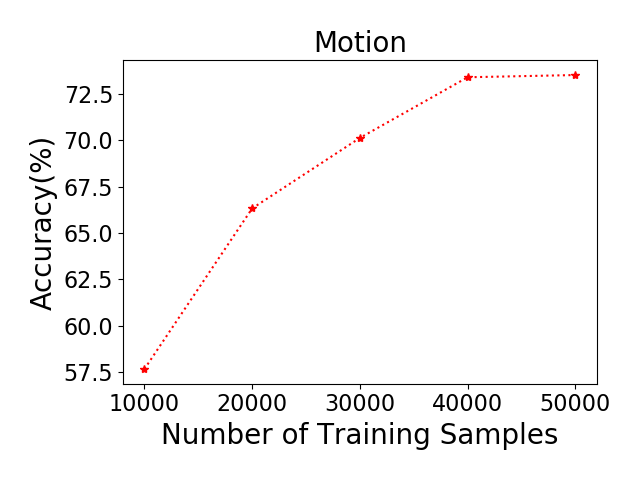}}
	\vspace{-0.2in}
	\\
	\subfloat{\includegraphics[width=0.33\textwidth]{./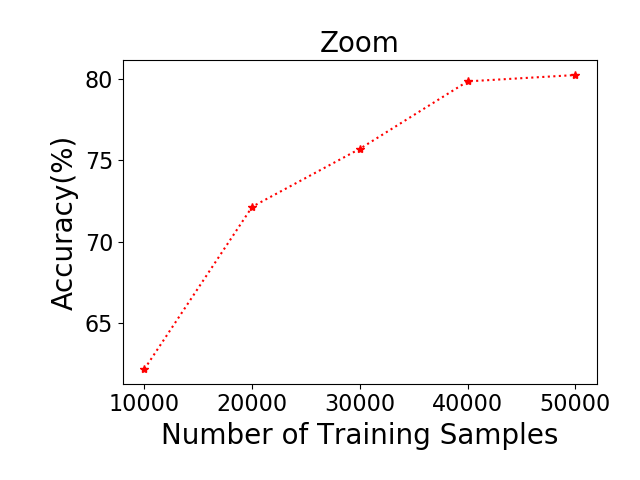}}
	\subfloat{\includegraphics[width=0.33\textwidth]{./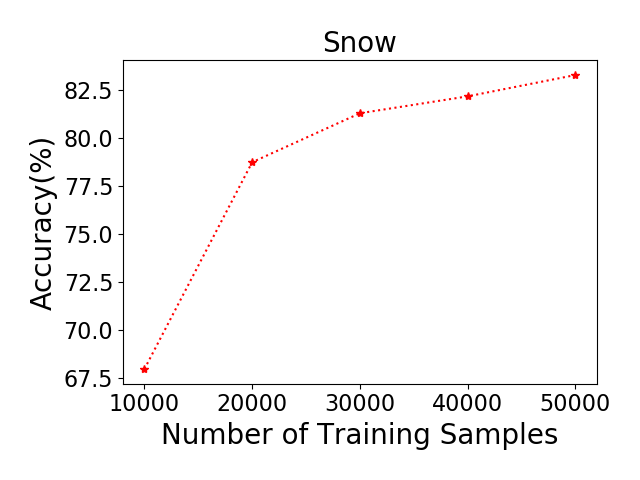}}
	\subfloat{\includegraphics[width=0.33\textwidth]{./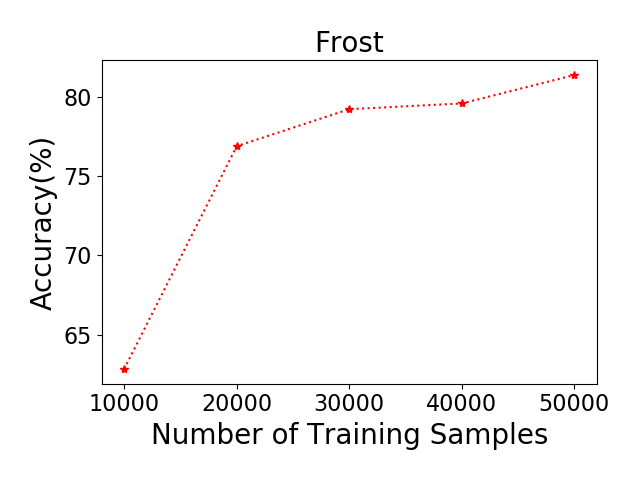}}
	\vspace{-0.2in}
	\\
	\subfloat{\includegraphics[width=0.33\textwidth]{./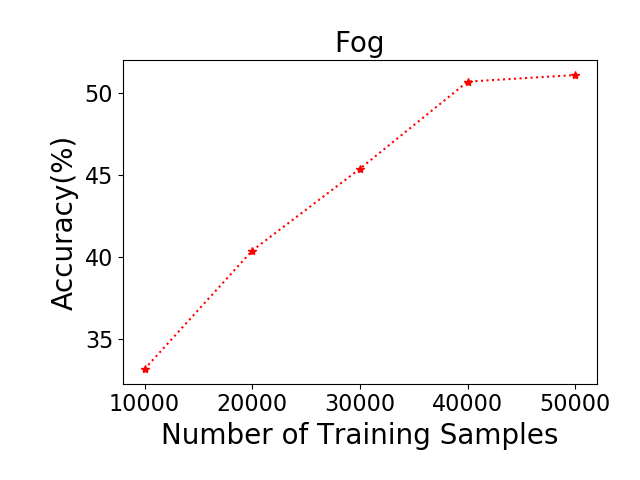}}
	\subfloat{\includegraphics[width=0.33\textwidth]{./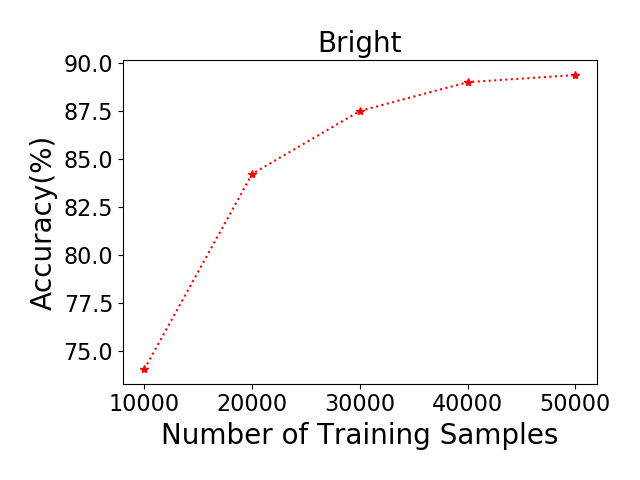}}
	\subfloat{\includegraphics[width=0.33\textwidth]{./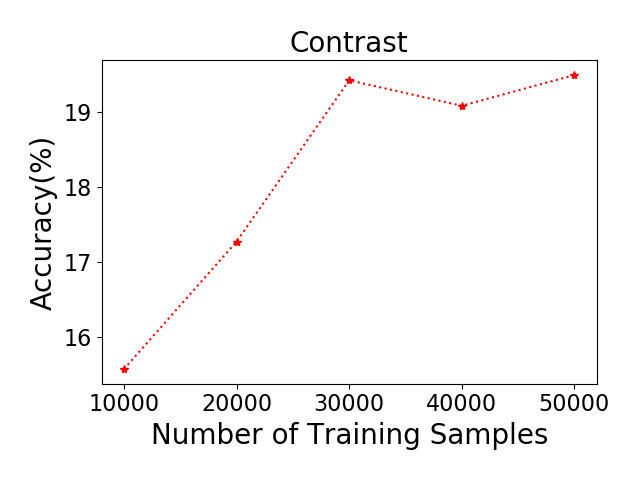}}
	\vspace{-0.2in}
	\\
	\subfloat{\includegraphics[width=0.33\textwidth]{./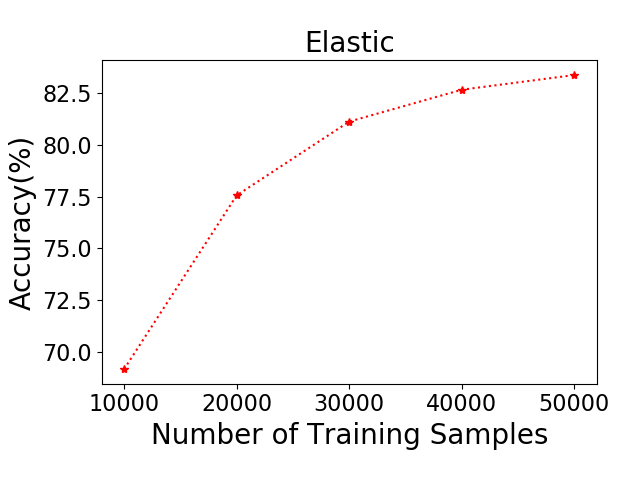}}
	\subfloat{\includegraphics[width=0.33\textwidth]{./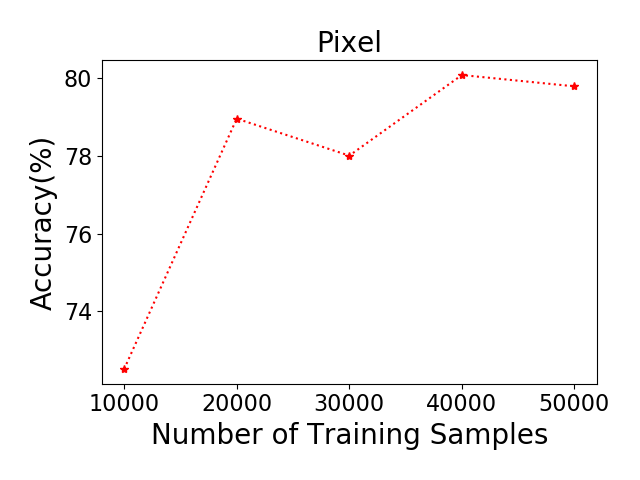}}
	\subfloat{\includegraphics[width=0.33\textwidth]{./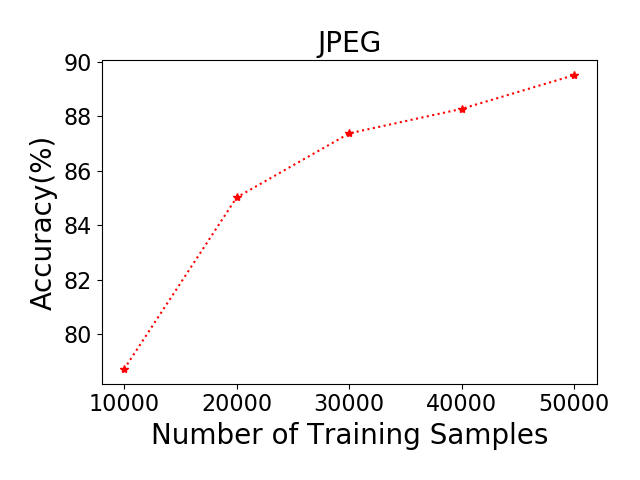}}

	\caption{Accuracy of Adv-$\ell_{2}$ on \texttt{CIFAR10-C} over various numbers of training samples.}
	\label{fig:adv_l2_num}
\end{figure*}

\begin{figure*}[htbp]\centering
	\subfloat{\includegraphics[width=0.33\textwidth]{./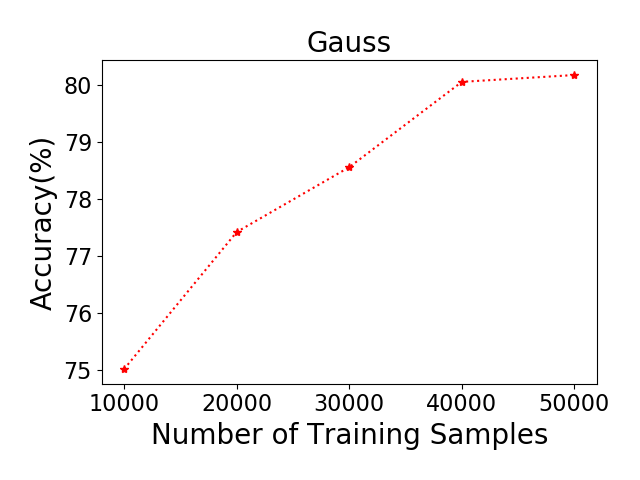}}
	\subfloat{\includegraphics[width=0.33\textwidth]{./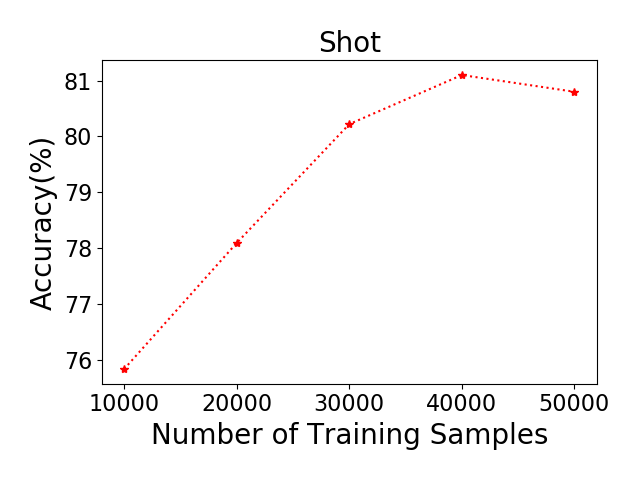}}
	\subfloat{\includegraphics[width=0.33\textwidth]{./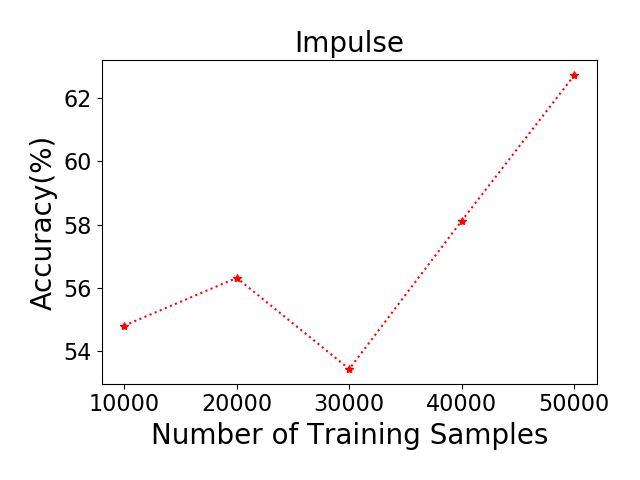}}
	\vspace{-0.2in}
	\\
	\subfloat{\includegraphics[width=0.33\textwidth]{./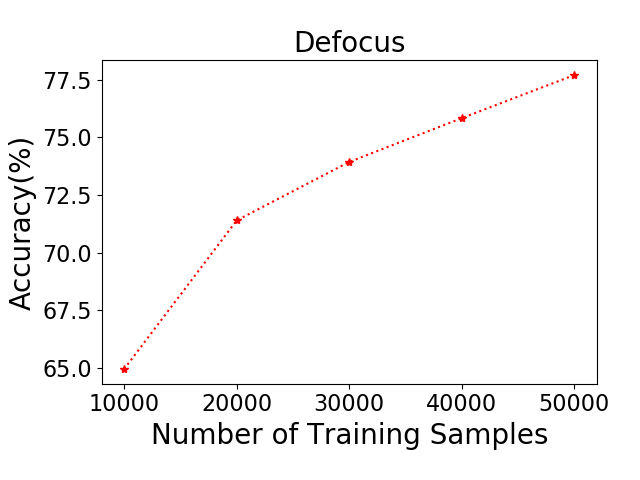}}
	\subfloat{\includegraphics[width=0.33\textwidth]{./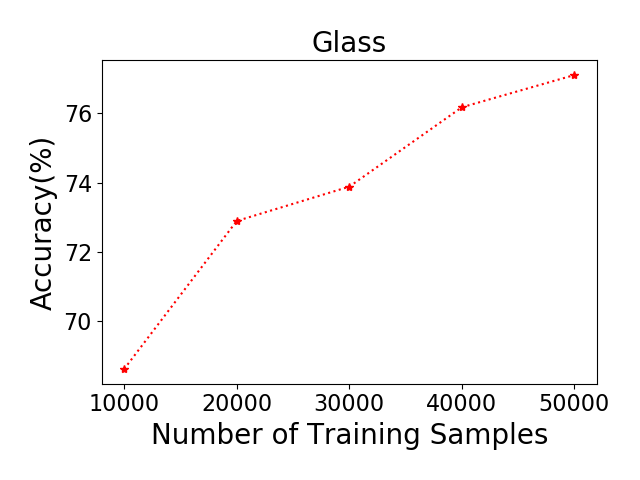}}
	\subfloat{\includegraphics[width=0.33\textwidth]{./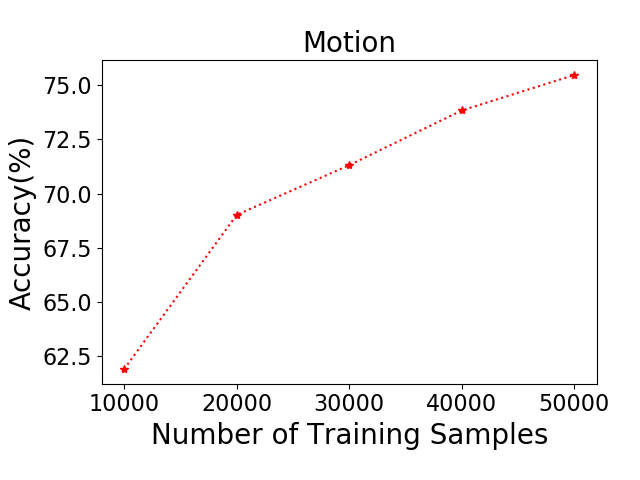}}
	\vspace{-0.2in}
	\\
	\subfloat{\includegraphics[width=0.33\textwidth]{./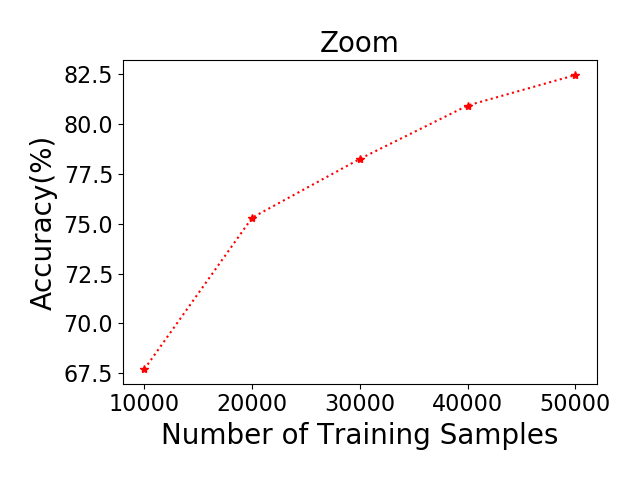}}
	\subfloat{\includegraphics[width=0.33\textwidth]{./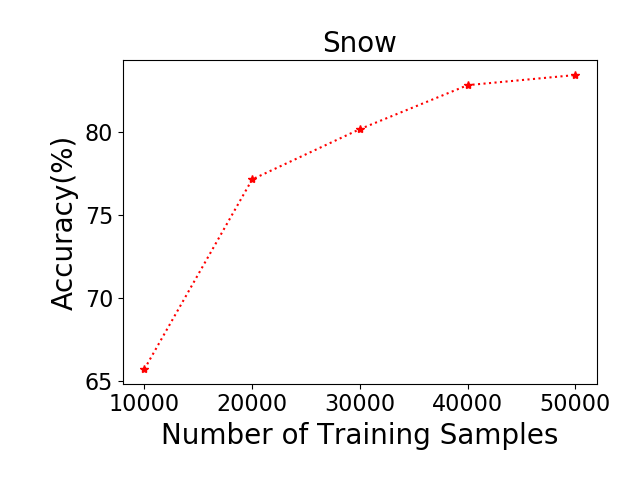}}
	\subfloat{\includegraphics[width=0.33\textwidth]{./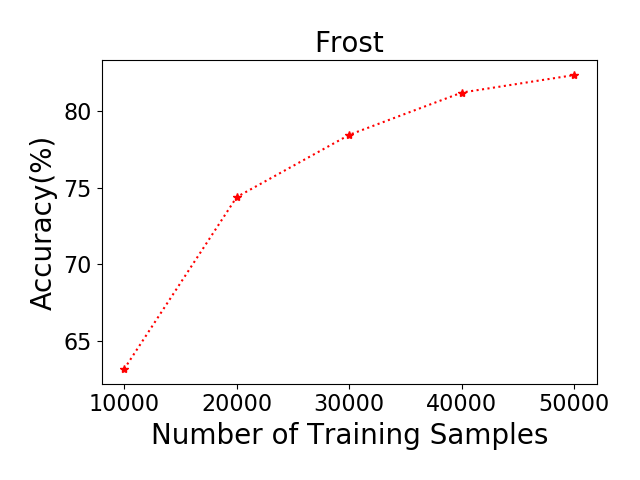}}
	\vspace{-0.2in}
	\\
	\subfloat{\includegraphics[width=0.33\textwidth]{./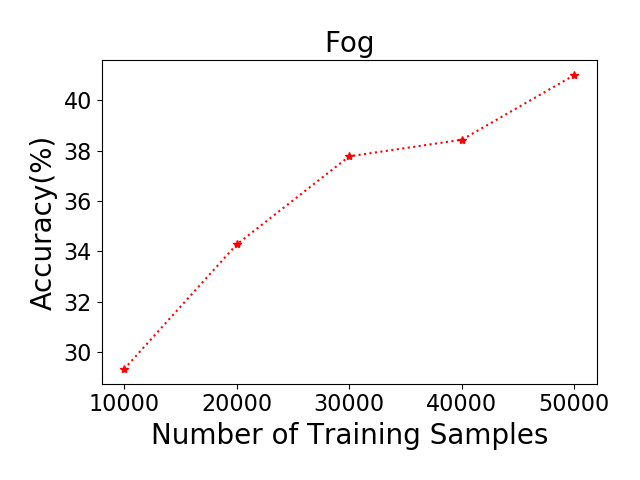}}
	\subfloat{\includegraphics[width=0.33\textwidth]{./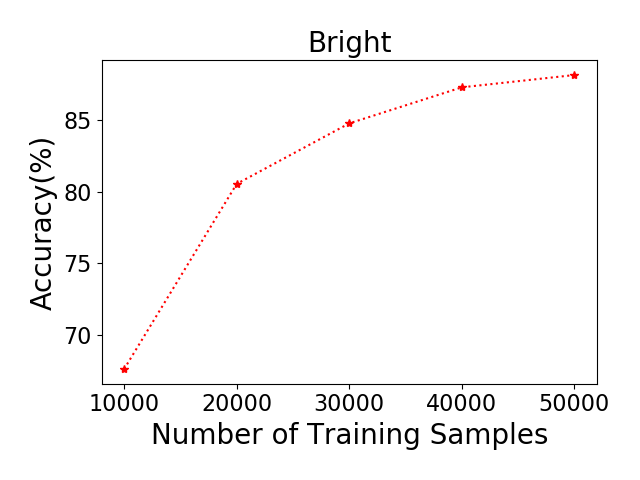}}
	\subfloat{\includegraphics[width=0.33\textwidth]{./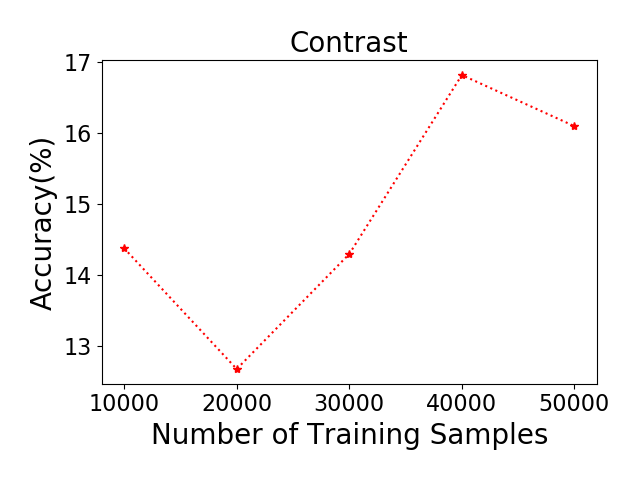}}
	\vspace{-0.2in}
	\\
	\subfloat{\includegraphics[width=0.33\textwidth]{./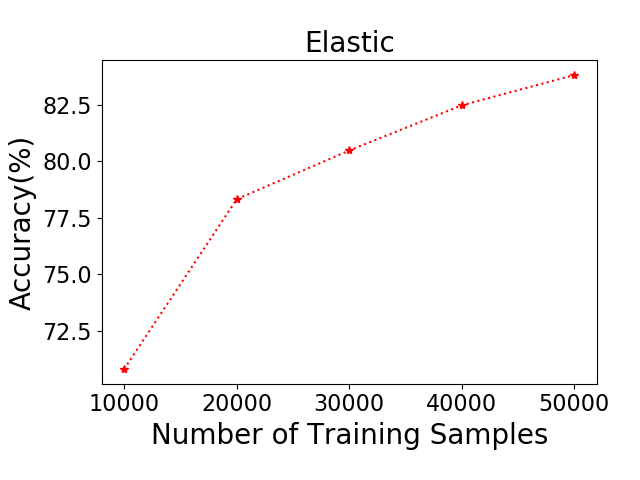}}
	\subfloat{\includegraphics[width=0.33\textwidth]{./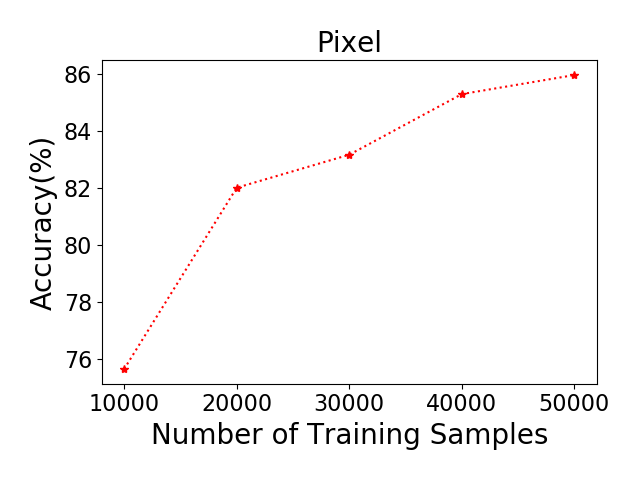}}
	\subfloat{\includegraphics[width=0.33\textwidth]{./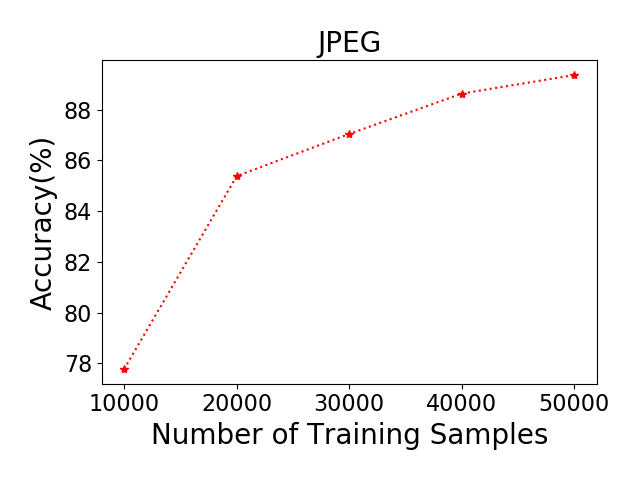}}
	\caption{Accuracy of Adv-$\ell_{\infty}$ on \texttt{CIFAR10-C} over various numbers of training samples.}
	\label{fig:adv_linf_num}
\end{figure*}




\end{document}